\documentclass{article}

\usepackage{microtype}
\usepackage{graphicx}
\usepackage{subcaption}
\usepackage{booktabs} %

\usepackage[utf8]{inputenc}
\usepackage[T1]{fontenc} 
\usepackage{lmodern}
\usepackage{url}          
\usepackage{amsfonts}       
\usepackage{nicefrac}    
\usepackage[dvipsnames]{xcolor}   
\usepackage[backref=page]{hyperref}
\usepackage{wrapfig,lipsum,booktabs}
\usepackage{booktabs}
\usepackage{multirow}
\usepackage{url}
\usepackage{tikz}
\usetikzlibrary{arrows}
\usepackage{paralist}
\usepackage[stable]{footmisc}
\usepackage[textwidth=0.7in, textsize=tiny]{todonotes}
\usepackage{ifthen}
\usepackage{rotating}
\usepackage{dirtytalk}
\usepackage{forest}
\usepackage{xparse}
\usepackage[normalem]{ulem}
\usepackage{bbm}
\usepackage{natbib}

\usepackage{enumitem}
\setlist[enumerate]{itemsep=0.2ex, topsep=0.15\topsep}
\setlist[description]{itemsep=0.2ex, topsep=0.15\topsep}
\setlist[itemize]{itemsep=0.2ex, topsep=0.15\topsep}
\usepackage{adjustbox}
\usepackage{diagbox}

\usepackage{bm}
\usepackage{amsmath} 
\usepackage{amsthm}
\usepackage{amsfonts}
\usepackage{amssymb}
\usepackage{mathtools}
\usepackage{fixmath}
\usepackage{siunitx}
\usepackage{xfrac}

\usepackage{arydshln}

\usepackage{thmtools}		
\usepackage{mleftright}
\usepackage{stmaryrd}
\usepackage{nicefrac}
\usepackage{fontawesome5}
\usepackage{graphicx}
\usepackage{algorithmic}

\usepackage{hyperref}

\usepackage[accepted]{icml2026}

\usepackage[capitalize,noabbrev]{cleveref}

\theoremstyle{plain}
\newtheorem{theorem}{Theorem}[section]
\newtheorem{proposition}[theorem]{Proposition}
\newtheorem{lemma}[theorem]{Lemma}
\newtheorem{corollary}[theorem]{Corollary}
\theoremstyle{definition}

\theoremstyle{remark}
\newtheorem{remark}[theorem]{Remark}
\usepackage{appendix}
\usepackage{caption}

\usepackage[textsize=tiny]{todonotes}

\newcommand{\cO}{\mathcal{O}}

\newcommand{\Nb}{\mathbb{N}}

\newcommand{\Rb}{\mathbb{R}}

\newcommand{\hb}{\mathbold{h}}

\newcommand{\UPD}{\mathsf{UPD}}
\newcommand{\AGG}{\mathsf{AGG}}
\newcommand{\RO}{\mathsf{READOUT}}

\newcommand{\bbN}{\ensuremath{\mathbb{N}}}

\newcommand{\new}[1]{\emph{#1}}

\newcommand{\trans}{^\intercal}
\renewcommand{\vec}[1]{\mathbold{#1}}

\newcommand{\xhdr}[1]{{\noindent\bfseries #1}}

\usepackage{bm}

\DeclareMathOperator*{\argmin}{arg\,min}

\DeclareMathOperator{\Tr}{Tr}

\definecolor{cb-black}      {RGB}{  0,   0,   0}
\definecolor{cb-blue-green} {RGB}{  0,  073,  073}
\definecolor{cb-rose}       {RGB}{255, 109, 182}
\definecolor{cb-salmon-pink}{RGB}{255, 182, 119}
\definecolor{cb-purple}     {RGB}{ 73,   0, 146}
\definecolor{cb-blue}       {RGB}{ 0, 109, 219}
\definecolor{cb-lilac}      {RGB}{182, 109, 255}
\definecolor{cb-blue-sky}   {RGB}{109, 182, 255}
\definecolor{cb-blue-light} {RGB}{182, 219, 255}
\definecolor{cb-burgundy}   {RGB}{146,   0,   0}
\definecolor{cb-brown}      {RGB}{146,  73,   0}
\definecolor{cb-clay}       {RGB}{219, 209,   0}
\definecolor{cb-green-lime} {RGB}{ 36, 255,  36}
\definecolor{cb-yellow}     {RGB}{255, 255, 109}

\definecolor{cb-green-sea}{HTML}{0173B2}
\definecolor{cb-burgundy}{HTML}{029E73}
\definecolor{cb-lilac}{HTML}{D55E00}

\definecolor{first}{HTML}{029E73}
\definecolor{second}{HTML}{0173B2}
\definecolor{third}{HTML}{D55E00}
\definecolor{fourth}{HTML}{8000ff}

\definecolor{sns_cb1}{HTML}{029E73}
\definecolor{sns_cb2}{HTML}{0173B2}
\definecolor{sns_cb3}{HTML}{D55E00}
\definecolor{sns_cb4}{HTML}{CC78BC}
\definecolor{sns_cb5}{HTML}{ECE133}
\definecolor{sns_cb6}{HTML}{56B4E9}

\newcommand{\WLk}[1]{#1\text{-}\mathsf{WL}}
\newcommand{\FWLk}[1]{#1\text{-}\mathsf{FWL}}
\newcommand{\vcwl}{\mathsf{VC}\text{-}\mathsf{WL}}
\newcommand{\vctwl}{\mathsf{VC}\text{-2-}\mathsf{WL}}

\newcommand{\vctfwl}{\mathsf{VC}\text{-2-}\mathsf{FWL}}
\newcommand{\vctfwlpp}{\mathsf{VC}\text{-2-}\mathsf{FWL+}}
\newcommand{\vcmpnn}{\mathsf{VC}\text{-}\mathsf{MPNN}}
\newcommand{\vctmpnn}{\mathsf{VC}\text{-2-}\mathsf{MPNN}}
\newcommand{\vctfmpnn}{\mathsf{VC}\text{-2-}\mathsf{FMPNN}}
\newcommand{\deltatmpnn}{\delta \text{-} \mathsf{VC} \text{-2-}\mathsf{MPNN}}

\newcommand{\twoign}{\text{2-}\mathsf{IGN}}
\newcommand{\vcign}{\mathsf{VC}\text{-2-}\mathsf{IGN}}
\newcommand{\vcet}{\mathsf{VC}\text{-}\mathsf{ET}}
\newcommand{\vcignwl}{\mathsf{VC}\text{-2-}\mathsf{IGNWL}}

\newcommand{\SnnP}{\mathbb{S}^n_{+}}
\newcommand{\ip}[2]{\left\langle #1,\,#2\right\rangle}
\newcommand{\Fnorm}[1]{\left\lVert #1 \right\rVert_F}
\newcommand{\Proj}{\mathrm{Proj}}
\newcommand{\Acal}{\mathcal{A}}
\newcommand{\Acalst}{\mathcal{A}^{\!*}}

\NewDocumentCommand{\dgal}{sO{}m}{%
  \IfBooleanTF{#1}
    {\dgalext{#3}}
    {\dgalx[#2]{#3}}%
}

\newcommand{\tens}[1]{\boldsymbol{\mathsf{#1}}}

\NewDocumentCommand{\dgalx}{om}{%
  \sbox0{\mathsurround=0pt$#1\{$}%
  \sbox2{\{}%
  \ifdim\ht0=\ht2
    \{\kern-.45\wd2 \{#2\}\kern-.45\wd2 \}%
  \else
    \mathopen{#1\{\kern-.5\wd0 #1\{}
    #2
    \mathclose{#1\}\kern-.5\wd0 #1\}}
  \fi
}

\icmltitlerunning{Submission and Formatting Instructions for ICML 2026}

\icmltitlerunning{On the Expressive Power of GNNs to Solve Linear SDPs}

\begin{document}

\twocolumn[
  \icmltitle{On the Expressive Power of GNNs to Solve Linear SDPs}

  \icmlsetsymbol{equal}{*}

  \begin{icmlauthorlist}
    \icmlauthor{Chendi Qian}{yyy}
    \icmlauthor{Christopher Morris}{yyy}
  \end{icmlauthorlist}

  \icmlaffiliation{yyy}{Faculty of Computer Science, RWTH Aachen University, Germany}

  \icmlcorrespondingauthor{Chendi Qian}{chendi.qian@log.rwth-aachen.de}

  \icmlkeywords{Machine Learning, ICML}

  \vskip 0.3in
]

\printAffiliationsAndNotice{}  %

\begin{abstract}
Semidefinite programs (SDPs) are a powerful framework for convex optimization and for constructing strong relaxations of hard combinatorial problems. However, solving large SDPs can be computationally expensive, motivating the use of machine learning models as fast computational surrogates. Graph neural networks (GNNs) are a natural candidate in this setting due to their sparsity-awareness and ability to model variable-constraint interactions. In this work, we study what expressive power is sufficient to recover optimal SDP solutions. We first prove negative results showing that standard GNN architectures fail on recovering linear SDP solutions. We then identify a more expressive architecture that captures the key structure of SDPs and can, in particular, emulate the updates of a standard first-order solver. Empirically, on both synthetic and \textsc{SdpLib} benchmarks of various classes of SDPs, this more expressive architecture achieves consistently lower prediction error and objective gap than theoretically weaker baselines. Finally, using the learned high-quality predictions to warm-start the first-order solver yields practical speedups of up to $80\%$.
\end{abstract}

\section{Introduction}

\new{Semidefinite programming} (SDP) is a cornerstone of modern optimization, serving as a fundamental tool in both combinatorial and convex optimization~\citep{boyd2004convex,nocedal1999numerical}. In particular, it provides tight relaxations for \textsf{NP}-hard \new{combinatorial optimization} (CO) problems, such as max-cut~\citep{goemans1995improved}, max clique~\citep{galli2017lovasz}, and graph coloring~\citep{charikar2002semidefinite}; and it subsumes important problem classes including \new{second-order cone programming} (SOCP), \new{quadratically constrained quadratic programming} (QCQP), and \new{linear programming} (LP)~\citep{dattorro2010convex}, see \cref{sec:appli_sdp} for more details. Despite their theoretical utility, solving large-scale SDPs remains computationally prohibitive; state-of-the-art \new{interior point method} (IPM) solvers typically scale super-cubically with the problem size, e.g., $\mathcal{O}(n^{3})$ or even $\mathcal{O}(n^6)$ per-iteration, depending on the concrete implementation ~\citep{helmberg1996interior, vandenberghe1996semidefinite, jiang2020faster}.

To address these scalability challenges, \new{learning-to-optimize} (L2O) has emerged as a promising approach that aims to replace computationally intensive optimization routines with lightweight machine learning proxies. In this context, \new{graph neural networks} (GNNs)~\citep{Sca+2009,Gil+2017}  have shown strong potential, particularly for learning to solve \new{mixed-integer linear programs} (MILPs) by representing them as variable-constraint bipartite graphs~\citep{Gas+2019, chen2023mip}. Recent works have further aligned GNNs with classical algorithms, such as the IPM or \new{primal-dual hybrid gradient} (PDHG) algorithm~\citep{qian2024exploring, li2024pdhg}.

However, extending this success to SDPs poses a fundamental challenge, and there is currently a lack of neural architectures capable of effectively representing even \new{linear SDPs}. Unlike LPs, where variables are vector entries, SDP variables are matrix entries, where the underlying matrix is constrained to the \new{positive semidefinite} (PSD) cone. This introduces unique structural symmetries, such as equivariance under simultaneous row and column permutations, that standard bipartite representations fail to capture. While the expressivity of GNNs and their connection to the Weisfeiler--Leman (\textsf{WL}) hierarchy of graph isomorphism tests~\citep{Cai+1992} are well-studied~\citep{xu2018how, Mor+2019, Morris2020b, Mar+2019}, the specific expressive power required to represent the mapping from a linear SDP instance to its optimal matrix solution remains unclear.

\begin{figure*}[t]
    \centering
    \includegraphics[width=0.85\textwidth]{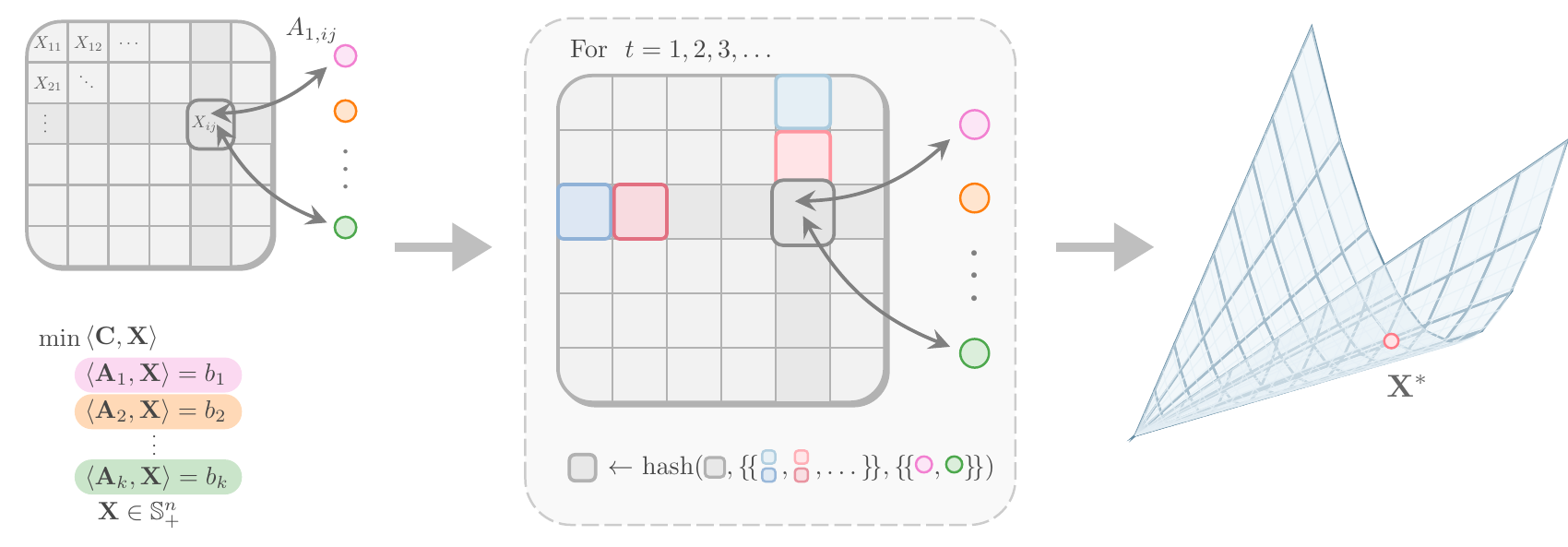}
    \caption{Our framework initializes variable and constraint embeddings directly from the SDP problem data. At each step $t$, the model iteratively updates these representations by aggregating messages from variable and constraint neighbors via permutation-equivariant functions. Finally, the variable embeddings are decoded to predict the optimal solution $\vec{X}^*$.}
    \label{fig:overview}
\end{figure*}

\paragraph{Present work} 
In this work, we bridge GNN expressivity theory and an important class of SDP. That is, we formally characterize the inductive bias needed to represent linear SDPs. Our analysis reveals that $1$- and $2$-$\mathsf{WL}$-like methods, which we term $\vcwl$ and $\vctwl$, are insufficient. In contrast, we show that a \new{folklore $2$-dimensional Weisfeiler--Leman} ($\FWLk{2}$) equivalent architecture, which we denote as $\vctfwl$, is capable of capturing linear SDPs.

Concretely, we contribute the following.

\begin{enumerate}
\item \textbf{Impossibility results} We prove that standard constraint-variable message passing ($\vcwl$) fails to represent SDPs. Furthermore, we show that the higher-order architecture $\vctwl$ is also insufficient to capture SDPs. 

\item \textbf{Theoretical sufficiency for solution} We establish that the expressivity of $\vctfwl$ is sufficient to represent SDP solutions. Specifically, we prove that the algorithmic operations of the PDHG solver \citep{chambolle2016ergodic, wang2024tuningfree} are subsumed by the stable coloring of $\vctfwl$. This implies that $\vctfwl$ possesses the expressivity to simulate the solver's trajectory and approximate the solution.

\item \textbf{Empirical validation} We validate our theory on synthetic and real-world benchmarks, showing that $\vctfwl$-expressive architectures consistently achieve the lowest loss and objective gap. Moreover, we demonstrate that utilizing these high-quality predictions to warm-start a standard solver significantly reduces convergence time.
\end{enumerate}

\emph{In summary, we formally establish the expressivity level required to solve linear SDPs, offering the first rigorous theoretical framework to ground the design of neural SDP solvers. By validating that expressive power directly translates to solution quality, our work establishes a principled blueprint for future research in learning-based optimization.}

\subsection{Related work}

In the following, we discuss related work.

\paragraph{MPNNs and L2O}
\new{Message-passing neural networks} (MPNNs)~\citep{Gil+2017,Sca+2009} have been extensively studied and are broadly categorized into spatial and spectral variants. Spatial MPNNs~\citep{bresson2017residual,Duv+2015,Ham+2017,Vel+2018,xu2018how} follow the message-passing paradigm introduced by \citet{Gil+2017}. MPNNs have shown strong potential in L2O. A widely adopted approach represents MILPs using constraint-variable bipartite graphs~\citep{Gas+2019,ding2020accelerating,chen2022representing,khalil2022mip, qian2024exploring}. Recent work has also aligned MPNNs with various optimization algorithms, including IPMs~\citep{qian2024exploring,qian2025towards}, PDHG \citep{li2024pdhg}, and some distributed algorithms \citep{li2024onsmallgnn,li2025towards}. From a theoretical perspective, several studies have analyzed the expressivity of MPNNs in approximating solutions to LP, QP, and more general SOCP~\citep{chen2022representing, chen2023mip, chen2024rethinking, chen2024qp, qian2024exploring, wu2024representingqcqp, qian2025principled,li2025on}. In contrast, there has been relatively little work on applying machine learning to SDP. 
Early approaches leveraged the relation between KKT conditions and dynamical system equilibria to model SDP solving via ODEs~\citep{jiang1999recurrent,gao2004novel,nikseresht2019novel,krivachy2021high}. However, these methods are empirically limited to small-scale instances with few variables.
Closely related is \citet{yau2024graph}, which aligns GNNs with low-rank SDP relaxations for maximum constraint satisfaction problems (Max-CSP). However, a crucial distinction lies in the problem objective: \citet{yau2024graph} analyzes GNNs as approximation algorithms for CO problems, using the algorithms of their SDP relaxations as a tool. In contrast, our work treats general linear SDPs as the primary object of study, focusing on the GNN's ability to recover the optimal SDP solution itself.

\paragraph{Expressivity of GNNs} MPNNs are limited by the expressivity of the $\WLk{1}$ test~\citep{xu2018how, Mor+2019}. To address this, various approaches have been proposed, including, among others, higher-order GNNs~\citep{Mor+2019, Morris2020b, Mar+2019, Mor+2022b}, subgraph-based methods~\citep{Bev+2021, Pap+2021, Qia+2022, bevilacqua2023efficient, frasca2022understanding, bar2024flexible,Zha+2023b, southern2025balancing}, feature augmentation \citep{sato2021random, You+2020, brasoveanu2023extending, eliasof2023graph}, and techniques probing local structure \citep{Zha+2021, Chen22a, Hua+2022, Zha+2022}. Graph transformers \citep{Dwivedi2020-cd, Rampasek2022-uc, Kim+2022, Mue+2023} have also been analyzed through the lens of the \textsf{WL} hierarchy \citep{muller2024aligning, muller2024towards}. Notably, \citet{hu2022two} study $\WLk{2}$ and $\FWLk{2}$ for link prediction using second-order features, similar in form to our approach. However, our work differs in principle and focuses on semidefinite programming. For a survey on graph expressivity, see \citet{Mor+2022}.

\subsection{Background}
\label{sec:background}

Here, we provide the necessary background; see~\Cref{sec:extended_background} for an extended background. 

\paragraph{SDP}
Let $\mathbb{S}^n \subset \mathbb{R}^{n \times n}$ denote the set of symmetric matrices, and $\mathbb{S}^n_+ \subset \mathbb{S}^n$ denote the cone of PSD matrices. We consider the standard \new{linear SDP} problem with primal variable $\vec{X} \in \mathbb{S}^n_+$ of the form:\footnote{In some literature, the $\min$ is replaced by $\inf$, as the minimum may not be reached. We exclude this case.}
\begin{equation}
\begin{aligned}
\label{eq:standard_sdp}
\min_{\vec{X} \in \mathbb{S}^n_+} \quad & \ip{\vec{C}}{\vec{X}} \\
\text{ s.t. } \quad & \ip{\vec{A}_{k}}{\vec{X}} = b_k, \quad k \in [m].
\end{aligned}
\end{equation}
Here, $\vec{A}_k \in \mathbb{S}^n$ represents the $k$-th constraint matrix, whose entry indexed at $(i,j)$ is $A_{k, ij}$, and we denote the stack of them as a tensor $\tens{A} \in \mathbb{R}^{m \times n \times n}$. Here, the dot product on matrices is defined as $\ip{\vec{C}}{\vec{X}} \coloneqq \sum_{i=1}^n \sum_{j=1}^n C_{ij} X_{ij}$. Further, we define the linear operator
\begin{equation*}
\Acal \colon \mathbb{S}^n \to \mathbb{R}^m,\; \Acal(\vec{X})=\big(\ip{\vec{A}_1}{\vec{X}},\ldots,\ip{\vec{A}_m}{\vec{X}}\big)\trans,
\end{equation*}
and its adjoint operator
\begin{equation*}
\Acalst \colon \mathbb{R}^m \to \mathbb{S}^n, \quad \Acalst (\vec{y})=\sum_{i=k}^m y_k \vec{A}_k
\end{equation*} for brevity. While the primal variable $\vec{X}$ contains $n^2$ entries, the symmetry requirement restricts the problem to $\frac{n(n+1)}{2}$ free variables. We assume the constraints are linearly independent; otherwise, some of them can be reduced, or the problem is infeasible, thus $m$ is bounded by $\cO(n^2)$. Without loss of generality, we assume the coefficient matrices $\vec{C}$ and $\{\vec{A}_k\}_{k=1}^m$ are symmetric.\footnote{If $\vec{C}$ is not symmetric, it can be replaced by $(\vec{C} + \vec{C}\trans)/2$ without affecting the objective value, as $\vec{X}$ is symmetric.}
SDP problems may admit multiple optimal solutions with various ranks \citep{han2025low}, making the solution set diverse and intractable to characterize. To ensure a well-defined learning target, we follow \citet{chen2022representing} in the LP case and focus on the unique optimal solution with the \emph{minimum Frobenius norm}.
\begin{proposition}
\label{thm:sdp_unique}
The SDP problem defined in \Cref{eq:standard_sdp} has a unique primal solution $\vec{X}^*$ with the minimum Frobenius norm $\lVert \vec{X}^* \rVert_{\text{F}}^2$.
\end{proposition}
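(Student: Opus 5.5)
The plan is to show that the optimal solution set is a nonempty, closed, convex subset of the Hilbert space $(\mathbb{S}^n, \ip{\cdot}{\cdot})$. Then the squared Frobenius norm, being strictly convex and coercive, has exactly one minimizer on it. Throughout I use the standing assumption stated after \Cref{eq:standard_sdp}: the minimum is attained, so the optimal value $p^* = \min \ip{\vec{C}}{\vec{X}}$ is finite and achieved.

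\textbf{Step 1 (structure of the optimal set).} Define
\[
\mathcal{S}^* \coloneqq \{\vec{X} \in \SnnP : \Acal(\vec{X}) = \vec{b},\ \ip{\vec{C}}{\vec{X}} = p^*\}.
\]
This set is the intersection of three sets:
\begin{itemize}
\item the PSD cone $\SnnP$, which is closed and convex, since it is an intersection of the closed half-spaces $\{\vec{X} : \vec{v}\trans \vec{X} \vec{v} \ge 0\}$ over $\vec{v}\in\mathbb{R}^n$;
\item the affine subspace $\{\vec{X} : \Acal(\vec{X}) = \vec{b}\}$, which is closed and convex;
\item the hyperplane $\{\vec{X} : \ip{\vec{C}}{\vec{X}} = p^*\}$, which is closed and convex.
\end{itemize}
Hence $\mathcal{S}^*$ is closed and convex. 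It is nonempty because the minimum is attained.

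\textbf{Step 2 (existence of a norm minimizer).} Fix any $\vec{X}_0 \in \mathcal{S}^*$. Any minimizer of $\Fnorm{\cdot}^2$ over $\mathcal{S}^*$ must lie in
\[
\mathcal{S}^* \cap \{\vec{X} : \Fnorm{\vec{X}} \le \Fnorm{\vec{X}_0}\}.
\]
This set is closed and bounded in the finite-dimensional space $\mathbb{S}^n$, hence compact. The continuous function $\Fnorm{\cdot}^2$ therefore attains its minimum on it. Equivalently, the minimizer is the metric projection $\Proj_{\mathcal{S}^*}(\vec{0})$, whose existence is standard for nonempty closed convex sets.

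\textbf{Step 3 (uniqueness).} Suppose $\vec{X}_1 \neq \vec{X}_2$ both lie in $\mathcal{S}^*$ and both attain the minimal norm $r$. By convexity their midpoint lies in $\mathcal{S}^*$. The parallelogram identity gives
\[
\Fnorm{\tfrac{\vec{X}_1+\vec{X}_2}{2}}^2 = \tfrac12\Fnorm{\vec{X}_1}^2 + \tfrac12\Fnorm{\vec{X}_2}^2 - \tfrac14\Fnorm{\vec{X}_1-\vec{X}_2}^2 < r^2 .
\]
This contradicts minimality, so the minimizer is unique. We define $\vec{X}^*$ to be this element.

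The only genuine subtlety is Step 1's reliance on attainment of the optimum. Without it, $\mathcal{S}^*$ could be empty, which is the $\inf$ case the paper explicitly excludes in the footnote. So the main care point is simply to invoke that assumption. The rest is the standard projection-onto-closed-convex-set argument, mirroring the LP case of \citet{chen2022representing}.
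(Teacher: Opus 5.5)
Your proposal is correct, and your uniqueness argument in Step 3 is the same midpoint argument the paper uses: the average of two distinct minimum-norm optimal solutions is still feasible and optimal, but it has strictly smaller Frobenius norm. Your Steps 1--2 show that the optimal set is nonempty, closed, and convex, so a minimum-norm optimal solution actually exists; this fills in a point the paper's proof takes for granted.
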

\paragraph{\textsf{WL} hierarchy} The \textsf{WL} hierarchy is one of the heuristics for the graph isomorphism problem~\citep{Cai+1992}, and a measurement for the expressive power of GNNs~\citep{Mor+2022}. Let $[n] \coloneqq \{1, \dotsc, n\}$ and let $\dgal{ \ldots }$ denote multisets. A graph $G \coloneqq (V, E)$ consists of a finite set of \new{nodes} $V$ and \new{edges} $E \subseteq \{ (u,v) \mid u,v \in V \}$.\footnote{Sometimes denoted as $V(G)$ and $E(G)$ for specific $G$.}\footnote{For notational convenience, we usually write $(u,v)$ for the undirected edge $\{ u,v \}$.} The $\WLk{1}$~\citep{Wei+1968} (or \new{color refinement}) iteratively updates the color $\vec{c}^t_v$, for iteration $t > 0$, of a node $v$ based on the colors of its neighbors $N(v) \coloneqq \{ u \in V \mid (u,v) \in E \}$. The node colors are initialized with labeling function $l \colon V \to \mathbb{N}$ as $\vec{c}^0_v \coloneqq l(v)$, and updated with
\begin{equation*}
\vec{c}^{t}_v \coloneqq \mathsf{hash} \mleft( \vec{c}^{t-1}_v, \dgal[\Big]{ \vec{c}^{t-1}_u \mid u \in N(v) } \mright),
\end{equation*}
for $t > 0$. To distinguish more graphs, the hierarchy extends to $k$-tuples of nodes ($k \ge 2$), leading to the $\WLk{k}$. In this work, we focus on the case $k=2$, where colors are assigned to node pairs $(u,v) \in V^2$. The initialization $\vec{c}^0_{uv} \coloneqq \mathsf{atp}(u,v)$ is the \new{atomic type}, which encodes the isomorphism type of the pair, i.e., whether $u=v$ and whether $(u,v) \in E$. 
The \new{folklore} ($\FWLk{2}$) and \new{standard} ($\WLk{2}$) variants differ in how they aggregate information from a third node $w \in V$. That is, the $\FWLk{2}$ aggregates the \emph{joint} configuration of $w$ with respect to both $u$ and $v$, i.e.,
\begin{equation*}
\vec{c}^{t}_{uv} \coloneqq \mathsf{hash} \mleft(\vec{c}^{t-1}_{uv}, \dgal[\Big]{\left(\vec{c}^{t-1}_{wv}, \vec{c}^{t-1}_{uw}\right) \mid w \in V } \mright).
\end{equation*}
In contrast, $\WLk{2}$ aggregates the interaction of $w$ with each node independently:
\begin{equation*}
\mathsf{hash} \mleft(\vec{c}^{t-1}_{uv}, \mleft( \dgal[\Big]{ \vec{c}^{t-1}_{wv} \mid w \in V }, \dgal[\Big]{ \vec{c}^{t-1}_{uw} \mid w \in V } \mright) \mright).
\end{equation*}
Crucially, $\FWLk{2}$ is strictly more expressive than $\WLk{2}$ \citep{Gro2017}. Both algorithms run until stabilization and yield a unique \new{stable coloring} $\vec{c}^{\infty}$. Finally, we say algorithm $\mathcal{A}$ \new{refines} $\mathcal{B}$, denoted $\mathcal{A} \sqsubseteq \mathcal{B}$, if $\mathcal{A}$ leads to a finer partitioning of (higher-order) nodes than $\mathcal{B}$. The corresponding strict relation is denoted by $\sqsubset$.

\paragraph{MPNNs}\label{def:mpnns} MPNNs learn a $d$-dimensional real-valued vector of each node in a graph by aggregating information from neighboring nodes. Following~\citet{Gil+2017}, let $G$ be an attributed, edge-weighted graph with edge-weights $w \colon E(G) \to \Rb$ with initial node-feature $\hb_{v}^0 \in \Rb^{d_0}$, $d_0 \in \Nb$, for $v\in V(G)$. An \new{MPNN architecture} consists of a composition of $L$ neural network layers for some $L>0$. In each \new{layer}, $t \in \Nb$,  we compute a node feature
\begin{equation*}\label{def:MPNN_aggregation}
\begin{aligned}
    \vec{m}_v^t &= \AGG^{t} \left(\dgal[\Big]{\left(\hb_v^{t-1},\hb_{u}^{t-1}, w_{vu}\right) \mid u\in N(v)} \right) \\
    \hb_{v}^{t} &= \UPD^{t}\mleft(\hb_{v}^{t-1}, \vec{m}_v^t \mright) \in \Rb^{d_t}
\end{aligned}
\end{equation*}
where $\UPD^{t}$ and $\AGG^{t}$ may be  parameterized functions, e.g., neural networks.
In the case of graph-level tasks, one uses a $\RO$ function
\begin{equation*}\label{def:MPNN_readout}
	\hb_G \coloneq \RO\mleft( \dgal[\Big]{\hb_{v}^{{L}}\mid v\in V(G)} \mright) \in \Rb^{d},
\end{equation*}
to compute a single vectorial representation based on learned node features after iteration $L$. Again, $\RO$  may be a parameterized function.

\section{Representing SDP instances}

Here, we outline our graph representation for faithfully encoding SDP problems.

\subsection{From LP to SDP}
\label{sec:vcwl}
We first revisit the standard graph representation used for LP. Given an LP instance $I \colon \min_{\vec{x} \ge \bm{0}}\; \vec{c}\trans \vec{x} \text{ s.t. }\;\vec{A} \vec{x} = \vec{b}$,
we can construct a bipartite \new{variable-constraint} (V-C) graph with constraint nodes $C(I)$ and variable nodes $V(I)$ \citep{chen2022representing}. Edges between $C(I)$ and $V(I)$ are defined by nonzero entries of $\vec{A}$ with weights $A_{cv}$, for $v \in V(I), c \in C(I)$. Since LP is a special case of SDP where all matrices are diagonal, it is natural to extend this representation to the general SDP setting. Specifically, we treat the matrix-shaped variable $\vec{X}$ as a collection of $n^2$ individual variables, initializing node features with coefficients from the objective matrix $\vec{C}$; and create $m$ constraint nodes initialized with $\vec{b}$. A variable node indexed by $(i,j)$ is connected to a constraint node $k$ if $A_{k,ij} \neq 0$, with edge weight $A_{k,ij}$. We hereby define the neighbors of variable nodes $N(ij) \coloneq \{k \in [m] \mid A_{k,ij} \neq 0\}$,  corresponding to the variable $X_{ij}$, and of constraint nodes $N(k) \coloneq \{(i, j) \in [n]^2 \mid A_{k,ij} \neq 0\}$, corresponding to the constraint indexed $k$.

To study the expressive power of MPNNs operating on V-C graphs for SDPs, we define a variant $\vcwl$  of the $\WLk{1}$. To that, let $\vec{v}_{ij}^t$ and $\vec{c}_k^t$ denote the colors of variable node $(i,j)$ and constraint node $k$ at iteration $t$, respectively. We define the initialization and color update as follows,
\begin{align}
\label{eq:vcwl}
\vec{v}^0_{ij} &\coloneq \mathsf{init}_{\text{v}}\mleft(C_{ij}, \mathbb{I}_{i=j} \mright), \text{ for } i, j \in [n], \nonumber \\
\vec{c}^0_{k} &\coloneq \mathsf{init}_{\text{c}}\mleft(b_{k}\mright), \text{ for } k \in [m], \\
\vec{v}^{t}_{ij} &\coloneqq \mathsf{hash} \mleft(\vec{v}^{t-1}_{ij},\dgal[\big]{ (A_{k,ij}, \vec{c}^{t-1}_k) \mid k \in N(ij)} \mright), \nonumber \\
\vec{c}^{t}_{k} &\coloneqq \mathsf{hash}\mleft(\vec{c}^{t-1}_{k},\dgal[\big]{(A_{k,ij}, \vec{v}^{t-1}_{ij} ) \mid (i,j) \in N(k)} \mright), \nonumber
\end{align}
where $\mathsf{init}$ and $\mathsf{hash}$ are injective functions, $\mathbb{I}_{i=j}$ is a diagonal indicator which takes 1 if $i=j$ otherwise 0. While this representation captures the sparsity of the problem and inherits the invariance and equivariance of MPNNs, we show it fundamentally lacks the expressivity required for SDPs.

\begin{proposition}
\label{prop:vcwl_fail}
The $\vcwl$ fails to represent linear SDP solutions. That is, there exist instances where the stable coloring of the $\vcwl$ satisfies $\vec{v}^{\infty}_{ij} = \vec{v}^{\infty}_{pq}$ for distinct variable indices $(i,j)$ and $(p,q)$, yet the entries in the unique optimal solution differ, i.e., $X^*_{ij} \neq X^*_{pq}$. 
\end{proposition}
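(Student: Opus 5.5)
The plan is to exhibit one explicit instance in which two off-diagonal variable nodes are indistinguishable to $\vcwl$ for a trivial reason, yet the positive semidefiniteness constraint forces their optimal values apart. The key observation is that $\vcwl$ sees the variables only as an unstructured multiset attached to constraint nodes. Two variable nodes $(i,j)$ and $(p,q)$ therefore receive the same initial color under \Cref{eq:vcwl} whenever $C_{ij}=C_{pq}$, $i\neq j$ and $p\neq q$. If in addition neither appears in any constraint, i.e.\ $N(ij)=N(pq)=\emptyset$, then every update hashes the previous color with the empty multiset. By induction on $t$ we get $\vec{v}^t_{ij}=\vec{v}^t_{pq}$ for all $t$, hence also for the stable coloring. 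The PSD cone, by contrast, couples entries through the whole matrix, and this is what will separate them.

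Concretely, I take $n=4$ with constraints $X_{ii}=1$ for $i\in[4]$, so $\vec{A}_k=\vec{e}_k\vec{e}_k\trans$ and $b_k=1$. I take $C_{12}=C_{21}=C_{23}=C_{32}=-1$ and all other entries of $\vec{C}$ equal to $0$. The compared pair is $(1,3)$ versus $(1,4)$. Both are off-diagonal, have zero cost, and are connected to no constraint node, so the argument above gives $\vec{v}^\infty_{13}=\vec{v}^\infty_{14}$.

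Next I determine the minimum-Frobenius-norm optimum, whose existence and uniqueness is guaranteed by \Cref{thm:sdp_unique}.
\begin{enumerate}
\item Since every $2\times 2$ principal minor of a PSD matrix with unit diagonal is nonnegative, $|X_{ij}|\le 1$. Hence $\ip{\vec{C}}{\vec{X}}=-2X_{12}-2X_{23}\ge -4$, with equality if and only if $X_{12}=X_{23}=1$.
\item The value $-4$ is attained, e.g.\ by $\vec{J}_3\oplus 1$, where $\vec{J}_3$ is the all-ones $3\times 3$ matrix. So the optimal set is exactly the feasible matrices with $X_{12}=X_{23}=1$.
\item Write $\vec{X}$ as a Gram matrix of unit vectors $\vec{u}_1,\dots,\vec{u}_4$. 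Then $\ip{\vec{u}_1}{\vec{u}_2}=1$ forces $\vec{u}_1=\vec{u}_2$, and likewise $\vec{u}_2=\vec{u}_3$. Hence $X_{13}=1$ for every optimal solution.
\item With $\vec{u}_1=\vec{u}_2=\vec{u}_3$ we also get $X_{14}=X_{24}=X_{34}=\ip{\vec{u}_1}{\vec{u}_4}=:a$. Every optimum thus has the form $\vec{J}_3$ bordered by $a$ with $X_{44}=1$, so $\lVert\vec{X}\rVert_F^2=9+6a^2+1$.
\item This norm is uniquely minimized at $a=0$, which is feasible by step 2. Therefore $\vec{X}^*=\vec{J}_3\oplus 1$, giving $X^*_{13}=1\neq 0=X^*_{14}$.
\end{enumerate}

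The only genuinely delicate step is step 3, showing that PSD forces $X_{13}=1$ rather than merely permitting it. I would handle it with the Gram-vector argument: equality in Cauchy--Schwarz for unit vectors forces them to coincide. Alternatively, one can use the vanishing of the minor on rows and columns $\{1,2\}$, which puts $\vec{e}_1-\vec{e}_2$ in the kernel, together with PSD. Everything else, including the color invariance under \Cref{eq:vcwl} and the norm minimization, is routine.
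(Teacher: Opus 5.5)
Your proof is correct and uses the same strategy as the paper. Both exhibit an instance in which two variable nodes share $(C_{ij},\mathbb{I}_{i=j})$ and touch no constraint node, so $\vcwl$ can never separate them, while the optimum does; the paper's $3\times 3$ instance compares the diagonal entries $(1,1)$ and $(3,3)$ and only asserts $X^*_{11}\approx 0.707$, $X^*_{33}\approx 0.354$, whereas your Gram-vector derivation of $\vec{X}^*$ is exact and more rigorous. One difference worth noting is that your optimal set is a one-parameter family (the all-ones matrix is also optimal and has $X_{13}=X_{14}$), so your separation relies on the minimum-Frobenius-norm selection of \Cref{thm:sdp_unique}; this is legitimate under the paper's convention, but the paper's instance is more robust because its optimum is genuinely unique.
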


Hence, in the following, we investigate more powerful algorithms. 

\subsection{Leveraging higher-order information}

The limitation of standard bipartite graph representation stems from treating SDP variables as independent entities, effectively flattening the problem and discarding the crucial intrinsic matrix geometry. Unlike standard LPs, the primal variables $X_{ij}$ of an SDP form a PSD matrix, characterized by spectral properties and global correlations. Consequently, an effective representation must move beyond bipartite graph representation and instead explicitly model the global interactions among $X_{ij}$. 

We seek a neural architecture $\mathsf{NN} \colon \mathbb{S}^n \times (\mathbb{S}^n)^{m} \times \mathbb{R}^m \to \mathbb{S}^n$ that maps an SDP instance to a solution $\vec{X}$ while strictly respecting the problem's underlying symmetries. Specifically, we enforce three design principles.
\begin{enumerate}
    \item \textbf{Symmetry} Since the solution resides in $\mathbb{S}^n_+$, for any symmetric inputs $\vec{C}$ and $\tens{A}$, the solution $\vec{X} \coloneq \mathsf{NN} \left(\vec{C}, \tens{A}, \vec{b} \right)$ must satisfy $\vec{X} = \vec{X}\trans$. 
    \item \textbf{Equivariance} For any permutation matrix $\vec{P} \in \{0,1\}^{n \times n}$, transforming the inputs via $\vec{P} \vec{C} \vec{P}\trans$ and $\vec{P} \vec{A}_k \vec{P}\trans$ for all $k$ must result in the equivalently permuted output $\vec{P} \vec{X} \vec{P}\trans$.
    \item \textbf{Invariance} For any permutation matrix $\vec{Q} \in \{0,1\}^{m \times m}$ applied to the constraints, the output must remain unchanged: $\mathsf{NN} \left(\vec{C}, \vec{Q}\tens{A}, \vec{Q}\vec{b} \right) = \mathsf{NN} \left(\vec{C}, \tens{A}, \vec{b} \right)$.
\end{enumerate}

Since the primal variables are indexed by pairs $(i,j)$, they are structurally analogous to the colored $2$-tuples in a higher-order graph. Guided by these principles, $\WLk{2}$ and $\FWLk{2}$ emerge not merely as heuristics, but also as the natural inductive bias for this domain. Consequently, we can adapt the iterative color refinement to distinguish the variables of SDP instances. We formally introduce two variants, the $\vctwl$ and $\vctfwl$, by extending the standard $\WLk{2}$ and $\FWLk{2}$ with slight modifications and the incorporation of constraint nodes. We define both variants because of the established hierarchy that $\FWLk{2} \sqsubset \WLk{2}$ in graph theory. This comparative approach allows us to theoretically \emph{pinpoint} the exact level of expressivity required, i.e., whether the $\vctwl$ suffices or whether joint structural modeling of $\vctfwl$ is necessary.

Both variants initialize variable and constraint features according to \Cref{eq:vcwl}. The $\vctwl$ update rule extends the standard $\WLk{2}$ by integrating constraint node information alongside the structural aggregation, i.e.,
\begin{align}
\vec{v}_{ij}^t \coloneq \mathsf{hash} \Bigl( \vec{v}_{ij}^{t-1},&\dgal[\big]{\vec{v}_{uj}^{t-1} \mid u \in [n]}, \dgal[\big]{\vec{v}_{iu}^{t-1} \mid u \in [n]}, \nonumber  \\
& \dgal[\big]{\left(A_{k, ij}, \vec{c}^{t-1}_k\right) \mid k \in N(ij) } \Bigr) \label{def:2wl} \\
\vec{c}^t_{k} \coloneq \mathsf{hash} \Bigl( \vec{c}^{t-1}_{k}, &\dgal[\big]{\left(A_{k,ij}, \vec{v}_{ij}^{t-1} \right) \mid (i,j) \in N(k)} \Bigr).\nonumber
\end{align}
The variable update aggregates row and column neighborhoods independently, reflecting standard $\WLk{2}$. However, this creates an asymmetry, as the $\mathsf{hash}$ function treats the row- and column-neighbor multisets as an \emph{ordered} tuple, meaning the updated $\vec{v}_{ij}$ and $\vec{v}_{ji}$ are not guaranteed to be identical. To strictly enforce symmetry in output, we explicitly symmetrize the states: $\vec{v}_{ji}^t \leftarrow \vec{v}_{ij}^t$ for all $i < j$.

In contrast, the $\vctfwl$ update couples the neighbor indices to capture joint interactions, i.e.,
\begin{align}
\label{def:2fwl}
\vec{v}_{ij}^t \coloneq \mathsf{hash} \Bigl( \vec{v}_{ij}^{t-1}, &\dgal[\big]{ \dgal{\vec{v}_{uj}^{t-1}, \vec{v}_{iu}^{t-1}} \mid u \in [n]}, \nonumber \\
&\dgal[\big]{(A_{k, ij}, \vec{c}^{t-1}_k) \mid k \in N(ij)} \Bigr) \\
\vec{c}^t_{k} \coloneq \mathsf{hash} \Bigl( \vec{c}^{t-1}_{k}, &\dgal[\big]{\left(A_{k,ij}, \vec{v}_{ij}^{t-1} \right) \mid (i,j) \in N(k)} \Bigr). \nonumber
\end{align}
Unlike standard $\FWLk{2}$, which aggregates a multiset of \emph{ordered tuples}, our design aggregates a multiset of \emph{unordered multisets} $\dgal{\vec{v}_{uj}^t, \vec{v}_{iu}^t}$. This modification ensures the updated $\vec{v}_{ij}^t = \vec{v}_{ji}^t$ without the need for manual symmetrization. For theoretical comparison, we define a tuple-based version, $\vctfwlpp$, in \cref{sec:ablation_fwl}. 
We establish the following strict hierarchy of expressivity, see \Cref{sec:proof_herarchy} for the detailed proof, i.e., 
\begin{equation*}
\vctfwlpp \sqsubset \{\vctfwl \not\equiv \vctwl\} \sqsubset \vcwl.
\end{equation*}
That is, $\vctfwlpp$ strictly refines $\vctfwl$ and $\vctwl$, while $\vctfwl$ and $\vctwl$ are incomparable, and they both strictly refine $\vcwl$. 
Guided by this hierarchy, we first examine the limitations of the intermediate variant, $\vctwl$. Despite strictly refining the $\vcwl$, we demonstrate that it remains insufficient.
\begin{proposition}
\label{thm:2wl_fail}
The $\vctwl$ fails to represent linear SDP solutions. That is, there exist instances where the stable colors under $\vctwl$ satisfy $\vec{v}^{\infty}_{ij} = \vec{v}^{\infty}_{pq}$ for distinct indices $(i,j)$ and $(p,q)$, yet the unique optimal solution entries differ, i.e., $X^*_{ij} \neq X^*_{pq}$.
\end{proposition}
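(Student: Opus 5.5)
The plan is to give an explicit counterexample. The key fact used is that $\vctwl$, like the standard $\WLk{2}$, aggregates row and column neighborhoods separately. It therefore cannot tell apart regular graphs of the same degree, and it sees an edge of a triangle and an edge of a hexagon as the same. I encode a max-cut-type SDP on such a graph, so that the optimal off-diagonal entries on edges reflect odd versus even cycles.

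\textbf{Instance.} Let $G$ be the disjoint union of a $6$-cycle $C_6$ and two triangles, so $n=12$ and $G$ is $2$-regular. Let $\vec{C}$ be the adjacency matrix of $G$. Take $m=n$ constraints $\vec{A}_k=\vec{e}_k\vec{e}_k\trans$ with $b_k=1$, i.e., $X_{kk}=1$, so the problem is $\min \ip{\vec{C}}{\vec{X}}$ subject to $\operatorname{diag}(\vec{X})=\bm{1}$ and $\vec{X}\succeq 0$.

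\textbf{Step 1: the optimal solution.} Because $\vec{C}$ is block diagonal over the components of $G$, the objective is a sum of per-component terms. Each term depends only on the corresponding principal submatrix, and every principal submatrix of a feasible $\vec{X}$ is itself feasible for the component problem. It therefore suffices to minimize each component separately. I will also check that the per-component optima assemble into a block-diagonal feasible point.
\begin{itemize}
\item \emph{Triangle.} Write $X_{ij}=\ip{\vec{x}_i}{\vec{x}_j}$ with unit vectors $\vec{x}_i$. Then $0\le\lVert\sum_i \vec{x}_i\rVert^2 = 3+2\sum_{i<j}X_{ij}$, so the objective is at least $-3$ on this block. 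Equality forces $\sum_i \vec{x}_i=0$, and hence $X_{ij}=-1/2$ for all three pairs, because $\lVert\vec{x}_i+\vec{x}_j\rVert=\lVert\vec{x}_k\rVert=1$. So the block $\tfrac12(3\vec{I}-\vec{J})$ is the unique optimum.
\item \emph{Hexagon.} $X_{ij}\ge -1$ entrywise, and equality on all six edges is achieved by $\vec{s}\vec{s}\trans$ with $\vec{s}$ alternating $\pm1$. Equality $X_{ij}=-1$ forces $\vec{x}_j=-\vec{x}_i$, so the block is unique.
\end{itemize}
With all blocks fixed, the optimal set consists of all PSD completions with arbitrary cross-block entries. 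Setting those entries to $0$ is feasible, because a block-diagonal matrix with PSD blocks is PSD, and it clearly minimizes the Frobenius norm. Hence $X^*_{ij}=-1$ on hexagon edges and $X^*_{ij}=-1/2$ on triangle edges, consistent with \Cref{thm:sdp_unique}.

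\textbf{Step 2: the stable coloring.} I will show by induction on $t$ that the partition of pairs into three classes, together with all constraint nodes forming one class, is stable under \Cref{def:2wl}. The classes are $\{(i,i)\}$, $\{(i,j): ij\in E(G)\}$ and $\{(i,j): i\ne j,\ ij\notin E(G)\}$.
\begin{itemize}
\item \emph{Base case.} Initially, diagonal pairs have $(C_{ii},1)=(0,1)$, edge pairs have $(1,0)$, non-edges have $(0,0)$, and all constraints have color $\mathsf{init}_{\text{c}}(1)$.
\item \emph{Row and column multisets.} Because $G$ is $2$-regular, every row $i$ contains exactly one diagonal pair, two edge pairs and nine non-edge pairs; the same holds for every column. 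So the two multisets depend only on the class of $(i,j)$.
\item \emph{Constraint multiset.} Each diagonal pair is adjacent to exactly one constraint, with weight $1$, and off-diagonal pairs have no constraint neighbors.
\item \emph{Constraint nodes.} Each constraint node sees exactly one diagonal pair, with weight $1$, so all constraints stay equally colored.
\end{itemize}
Symmetrization does not change anything, since the classes are symmetric. Consequently $\vec{v}^\infty_{ij}=\vec{v}^\infty_{pq}$ for a hexagon edge $(i,j)$ and a triangle edge $(p,q)$, while $X^*_{ij}=-1\neq-1/2=X^*_{pq}$.

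\textbf{Main obstacle.} The coloring argument is routine once regularity is used. The delicate part is Step 1: the minimum-Frobenius-norm solution must be pinned down exactly, which requires the uniqueness of each block optimum and the argument that the cross-block entries vanish. I will handle this through the Gram-vector arguments above.
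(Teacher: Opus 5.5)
Your proposal is correct, but it uses a different counterexample from the paper's. The paper takes $n=6$, a dense Latin-square objective $\vec{C}$ (each row and column a permutation of $1,\dots,6$, with the $1$'s exactly on the diagonal), and the single trace constraint $\ip{\vec{I}_6}{\vec{X}}=1$. The blindness of $\vctwl$ there comes, exactly as in your Step 2, from every row and column carrying the same multiset of colors, so $(1,5)$ and $(2,4)$ (both with $C=5$) are never separated. The differing values $X^*_{15}\approx-0.115$ and $X^*_{24}\approx-0.172$ are only quoted from a numerical solution.

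Your hexagon-plus-two-triangles max-cut instance instead lets you certify the minimum-norm optimum in closed form: unique block optima via the Gram-vector arguments, and zero cross-blocks. So your argument is self-contained and does not depend on solver output. It also explains why the weaker models collapse on the regular-graph max-cut data.

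The paper's dense instance, in turn, is smaller ($m=1$) and is reused later. Since its $\vec{C}$ has no zeros, the indicator $\mathsf{adj}$ in $\delta\text{-}\mathsf{VC}\text{-2-}\mathsf{WL}$ is constant, so the same instance shows that variant fails too. On your sparse instance, by contrast, $\mathsf{adj}$ detects the third vertex of a triangle and separates triangle edges from hexagon edges.
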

However, we argue that $\vctfwlpp$ is \emph{overly expressive} by proving in \cref{sec:proof_2fwl} that $\vctfwl$ is sufficiently expressive for this domain. Formally,
\begin{theorem}
\label{thm:2fwl_express}
Let $\vec{X}^* \in \mathbb{S}^n_+$ be the primal optimal solution to a given SDP instance and given indices $(i, j), (p, q)$. If the stable colorings of $\vctfwl$ satisfy $\vec{v}^{\infty}_{ij} = \vec{v}^{\infty}_{pq}$, then the solution values satisfy $X^*_{ij} = X^*_{pq}$.
\end{theorem}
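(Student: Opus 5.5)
The plan is to show that the stable coloring of $\vctfwl$ refines every iterate of a PDHG-type solver run on \Cref{eq:standard_sdp}, and then pass to the limit. Fix the solver from \citet{chambolle2016ergodic,wang2024tuningfree}, initialized at $\vec{X}^0 = \bm{0}$ and $\vec{y}^0=\bm{0}$, with iterations
\begin{align*}
\vec{X}^{t+1} &= \Proj_{\SnnP}\bigl(\vec{X}^t - \tau(\vec{C} - \Acalst(\vec{y}^t))\bigr),\\
\vec{y}^{t+1} &= \vec{y}^t + \sigma\bigl(\vec{b} - \Acal(2\vec{X}^{t+1}-\vec{X}^t)\bigr).
\end{align*}
To single out the minimum-Frobenius-norm solution of \Cref{thm:sdp_unique}, I would run it on the regularized problem with objective $\ip{\vec{C}}{\vec{X}} + \tfrac{\varepsilon}{2}\Fnorm{\vec{X}}^2$. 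This problem has a unique solution $\vec{X}_\varepsilon$, and $\vec{X}_\varepsilon \to \vec{X}^*$ as $\varepsilon\to 0$ by the standard Tikhonov argument. The PDHG iterates converge to $\vec{X}_\varepsilon$ for suitable step sizes, and the regularization only adds a scalar multiple of $\vec{X}^t$ to the update.

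The core is an induction on $t$. The claim is that there are functions $f_t, g_t$ with $X^t_{ij} = f_t(\vec{v}^\infty_{ij})$ and $y^t_k = g_t(\vec{c}^\infty_k)$. The base case holds trivially. For the induction step I would handle the operations in order:
\begin{itemize}
\item \emph{Entrywise and affine terms.} $C_{ij}$ and $\mathbb{I}_{i=j}$ are functions of $\vec{v}^0_{ij}$, hence of $\vec{v}^\infty_{ij}$.
\item \emph{$\Acalst(\vec{y})_{ij} = \sum_{k\in N(ij)} A_{k,ij}y_k$.} This is a function of the multiset $\dgal{(A_{k,ij},\vec{c}^\infty_k)}$, which stability makes a function of $\vec{v}^\infty_{ij}$.
\item \emph{$\Acal(\vec{X})_k$.} This is likewise determined by $\vec{c}^\infty_k$ through the constraint update.
\item \emph{Products.} A matrix product $(\vec{M}\vec{N})_{ij}=\sum_u M_{iu}N_{uj}$ of two matrices expressible through stable colors is determined by the multiset $\dgal{\dgal{\vec{v}_{uj},\vec{v}_{iu}}\mid u}$. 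This needs care: the pair is unordered, so the aggregation only sees $\{\vec{v}_{iu},\vec{v}_{uj}\}$ without knowing which is which. For symmetric functions $M=N=f(\vec{v})$, the term $f(\vec{v}_{iu})f(\vec{v}_{uj})$ is symmetric in the two colors, so the unordered multiset suffices. Powers $\vec{M}^r$ of a symmetric color-expressible $\vec{M}$ therefore remain color-expressible, since $\vec{M}^{r} = \vec{M}^{r-1}\vec{M}$ where both factors are polynomials in $\vec{M}$ and commute. I would phrase it as: entries of $p(\vec{M})$ for any polynomial $p$ are functions of $\vec{v}^\infty$, proved by induction on degree using $\vec{M}^{a}\vec{M}^{b}$ with the symmetric-product trick, $\sum_u (M^a)_{iu}(M^b)_{uj}$. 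Since $(M^a)_{iu} = (M^a)_{ui}$ and the summand of the symmetrized sum $\tfrac12\sum_u[(M^a)_{iu}(M^b)_{uj}+(M^b)_{iu}(M^a)_{uj}]$ is symmetric under swapping the two colors, and $M^aM^b=M^bM^a$, this equals the desired entry.
\end{itemize}

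The main obstacle is the projection $\Proj_{\SnnP}$, which is not a polynomial. I would write $\Proj_{\SnnP}(\vec{M}) = h(\vec{M})$ with $h(\lambda)=\max(\lambda,0)$ applied spectrally. Because $\vec{M}$ has finitely many eigenvalues, there is a polynomial $p$ (Lagrange interpolation of $h$ on the spectrum) with $h(\vec{M})=p(\vec{M})$. The polynomial depends on the instance, so I must check that it is the same function of colors across entries; this holds because $p$ is fixed once the instance is fixed, and we compare entries $(i,j),(p,q)$ within the same instance. Thus $\Proj(\vec{M})_{ij} = \sum_r a_r (M^r)_{ij}$ with $(M^r)_{ij}$ a function of $\vec{v}^\infty_{ij}$, and equal colors give equal values. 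Combined with the previous bullet, this closes the induction, so $\vec{v}^\infty_{ij}=\vec{v}^\infty_{pq}$ implies $X^t_{ij}=X^t_{pq}$ for all $t$.

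Finally, taking $t\to\infty$ gives $(\vec{X}_\varepsilon)_{ij} = (\vec{X}_\varepsilon)_{pq}$, and letting $\varepsilon\to0$ gives $X^*_{ij}=X^*_{pq}$. As a cleaner alternative for the last step, I would use the permutation-free argument that the set of optimal solutions with $X_{ij}=X_{pq}$ is convex and closed and contains all limits, so the minimum-norm element lies in it.
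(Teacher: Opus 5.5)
Your proposal is correct and follows essentially the same route as the paper. Both arguments run PDHG on the Frobenius-regularized SDP from $\vec{X}^0=\bm{0}$, $\vec{y}^0=\bm{0}$ and push refinement through $\Acalst(\vec{y}^t)$ and $\Acal(\cdot)$ using stability of the constraint and variable multisets. Both then apply the symmetrized-product trick on the unordered pairs $\{\vec{v}_{uj},\vec{v}_{iu}\}$ to obtain every polynomial of a color-expressible matrix. Finally, both write $\Proj_{\SnnP}$ as such a polynomial before passing to the limit; your Lagrange interpolation of $\max(\lambda,0)$ is equivalent to the paper's Sylvester-formula spectral projectors.

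Your version is slightly more careful in two places. You apply the power-refinement step directly to the stable coloring and an arbitrary color-expressible $\vec{Z}^t$, whereas the paper's lemma is stated only for the matrix used at initialization. You also justify $\varepsilon\to 0$ with the Tikhonov argument, which the paper only asserts. Both proofs rely implicitly on PDHG convergence, which presupposes that a primal-dual saddle point exists.
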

In practice, excessive expressivity can introduce unnecessarily too many colors or features, potentially causing the neural network to overfit. On the contrary, by discarding unnecessary directional information, our design yields a smaller set of colors which facilitates faster convergence, and encourages the neural network to learn more robust and generalizable embeddings. 

In summary, both $\vctwl$ and $\vctfwl$ satisfy our three design principles: symmetric outputs, permutation equivariance regarding $\vec{C}, \tens{A}$, and invariance to constraint ordering. Crucially, $\vctfwl$ provides the expressivity level required to solve the problem.

\subsection{Complexity}
We analyze the computational complexity of $\vctfwl$ by mapping it to the standard $\WLk{1}$ operating on an implicit auxiliary graph. By leveraging tight complexity bounds from \citet{berkholz2017tight}, we have the following proposition (see \cref{sec:prof_complex} for proof):
\begin{proposition}
\label{thm:complexity}
Given an SDP of $n \times n$ variables and $m$ constraints, $\text{nnz}(\tens{A})$ denotes the number of non-zero entries in $\tens{A}$, then $\vctfwl$ converges in time 
$\cO\left( (n^3  + \text{nnz}(\tens{A})) \log n \right)$.
\end{proposition}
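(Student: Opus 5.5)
The statement to prove is \Cref{thm:complexity}, the running-time bound for $\vctfwl$. I will reduce $\vctfwl$ to color refinement ($\WLk{1}$) on an auxiliary, implicitly defined structure and then invoke the $\cO((|V|+|E|)\log|V|)$ bound of \citet{berkholz2017tight}. Their algorithm extends to colored, edge-labeled graphs and to refinement with respect to hyperedges of bounded arity.

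The auxiliary structure has three kinds of vertices. There are $n^2$ pair vertices, one for each $(i,j)$. There are $m$ constraint vertices. There are $n^3$ triangle gadget vertices, one for each triple $(i,u,j)$; each represents the unordered multiset $\dgal{\vec v_{uj},\vec v_{iu}}$ attached to $(i,j)$.

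The edges are as follows. A pair vertex $(i,j)$ is joined to the constraint $k$ with label $A_{k,ij}$ whenever $A_{k,ij}\neq 0$; this gives $\text{nnz}(\tens A)$ edges. Each gadget $(i,u,j)$ is joined to $(i,j)$ with label ``target''. It is also joined to both $(u,j)$ and $(i,u)$ with one common label ``source''; using the same label for both makes the gadget see an unordered multiset. This gives $3n^3$ edges.

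I then argue that one $\vctfwl$ round equals a bounded number of refinement rounds on this structure. After the gadget colors are refined, the gadget $(i,u,j)$ determines $\dgal{\vec v_{uj},\vec v_{iu}}$. The pair $(i,j)$ then aggregates the multiset of its gadgets' colors, which is exactly the first multiset in \Cref{def:2fwl}, together with the constraint multiset. Conversely, refinement on the auxiliary structure is never finer than $\vctfwl$, since every color it produces is a function of $\vctfwl$ colors. So the stable colorings coincide on pair and constraint vertices.

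One subtlety remains. A gadget must not distinguish ``$(u,j)$ versus $(i,u)$'' beyond the multiset, which the shared label ensures. However, the gadget also has its own ``target'' neighbor, and a gadget color may eventually absorb information about $(i,j)$'s color. This only refines gadgets by a function of the target's current color, so it does not change the partition induced on pairs at stability. I expect this equivalence-of-stable-partitions argument to be the main point needing care. I would handle it by showing the two partitions are mutually refining, by induction on rounds.

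Finally, I count. We have $|V| = n^2+m+n^3$ and $|E| = \cO(n^3+\text{nnz}(\tens A))$. Since $m\le \text{nnz}(\tens A)$ (each independent constraint is nonzero) and $m=\cO(n^2)$, we get $\log|V| = \cO(\log n)$. The Berkholz et al. bound therefore yields $\cO((n^3+\text{nnz}(\tens A))\log n)$.
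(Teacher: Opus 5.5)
Your proposal is correct and takes essentially the same route as the paper: the same auxiliary graph with $n^2$ pair nodes, $m$ constraint nodes and $n^3$ triple nodes $(i,u,j)$, $\cO(n^3+\text{nnz}(\tens{A}))$ edges, equivalence of its color refinement with $\vctfwl$, and the bound of \citet{berkholz2017tight}. The one difference is that the paper orients the edges (triples receive only from $(i,u)$ and $(u,j)$, pairs only from their own triples and from constraints), which avoids a case your undirected version leaves unchecked, namely a pair $(a,b)$ also receiving from the triples $(a,b,w)$ and $(w,a,b)$ in which it is a source; this is harmless, because colors are symmetric and both source edges share a label, so for each $w$ the two contributions form a multiset invariant under swapping $\vec{v}_{aw}$ and $\vec{v}_{wb}$, hence determined by $\vec{v}_{ab}$ and $\dgal{\vec{v}_{aw},\vec{v}_{wb}}$.
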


The derived complexity is highly competitive for practical applications. In many applications, such as max-cut, the constraint tensor is \emph{sparse}, rendering the $\text{nnz}(\tens{A})$ term negligible. Notably, our total complexity is comparable to the fastest implementation of IPM for SDPs \citep{jiang2020faster}, which has complexity $\cO(n^{3.5} \log \frac{1}{\epsilon})$. Besides, the per-iteration of our $\vctfwl$ is $\cO(n^3 + \text{nnz}(\tens{A}))$, which is comparable to any SDP solver per iteration \citep{wen2010alternating,jiang2020faster,wang2024tuningfree}, as they all require spectral decomposition of complexity $\cO(n^3)$. A critical advantage of our approach, however, is that the neuralized $\vctfwl$ being highly compatible for parallelization on GPUs. 

\subsection{Expressive neural architecture}
Based on the initialization functions of \Cref{eq:vcwl} and the update functions of \Cref{def:2fwl}, we can neuralize the $\vctfwl$ using multi-layer perceptrons (MLPs) and instantiate a neural network, which we name $\vctfmpnn$. 

The feature initialization is
\begin{equation}
\begin{aligned}
\label{eq:vc2fmpnn_init}
\vec{h}_{k}^0 &\coloneqq \mathsf{INIT}_{\text{c}}(b_k) \in \mathbb{R}^d \\
\vec{h}_{ij}^0 &\coloneqq \mathsf{INIT}_{\text{v}}(C_{ij}, \mathbb{I}_{i=j}) \in \mathbb{R}^d
\end{aligned}
\end{equation}
and the update function
\begin{align}
\label{eq:vc2fmpnn}
\vec{m}_{ij,\text{v} \rightarrow \text{v}}^{t} &\coloneq \sum_{u \in [n]} \mathsf{MSG}_{\text{v} \rightarrow \text{v}}^{t} \left( 
\mathsf{MAP}^{t}\left(\vec{h}_{uj}^{t-1}\right) + \mathsf{MAP}^{t}\left(\vec{h}_{iu}^{t-1}\right) \right) \nonumber \\
\vec{m}_{ij,\text{c} \rightarrow \text{v}}^{t} &\coloneq \sum_{k \in N(ij)} \mathsf{MSG}^{t}_{\text{c} \rightarrow \text{v}}\left(A_{k,ij}, \vec{h}_k^{t-1}\right) \\
\vec{m}_{k,\text{v} \rightarrow \text{c}}^{t} &\coloneq \sum_{(i,j) \in N(k)} \mathsf{MSG}^{t}_{\text{v} \rightarrow \text{c}}\left(A_{k,ij}, \vec{h}_{ij}^{t-1}\right) \nonumber\\
\vec{h}_{ij}^{t} &\coloneq \mathsf{UPD}_{\text{v}}^{t}\left( \vec{h}_{ij}^{t-1}, \vec{m}_{ij,\text{v} \rightarrow \text{v}}^{t}, \vec{m}_{ij,\text{c} \rightarrow \text{v}}^{t}\right) \nonumber\\
\vec{h}^t_{k} &\coloneq \mathsf{UPD}_{\text{c}}^{t} \left( \vec{h}^{t-1}_{k}, \vec{m}_{k,\text{v} \rightarrow \text{c}}^{t} \right) \nonumber
\end{align}
where $\mathsf{INIT}, \mathsf{MSG}, \mathsf{MAP}, \mathsf{UPD}$ are MLPs. We now formally establish that this neural architecture possesses the capacity to fully simulate the $\vctfwl$ algorithm, thereby inheriting its theoretical expressivity.
\begin{proposition}
\label{thm:vc2fmpnn}
There exists a set of parameters for the functions $\mathsf{INIT}, \mathsf{UPD}, \mathsf{MSG}, \mathsf{MAP}$, such that $\vctfmpnn$ has maximal expressivity equal to the $\vctfwl$.
\end{proposition}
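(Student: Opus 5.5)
The proof has two directions. The first is the upper bound: every $\vctfmpnn$ is at most as expressive as $\vctfwl$. The second is the lower bound: some parameter choice attains it. In both directions we work on a fixed finite collection of SDP instances, or a countable set with finitely many distinct input values per layer. This is the standard setting of \citet{Mor+2019} and \citet{xu2018how}.

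\textbf{Upper bound.} We argue by induction on $t$ that $\vec{v}^{t}_{ij}=\vec{v}^{t}_{pq}$ implies $\vec{h}^{t}_{ij}=\vec{h}^{t}_{pq}$, and likewise for constraint colors $\vec{c}^t_k$ and features $\vec{h}^t_k$.
\begin{itemize}
\item The base case holds because $\mathsf{INIT}$ is a function of exactly the data that $\mathsf{init}$ hashes injectively.
\item For the step, each message in \Cref{eq:vc2fmpnn} is a sum over a multiset that is determined by the corresponding multiset in \Cref{def:2fwl}.
\item The summand $\mathsf{MAP}(\vec{h}_{uj})+\mathsf{MAP}(\vec{h}_{iu})$ is symmetric in its two arguments, so it depends only on the unordered pair $\dgal{\vec{v}_{uj},\vec{v}_{iu}}$.
\end{itemize}
Equal hash inputs therefore yield equal $\mathsf{UPD}$ outputs.

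\textbf{Lower bound.} We construct parameters so that each layer is injective on the finitely many color configurations occurring. We follow the multiset-injectivity lemma of \citet{xu2018how}: for a countable domain $\mathcal{X}$ there is $f$ such that $S\mapsto\sum_{x\in S}f(x)$ is injective on bounded-size multisets. The induction hypothesis is that $\vec{h}^{t-1}$ is an injective encoding of $\vec{v}^{t-1}$ and $\vec{c}^{t-1}$ into $\mathbb{R}^d$, with finitely many attained values.

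For the constraint-to-variable and variable-to-constraint messages, the pairs $(A_{k,ij},\vec{h}_k)$ range over a finite set. We choose $\mathsf{MSG}$ to map these pairs to the sum-injective encodings, and the resulting sum identifies the multiset.

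The variable-to-variable term needs a nested injection, which we build in two stages.
\begin{itemize}
\item \emph{Inner stage.} Choose $\mathsf{MAP}$ so that $a\mapsto\mathsf{MAP}(a)$ makes two-element multiset sums injective. For example, use the same sum-injective encoding with multiplicity bound $2$: send the $r$-th color to $N^{-r}$ with $N>2$, or to one-hot coordinates. Then $\mathsf{MAP}(\vec{h}_{uj})+\mathsf{MAP}(\vec{h}_{iu})$ determines $\dgal{\vec{v}_{uj},\vec{v}_{iu}}$.
\item \emph{Outer stage.} These sums take finitely many values. Choose $\mathsf{MSG}_{\text{v}\to\text{v}}$ to map them to a second sum-injective encoding with multiplicity bound $n$. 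Summing over $u\in[n]$ then determines the outer multiset.
\end{itemize}
Finally, choose $\mathsf{UPD}$ injective on the finite set of triples, for instance as an injective map of the concatenation. By induction, $\vec{h}^t$ injectively encodes $\vec{v}^t$ for all $t$, so the stable colorings coincide.

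One step remains: $\mathsf{INIT}$, $\mathsf{MAP}$, $\mathsf{MSG}$ and $\mathsf{UPD}$ must be MLPs rather than arbitrary functions. On a finite input set, any prescribed function can be interpolated exactly by an MLP, for example with ReLU networks of sufficient width via memorization. Alternatively, universal approximation suffices if we only require that the separation between distinct finite values is preserved. We use the interpolation argument.

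\textbf{Main obstacle.} The hardest step is the nested unordered-pair aggregation. We must ensure that the sum of $\mathsf{MAP}$ values distinguishes unordered pairs while $\mathsf{MSG}$ is still applied to that sum before the outer summation. Since $\mathsf{MSG}$ only needs to be injective-to-encoding on a finite set of attained sums, exact interpolation resolves it.
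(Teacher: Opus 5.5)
Your proof is correct and follows the paper's argument. The upper bound is the usual induction showing that the features are a function of the colors, using that the summand $\mathsf{MAP}(\vec{h}_{uj})+\mathsf{MAP}(\vec{h}_{iu})$ depends only on the unordered pair; the lower bound applies a sum-injective multiset encoding twice, once inner for the 2-multiset and once outer over $u\in[n]$, and then realizes each map by an MLP.

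The only difference is that you fix a finite instance set and interpolate exactly, whereas the paper works on a compact domain and approximates continuous injective encodings up to a ``separation margin''. Your finite framing is what makes that margin step legitimate, since over infinitely many instances no positive margin need exist.
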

See to \Cref{sec:proof_vc2fmpnn} for a proof and how we also neuralize $\vcwl$ and $\vctwl$ to $\vcmpnn$ and $\vctmpnn$.

\section{Empirical results}
\label{sec:experiments}

In the following, we aim to assess the extent to which our theoretical results translate to practice. Concretely, we answer the following questions.
\begin{description}
    \item[Q1 Expressivity] Do the theoretical hierarchies of expressivity translate into approximation performance on SDP?
    \item[Q2 Efficiency] Can the proposed neural architecture accelerate the solving process for state-of-the-art solvers?
\end{description}

The implementation of our neural methods and baselines can be accessed at \url{https://github.com/chendiqian/GNN4SDP}. 

\subsection{Experimental protocol}

Here, we outline the architectural configuration and details on training and evaluation. Before presenting our empirical findings, it is important to clarify the scope of our evaluation. The primary objective of this work is to develop neural architectures capable of approximating solutions to general linear SDPs, rather than serving as an end-to-end solver for CO problems. Consequently, our main experiments evaluate the quality of the predicted SDP continuous solutions against exact SDP solvers. While mapping these continuous embeddings back to discrete integer solutions is beyond our core focus, we have included downstream evaluations comparing our approximated SDP solutions to exact CO targets in Appendix X for completeness.

\begin{table*}[htb!]
\caption{Loss and relative objective gap (\%) on test set. The best results across all are highlighted.}
\label{tab:main}
\centering
\resizebox{0.9\textwidth}{!}{
\begin{tabular}{lccccccc}
\toprule
\multirow{2}{*}{Target} & \multirow{2}{*}{Model} & \multicolumn{6}{c}{\textbf{Problems}} \\
& & Max-Cut & Max-Cut (reg) & Max-Clique & MIS & Vertex Cover & Max 2-SAT \\
\midrule
\multirow{6}{*}{Test loss} & {$\vcmpnn$} 
& 0.216$\pm$\scriptsize0.000    %
& 0.221$\pm$\scriptsize0.000    %
& 1.016e-5$\pm$\scriptsize0.000  %
& 9.945e-6$\pm$\scriptsize0.000    %
& 0.099$\pm$\scriptsize0.000    %
& 0.217$\pm$\scriptsize0.000   \\ %
& {$\vctmpnn$}
& 0.119$\pm$\scriptsize0.018    %
& 0.221$\pm$\scriptsize0.000    %
& 5.108e-6$\pm$\scriptsize0.000  %
& 5.032e-6$\pm$\scriptsize0.000    %
& 0.029$\pm$\scriptsize0.001    %
& 0.053$\pm$\scriptsize0.009  \\ %
& $\deltatmpnn$ 
& 0.014$\pm$\scriptsize0.015    %
& 0.026$\pm$\scriptsize0.003    %
& 5.395e-6$\pm$\scriptsize0.000    %
& 5.058e-6$\pm$\scriptsize0.000    %
& 0.028$\pm$\scriptsize0.001    %
& 0.038$\pm$\scriptsize0.009  \\ %
& {$\vcign$} 
& 0.215$\pm$\scriptsize0.000    %
& 0.221$\pm$\scriptsize0.000    %
& 6.310e-6$\pm$\scriptsize0.000  %
& 6.120e-6$\pm$\scriptsize0.000    %
& 0.030$\pm$\scriptsize0.002    %
& 0.082$\pm$\scriptsize0.007  \\  %
& {$\vctfmpnn$} 
& \textbf{5.515e-5$\pm$\scriptsize0.000}    %
& \textbf{5.140e-5$\pm$\scriptsize0.000}    %
& \textbf{5.197e-7$\pm$\scriptsize0.000}  %
& \textbf{4.772e-7$\pm$\scriptsize0.000}    %
& \textbf{0.001$\pm$\scriptsize0.000}    %
& \textbf{0.001$\pm$\scriptsize0.000}  \\  %
& {$\vcet$} 
& 0.0001$\pm$\scriptsize0.000    %
& 0.0002$\pm$\scriptsize0.000    %
& 2.906e-6$\pm$\scriptsize0.000  %
& 1.895e-6$\pm$\scriptsize0.000    %
& 0.013$\pm$\scriptsize0.008    %
& 0.017$\pm$\scriptsize0.011   \\ %
\midrule

\multirow{6}{*}{Raw obj gap (\%)} & $\vcmpnn$ 
& 1.297$\pm$\scriptsize0.003    %
& 1.226$\pm$\scriptsize0.001    %
& 1.228$\pm$\scriptsize0.005  %
& 1.148$\pm$\scriptsize0.006    %
& 1.048$\pm$\scriptsize0.039    %
& 2.044$\pm$\scriptsize0.166  \\ %
& $\vctmpnn$ 
& 1.046$\pm$\scriptsize0.073    %
& 1.225$\pm$\scriptsize0.001    %
& 0.889$\pm$\scriptsize0.023  %
& 0.858$\pm$\scriptsize0.063    %
& 1.235$\pm$\scriptsize0.050    %
& 0.882$\pm$\scriptsize0.053  \\ %
& $\deltatmpnn$ 
& 0.835$\pm$\scriptsize0.041    %
& 0.841$\pm$\scriptsize0.002    %
& 0.915$\pm$\scriptsize0.014    %
& 0.882$\pm$\scriptsize0.035    %
& 1.203$\pm$\scriptsize0.031    %
& \textbf{0.823$\pm$\scriptsize0.093}  \\ %
& $\vcign$ 
& 1.282$\pm$\scriptsize0.005    %
& 1.225$\pm$\scriptsize0.001    %
& 1.305$\pm$\scriptsize0.071  %
& 1.224$\pm$\scriptsize0.121    %
& 1.274$\pm$\scriptsize0.067    %
& 0.918$\pm$\scriptsize0.084  \\ %
& $\vctfmpnn$ 
& \textbf{0.126$\pm$\scriptsize0.003}    %
& \textbf{0.111$\pm$\scriptsize0.009}   %
& \textbf{0.438$\pm$\scriptsize0.036}  %
& \textbf{0.412$\pm$\scriptsize0.026}    %
& \textbf{0.429$\pm$\scriptsize0.044}    %
& 0.901$\pm$\scriptsize0.201 \\  %
& $\vcet$ 
& 0.142$\pm$\scriptsize0.043    %
& 0.125$\pm$\scriptsize0.038    %
& 0.659$\pm$\scriptsize0.104  %
& 0.525$\pm$\scriptsize0.124    %
& 0.503$\pm$\scriptsize0.019    %
& 1.149$\pm$\scriptsize0.262  \\ %
\midrule

\multirow{6}{*}{Proj. obj gap (\%)} & $\vcmpnn$ 
& 26.027$\pm$\scriptsize0.144    %
& 24.722$\pm$\scriptsize0.018    %
& 1.217$\pm$\scriptsize0.006  %
& 1.139$\pm$\scriptsize0.005    %
& 4.948$\pm$\scriptsize0.108    %
& 29.341$\pm$\scriptsize0.254 \\  %
& $\vctmpnn$ 
& 13.662$\pm$\scriptsize2.078    %
& 24.729$\pm$\scriptsize0.019    %
& 0.884$\pm$\scriptsize0.020  %
& 0.852$\pm$\scriptsize0.059    %
& 1.478$\pm$\scriptsize0.265    %
& 7.173$\pm$\scriptsize0.128  \\  %
& $\deltatmpnn$ 
& 1.929$\pm$\scriptsize0.021    %
& 3.781$\pm$\scriptsize0.045    %
& 0.906$\pm$\scriptsize0.016    %
& 0.874$\pm$\scriptsize0.033    %
& 1.321$\pm$\scriptsize0.062    %
& 4.208$\pm$\scriptsize1.222  \\ %
& $\vcign$ 
& 24.796$\pm$\scriptsize0.007    %
& 24.737$\pm$\scriptsize0.008    %
& 1.344$\pm$\scriptsize0.035  %
& 1.242$\pm$\scriptsize0.145    %
& 1.677$\pm$\scriptsize0.447    %
& 10.607$\pm$\scriptsize0.927  \\ %
& $\vctfmpnn$ 
& \textbf{0.111$\pm$\scriptsize0.017}    %
& \textbf{0.092$\pm$\scriptsize0.004}    %
& \textbf{0.436$\pm$\scriptsize0.035}  %
& \textbf{0.411$\pm$\scriptsize0.025}    %
& \textbf{0.469$\pm$\scriptsize0.051}    %
& \textbf{0.862$\pm$\scriptsize0.211} \\  %
& $\vcet$ 
& 0.159$\pm$\scriptsize0.043    %
& 0.128$\pm$\scriptsize0.032    %
& 0.647$\pm$\scriptsize0.102  %
& 0.523$\pm$\scriptsize0.135    %
& 0.510$\pm$\scriptsize0.015    %
& 1.104$\pm$\scriptsize0.146  \\  %
\bottomrule
\end{tabular}
}
\vspace{-10pt}
\end{table*}

\paragraph{Neural architecture}
We evaluate the approximation performance of $\vctfmpnn$ for solving SDPs against theoretically weaker baselines, i.e., $\vcmpnn$ and $\vctmpnn$, to verify whether empirical results align with our expressivity hierarchy. 
Besides, there are a few interesting neural architectures. The $\deltatmpnn$ and $\vcign$, adapted from $\delta\text{-2-}\mathsf{WL}$ \citep{Morris2020b} and $\twoign$ \citep{Mar+2019c} respectively, are similar in form to $\vctmpnn$, see \Cref{sec:delta-2-wl-update} and \Cref{sec:2-ign-update} for their definition and expressivity analysis. Furthermore, we benchmark against the $\vcet$, a variant of Edge Transformer (ET) with expressivity equivalent to $\FWLk{2}$ \citep{muller2024towards}, see \Cref{sec:et-details} for details. 

\paragraph{Datasets} 
We employ synthetic SDP problems as relaxations of classical CO problems, i.e., max-cut, max-clique, max independent set (MIS), vertex cover, and max 2-SAT, as well as linear matrix inequality (LMI) problems from control theory. 
Detailed problem formulations are provided in \Cref{sec:appli_sdp}.
For each problem class, we generate $\num{10000}$ instances and partition them into training, validation, and test sets with $8{:}1{:}1$ ratio.
For the graph-based tasks, i.e., max-cut, max-clique, MIS, and vertex cover, we construct instances from Erd\H{o}s--R\'enyi graphs \citep{erdds1959random} with 100 nodes and edge probability of 0.1, resulting in SDP problems with $\num{10000}$ variables. 
To specifically challenge the expressivity of weaker baselines and expose approximation gaps, we also curate a more difficult max-cut dataset comprising $10$-regular graphs with $100$ nodes. Beyond the graph problems, the suite includes max 2-SAT instances with $100$ variables and $1000$ clauses, as well as LMI control problems with $2500$ variables and $500$ constraints. We emphasize that our experimental design prioritizes learnability and structural expressivity over scalability. While the instance sizes are moderate, they are sufficient to: (i) reveal the limitations of weaker architectures and (ii) demonstrate significant acceleration for classical SDP solvers. 

To demonstrate real-world applicability beyond synthetic benchmarks, we also evaluate our method on real-world instances from the \textsc{SdpLib} \citep{borchers1999sdplib}. We select the class of max-cut problems, which consists of 13 instances that are homogeneous in problem structure but vary significantly in scale, as detailed in \Cref{tab:sdplib_stats}. Due to the limited number of available instances, the standard train-validation split is infeasible. Instead, we treat this as a direct test of \emph{trainability}, i.e., assessing whether the architectures can overfit and effectively predict the near-optimal solution. 

\paragraph{Training and evaluation}
All models were trained using the Adam optimizer \citep{Kin+2015} with default hyperparameters and batch size of 256. The training objective is to minimize the supervised mean squared error (MSE) between the predicted and ground-truth solution matrices. Training for synthetic datasets were conducted on a compute cluster with four NVIDIA L40S GPUs, whereas \textsc{SdpLib} utilized a single GPU. We trained for at most $\num{1000}$ epochs, using a learning rate scheduler that decays the learning rate after 100 consecutive epochs without improvement in the relative objective gap on the validation set. Additionally, early stopping was applied if this metric failed to improve for $\num{200}$ consecutive epochs. 

To quantify the computational efficiency of neural SDP solvers, we benchmarked inference time against classical solvers using the max-cut problem as a representative case study. We compared against SDPLR, a solver based on Burer-Monteiro method \citep{burer2003nonlinear}, MOSEK \citep{mosek}, an IPM solver, and SCS \citep{o2021operator}, a first-order solver based on the ADMM algorithm \citep{wen2010alternating}. 
SDPLR and MOSEK were executed on an Intel\textsuperscript{\textregistered} Xeon\textsuperscript{\textregistered} Silver 4510 CPU, as they lack native GPU support. 
Both the SCS solver and all the neural architectures were executed on a single NVIDIA L40S GPU.
Additionally, we investigate the practical utility of our approach by using the $\vctfmpnn$ prediction to warm-start the SCS solver. 

For all experiments, including model training and solver timing, we compute evaluation metrics, specifically loss and objective gap, by averaging over the full dataset. We repeat this process using five different random seeds and report the resulting mean and standard deviation.

\subsection{Results and discussion}

In the following, we present results and discuss them.

\paragraph{Approximation performance}

The results of test loss and relative objective gap are reported in \Cref{tab:main}. The optimal objective values for LMI are 0, so we report the predicted objective in \Cref{tab:objectivevalues}. 
We first examine the test loss. Consistent with the theoretical hierarchy, our $\vctfmpnn$ and equally powerful $\vcet$ achieve the lowest losses across all problems, suggesting that $\FWLk{2}$-like updates are essential for SDP approximation. This advantage is most significant on max-cut, where $\vctfmpnn$ outperforms $\vcmpnn$, $\vctmpnn$, $\deltatmpnn$, and $\vcign$ by orders of magnitude, revealing a clear expressivity hierarchy and the incapacity in those weaker baselines. To further evaluate the quality of the predicted solutions, we compute the relative objective gap over the test set $\mathcal{D}$, defined as
\begin{equation*}
\dfrac{1}{\left| \mathcal{D} \right|} \sum_{I \in \mathcal{D}} \left|\dfrac{\left(\text{obj}(I) - \text{obj}^*(I)\right)}{\text{obj}^*(I)} \right| \cdot 100\%,
\end{equation*}
where $\text{obj}(I)$ denotes the predicted objective value and $\text{obj}^*(I)$ the optimal value for instance $I$. Furthermore, the predicted solution matrix $\vec{X}$ can be projected onto the PSD cone, as shown in \Cref{alg:psd_projection}. 
The results in \Cref{tab:main} mirror the test loss trends. For the raw objective gap, $\vctfmpnn$ outperforms almost all baselines across all tasks, except slightly worse than $\deltatmpnn$ on max 2-SAT. However, the objective gaps for $\vctfmpnn$ and $\vcet$ after PSD projection remain low, indicating that their predictions are not only accurate in objective value but also geometrically close to the PSD cone. In sharp contrast, models with weaker expressivity often produce solutions far from that. This is evidenced by the drastic degradation in their performance after projection. For instance, the gap for $\vcmpnn$ on max-cut rockets from  $\sim 1.3\%$ to over 26\%, with similar collapses observed on max 2-SAT. Interestingly, $\vcmpnn$, $\vctmpnn$ and $\vcign$ fail on the regular graph dataset, producing identical losses and gaps, but $\deltatmpnn$ shows significant advantage over them, reflecting our theory of their hierarchy in \cref{sec:delta-2-wl-update}. 

\begin{table}[htb!]
\caption{Mean absolute residuals on constraints of max-cut problems on test set, repeated with 5 seeds.}
\label{tab:ax-b}
\centering
\resizebox{0.4\textwidth}{!}{
\begin{tabular}{lcc}
\toprule
Model & Cons. vio. & Proj. cons. vio. \\
\midrule
$\vcmpnn$ & \textbf{0.001$\pm$\scriptsize0.000} & 0.298$\pm$\scriptsize0.001  \\
$\vctmpnn$ & 0.016$\pm$\scriptsize0.006 & 0.180$\pm$\scriptsize0.001  \\
$\deltatmpnn$ & 0.006$\pm$\scriptsize0.001 & 0.069$\pm$\scriptsize0.019  \\
$\vcign$ & \textbf{0.001$\pm$\scriptsize0.000} & 0.304$\pm$\scriptsize0.001 \\
$\vctfmpnn$ & 0.003$\pm$\scriptsize0.001 & \textbf{0.003$\pm$\scriptsize0.001} \\
$\vcet$ & \textbf{0.001$\pm$\scriptsize0.000} & \textbf{0.003$\pm$\scriptsize0.001} \\
\bottomrule
\end{tabular}
}
\end{table}

Besides loss and objective, we also probe the structural feasibility of the predictions by measuring the mean absolute residuals on constraints for each problem instance:
\begin{equation*}
\dfrac{1}{m} \sum_{k \in [m]} \left| \ip{\vec{A}_k}{\vec{X}} - b_k \right|
\end{equation*}
and averaged over the dataset. We show the results on max-cut problem as representative in \Cref{tab:ax-b}. While some weaker methods show low level of constraint violation, such as $\vcmpnn$ and $\vcign$, the projected solutions exhibit a huge gap, indicating their predicted solutions are far from feasible region. In comparison, $\vctfmpnn$ maintains negligible violation ($\sim 0.003$) both before and after projection, demonstrating that it successfully approximates the true geometry of the feasible solution. For more results on constraint violation, see \cref{tab:ax-b2}. 

\begin{table}[t]
\caption{Training loss on \textsc{SdpLib} instances. The best results across all are highlighted.}
\label{tab:sdplib_pred}
\centering
\resizebox{0.45\textwidth}{!}{
\begin{tabular}{lccc}
\toprule
\multirow{2}{*}{Name}  & \multicolumn{3}{c}{Loss}  \\
& $\vcmpnn$ & $\vctmpnn$ & $\vctfmpnn$ \\
\midrule
MCP100  & 0.203$\pm$\scriptsize0.000 & 0.199$\pm$\scriptsize0.004 & \textbf{1.974e-4$\pm$\scriptsize0.000} \\
MCP124-1  & 0.225$\pm$\scriptsize0.000 & 0.224$\pm$\scriptsize0.000 & \textbf{1.546e-4$\pm$\scriptsize0.000} \\
MCP124-2  & 0.220$\pm$\scriptsize0.000 & 0.218$\pm$\scriptsize0.002 & \textbf{2.303e-4$\pm$\scriptsize0.000} \\
MCP124-3 & 0.215$\pm$\scriptsize0.000 & 0.214$\pm$\scriptsize0.002 & \textbf{3.076e-4$\pm$\scriptsize0.000} \\
MCP124-4 & 0.270$\pm$\scriptsize0.000 & 0.272$\pm$\scriptsize0.007 & \textbf{3.374e-4$\pm$\scriptsize0.000}  \\
MCP250-1 & 0.215$\pm$\scriptsize0.000 & 0.215$\pm$\scriptsize0.000 & \textbf{3.034e-4$\pm$\scriptsize0.000}  \\
MCP250-2 & 0.158$\pm$\scriptsize0.015 & 0.160$\pm$\scriptsize0.007 & \textbf{4.176e-4$\pm$\scriptsize0.000}  \\
MCP250-3 & 0.155$\pm$\scriptsize0.000 & 0.156$\pm$\scriptsize0.000 & \textbf{6.266e-4$\pm$\scriptsize0.000}  \\
MCP250-4 & 0.182$\pm$\scriptsize0.013 & 0.186$\pm$\scriptsize0.004 & \textbf{5.892e-4$\pm$\scriptsize0.000}  \\
MCP500-1 & 0.118$\pm$\scriptsize0.000 & 0.118$\pm$\scriptsize0.001 & \textbf{4.536e-4$\pm$\scriptsize0.000} \\
MCP500-2 & 0.123$\pm$\scriptsize0.000 & 0.124$\pm$\scriptsize0.001 & \textbf{4.948e-4$\pm$\scriptsize0.000}   \\
MCP500-3 & 0.121$\pm$\scriptsize0.000 & 0.122$\pm$\scriptsize0.000 & \textbf{6.745e-4$\pm$\scriptsize0.000}   \\
MCP500-4 & 0.126$\pm$\scriptsize0.000 & 0.130$\pm$\scriptsize0.003 & \textbf{9.761e-4$\pm$\scriptsize0.000}   \\
\bottomrule
\end{tabular}
}
\end{table}

We compare $\vcmpnn$, $\vctmpnn$ and $\vctfmpnn$ on \textsc{SdpLib}. \Cref{tab:sdplib_pred} reports the training MSE loss. For more complete results, see \cref{tab:sdplib_pred_full}. The results reveal the dramatic gap in trainability: $\vcmpnn$ and $\vctmpnn$ fail to even overfit the training data, observing from the high losses. In contrast, $\vctfmpnn$ achieves negligible training loss around $10^{-4}$ magnitude.

Overall, these results align closely with our theory: $\vctfmpnn$ and $\vcet$, with $\vctfwl$ expressive power, not only approximate objective values more accurately but also produce solutions closer to the feasible region, showing their ability to learn high-quality SDP solutions.

\paragraph{Timing}
As detailed in \Cref{tab:timing}, classical solvers' runtime scales aggressively with problem size, especially MOSEK. In contrast, neural inference time is negligible and exhibit minimal growth, due to GPU parallelism. 
Among the neural architectures, $\vctfmpnn$ exhibits an optimal balance between efficiency and expressivity. It is only marginally slower than $\vcmpnn$, yet significantly faster than $\vcet$, particularly as problem size increases. Finally, the warm-start experiments demonstrate the practical value of $\vctfmpnn$: initializing SCS with its prediction consistently reduces convergence time, e.g., from 3.35s to 2.09s on the largest instances, highlighting the potential application of neural approximation into classical optimization. For more timing results on SCS and our neural approach, see \cref{tab:ax-b2}. 

\begin{table}[t]
\caption{Timing (in seconds) on various sizes of max-cut problem.}
\label{tab:timing}
\centering
\resizebox{.45\textwidth}{!}{
\begin{tabular}{lccc}
\toprule
\multirow{2}{*}{Model} & \multicolumn{3}{c}{\textbf{Sizes}} \\
& $50\times50$ & $100\times100$ & $150\times150$ \\
\midrule 
SDPLR & 0.111$\pm$\scriptsize0.050 & 0.195$\pm$\scriptsize0.041 &  0.285$\pm$\scriptsize0.062 \\
MOSEK &  0.206$\pm$\scriptsize0.008 & 4.206$\pm$\scriptsize0.151 & 28.839$\pm$\scriptsize1.814 \\
SCS &  0.061$\pm$\scriptsize0.004 & 0.386$\pm$\scriptsize0.028 & 3.357$\pm$\scriptsize0.240 \\
SCS (Warm s.) &  0.042$\pm$\scriptsize0.005 & 0.280$\pm$\scriptsize0.041 & 2.092$\pm$\scriptsize0.298  \\
\midrule
{$\vcmpnn$} & \textbf{0.006$\pm$\scriptsize0.000}  & \textbf{0.006$\pm$\scriptsize0.000} & \textbf{0.008$\pm$\scriptsize0.000}  \\
{$\vctmpnn$}  & 0.009$\pm$\scriptsize0.000 & 0.009$\pm$\scriptsize0.000 & 0.012$\pm$\scriptsize0.000 \\
$\deltatmpnn$  & 0.011$\pm$\scriptsize0.000 & 0.012$\pm$\scriptsize0.002 & 0.017$\pm$\scriptsize0.003 \\
{$\vcign$} & 0.010$\pm$\scriptsize0.000 & 0.010$\pm$\scriptsize0.000 & 0.015$\pm$\scriptsize0.000 \\
{$\vctfmpnn$} & 0.009$\pm$\scriptsize0.000 & 0.010$\pm$\scriptsize0.000 & 0.013$\pm$\scriptsize0.000 \\
{$\vcet$}  & 0.010$\pm$\scriptsize0.000 & 0.020$\pm$\scriptsize0.000 & 0.050$\pm$\scriptsize0.001 \\
\bottomrule
\end{tabular}
}
\end{table}

We extend our timing analysis to the \textsc{SdpLib}, where problem difficulty varies significantly. \Cref{tab:sdplib_timing} shows the timing of the SCS solver versus our $\vctfmpnn$ inference, as well as the performance gains achieved by using the neural prediction to warm-start the solver. The results underscore the scalability of our approach. While the classical solver's runtime grows rapidly over 5 minutes for the hardest instance (MCP500-1), the $\vctfmpnn$ inference time remains negligible, completing under 0.15 seconds across all cases. More importantly, initializing SCS with the $\vctfmpnn$ prediction consistently accelerates convergence, yielding runtime reductions up to 80.7\%. This confirms that $\vctfmpnn$ is able to capture high-quality approximation of the global optimum, significantly reducing the workload for the iterative solver. 

\begin{table}[t]
\caption{Timing (in seconds) and warm start the solver on \textsc{SdpLib}.}
\label{tab:sdplib_timing}
\centering
\resizebox{\columnwidth}{!}{
\begin{tabular}{lcccc}
\toprule
Name & SCS & $\vctfmpnn$ & Warm start & Improvement \\
\midrule
MCP100 & 0.560$\pm$\scriptsize0.005 & 0.015$\pm$\scriptsize0.001 & 0.287$\pm$\scriptsize0.005 & 48.7\% \\
MCP124-1 & 3.243$\pm$\scriptsize0.006 & 0.016$\pm$\scriptsize0.000 & 0.623$\pm$\scriptsize0.067 & \textbf{80.7\%} \\
MCP124-2 & 1.152$\pm$\scriptsize0.003 & 0.016$\pm$\scriptsize0.000 & 0.532$\pm$\scriptsize0.002 & 53.8\% \\
MCP124-3  & 0.759$\pm$\scriptsize0.002 & 0.016$\pm$\scriptsize0.000 & 0.467$\pm$\scriptsize0.050 & 38.4\%\\
MCP124-4 & 0.546$\pm$\scriptsize0.005 & 0.016$\pm$\scriptsize0.000 & 0.484$\pm$\scriptsize0.056 & 11.3\% \\
MCP250-1 & 17.274$\pm$\scriptsize0.044 & 0.024$\pm$\scriptsize0.000 & 4.717$\pm$\scriptsize0.035 & 72.6\% \\
MCP250-2 & 12.308$\pm$\scriptsize0.035 & 0.024$\pm$\scriptsize0.000 & 5.971$\pm$\scriptsize1.029 & 51.5\% \\
MCP250-3 & 6.460$\pm$\scriptsize0.017 & 0.024$\pm$\scriptsize0.000 & 3.302$\pm$\scriptsize0.013 & 48.8\% \\
MCP250-4 & 4.556$\pm$\scriptsize0.011 & 0.024$\pm$\scriptsize0.000 & 3.316$\pm$\scriptsize0.035 & 27.2\%\\
MCP500-1 & 328.471$\pm$\scriptsize1.236 & 0.147$\pm$\scriptsize0.000 & 77.471$\pm$\scriptsize0.796 & 76.4\%  \\
MCP500-2 & 217.164$\pm$\scriptsize1.629 & 0.151$\pm$\scriptsize0.005 & 68.822$\pm$\scriptsize0.849 & 68.3\% \\
MCP500-3 & 134.705$\pm$\scriptsize0.198 & 0.147$\pm$\scriptsize0.000 & 40.586$\pm$\scriptsize0.542 & 69.8\% \\
MCP500-4 & 81.779$\pm$\scriptsize0.944 & 0.148$\pm$\scriptsize0.000 & 39.425$\pm$\scriptsize0.242 & 51.8\% \\
\bottomrule
\end{tabular}
}
\end{table}

\section{Conclusion}

We studied the expressive power required of GNNs to predict optimal solutions of linear SDPs, using the unique minimum-Frobenius-norm solution as the learning target. We proved that $\vcwl$ and higher-order $\vctwl$ can fail on SDPs, whereas $\vctfwl$-equivalent architectures are sufficient because they can emulate PDHG solver updates until convergence. Empirically, these more expressive architectures achieve the lowest prediction error and the smallest objective gaps across synthetic and real-world benchmarks. Moreover, their predictions can warm-start a first-order solver (SCS), yielding substantial speedups on \textsc{SdpLib} instances. \emph{Overall, our results sharpen the understanding of the expressivity required to learn solutions for a large, practically important class of convex optimization problems.}

\section*{Impact Statement}

This paper presents work aimed at advancing the field of machine learning. There are many potential societal consequences of our work, none of which we feel must be specifically highlighted here.

\section*{Acknowledgements}
Christopher Morris and Chendi Qian are partially funded by a DFG Emmy Noether grant (468502433) and RWTH Junior Principal Investigator Fellowship under Germany’s Excellence Strategy. We thank Erik Müller for crafting the figures.

\bibliography{example_paper}
\bibliographystyle{icml2026}

\newpage
\appendix
\onecolumn

\section{Limitations and future directions}
\label{sec:limits}

A key limitation of our study is that the required $\FWLk{2}$-level expressivity comes at a nontrivial computational cost, as it requires considering every variable pair. In addition, our theoretical results demonstrate only that there exist parameter assignments that allow recovery of the optimal low-norm solution, and do not guarantee that gradient-descent-based methods will converge. In addition, our expressivity guarantees are non-uniform, in the sense that we only show the existence of a parameter for a given SDP problem instance size and do not shed light on the architectures' size-generalization capabilities~\citep{Lev+2025}. 

\emph{Looking forward}, we identify three key directions. First, we propose designing efficient architectures for specific SDP subclasses where lower-order expressivity suffices. Second, the rigorous alignment between our architecture and the PDHG algorithm opens a promising avenue for self-supervised primal-dual learning \citep{park2023self,tanneau2024dual}. Rather than relying on supervised learning, future models could be trained by directly minimizing algorithmic residuals or duality gaps. Finally, we aim to extend this framework to broader conic programs and integrate these learned components into combinatorial optimization pipelines, such as branch-and-cut frameworks.

\section{More background}\label{sec:extended_background}

\subsection{Higher order Weisfeiler--Leman}
\label{sec:more_wl}
We introduce some standard notations first. For $n \geq 1$, let $[n] \coloneqq \{ 1, \dotsc, n \} \subset \mathbb{N}$. We use $\dgal{ \dots\ }$ to denote multisets, i.e., the generalization of sets allowing for multiple instances of each of its elements. A \new{graph} $G$ is a pair $(V(G),E(G))$ with \emph{finite} sets of \new{vertices} $V(G)$ and \new{edges} $E(G) \subseteq \{ \{u,v\} \subseteq V(G) \mid u \neq v \}$. For ease of notation, we denote the edge $\{u,v\}$ in $E(G)$ by $(u,v)$ or $(v,u)$. A \new{labeled graph} $G$ is a triple $(V,E,l)$ with \new{(node) coloring} or \new{label function} $l \colon V(G) \to \mathbb{N}$. The \new{neighborhood} of $v$ in $G$ is denoted by $N(v)\coloneqq \{ u \in V(G) \mid \{ v, u \} \in E(G) \}$. 

Two graphs $G$ and $H$ are \new{isomorphic} and we write $G \simeq H$ if there exists a bijection $\varphi \colon V(G) \to V(H)$ that preserves the adjacency relation, i.e., $(u,v) \in E(G) \iff (\varphi(u),\varphi(v)) \in E(H)$. Then $\varphi$ is an \new{isomorphism} between $G$ and $H$. In the case of labeled graphs, we additionally require that $l(v) = l(\varphi(v))$ for $v$ in $V(G)$. We further define the atomic type $\mathsf{atp}\colon V(G)^k \to \bbN$ such that $\mathsf{atp}(\vec{v}) = \mathsf{atp}(\vec{w})$ for $\vec{v},\vec{w} \in V(G)^k$ if and only if the mapping $\varphi\colon V(G)^k \to V(G)^k$ where $v_i \to w_i$ induces a partial isomorphism, i.e., we have $v_i = v_j \iff w_i = w_j$ and $(v_i,v_j) \in E(G) \iff (\varphi(v_i),\varphi(v_j)) \in E(G)$.

The $\WLk{1}$ or color refinement is a simple heuristic for the graph isomorphism problem, originally proposed by~\citet{Wei+1968}. Intuitively, the algorithm determines if two graphs are non-isomorphic by iteratively coloring or labeling vertices. Given an initial coloring or labeling of the vertices of both graphs, e.g., their degree or application-specific information, in each iteration, two vertices with the same label get different labels if the number of identically labeled neighbors is not equal. If, after some iteration, the number of vertices annotated with a specific label is different in both graphs, the algorithm terminates and a stable coloring (partition) is obtained. We can then conclude that the two graphs are not isomorphic. It is easy to see that the algorithm cannot distinguish all non-isomorphic graphs~\citep{Cai+1992}. 

Formally, let $G = (V,E,l)$ be a labeled graph. In each iteration $t+1$, the $\WLk{1}$ computes a node coloring for node $v$ as $\vec{c}_v^t$, which depends on the coloring of the neighbors. That is, 
\begin{equation*}
\vec{c}^{t+1}_v \coloneqq \mathsf{hash} \left(\vec{c}^{t}_v,\dgal[\big]{\vec{c}^{t}_u \mid u \in N(v)} \right),
\end{equation*}
where $\mathsf{hash}$ injectively maps the above pair to a unique natural number, which has not been used in previous iterations. In iteration $0$, the coloring $\vec{c}^0_v \coloneqq l(v)$.

To test if two graphs $G$ and $H$ are non-isomorphic, we run the above algorithm in ``parallel'' on both graphs. If the two graphs have a different number of vertices colored $c$ in $\bbN$ at some iteration, the $\WLk{1}$ \new{distinguishes} the graphs as non-isomorphic. Moreover, if 
\begin{equation*}
\vec{c}^t_v = \vec{c}^t_w \iff \vec{c}^{t+1}_v = \vec{c}^{t+1}_w,
\end{equation*}
for all vertices $v$ and $w$ in $V(G)$, the algorithm terminates. For such $t$, we define the \new{stable coloring}
$\vec{c}^{\infty}_v = \vec{c}^t_v$ for $v$ in $V(G)$.

Due to the shortcomings of the $\WLk{1}$  or color refinement in distinguishing non-isomorphic graphs, several researchers~\citep{Bab1979,Imm+1990,Cai+1992,Gro2017,Mor+2019,Gro+2021} devised a more powerful generalization of the former, today known as the \new{$k$-dimensional Weisfeiler-Leman algorithm} ($\WLk{k}$). There are two versions of $\WLk{k}$ algorithm, known as \new{Folklore Weisfeiler-Leman} ($\FWLk{k}$) and \new{Oblivious Weisfeiler-Leman} that is often used in the graph learning community, both will be discussed in this section.

Intuitively, to surpass the limitations of the $\WLk{1}$, the $\FWLk{k}$ algorithm colors subgraphs instead of a single node. More precisely, given a graph $G$, it colors the tuples from $V(G)^k$ for $k \geq 2$ instead of the vertices. By defining a neighborhood between these tuples, we can define a coloring  similar to the $\WLk{1}$. Formally, let $G$ be a graph, and let $k \geq 2$. In each iteration $t \geq 0$, the algorithm, similarly to the $\WLk{1}$, computes a \new{coloring} $\vec{c}^t_{\vec{v}}$ for a tuple ${\vec{v}}$. In the first iteration, $t=0$, the tuples $\vec{v}, \vec{w} \in V(G)^k$ get the same color if they have the same atomic type, i.e., $\vec{c}^0_{\vec{v}} \coloneqq \mathsf{atp}(\vec{v})$. The update of the coloring is defined by
\begin{equation*}
\vec{c}^{t+1}_{\vec{v}} \coloneqq \mathsf{hash} \left(\vec{c}^t_{\vec{v}}, \dgal[\Big]{\left(\vec{c}^t_{\phi_1(\vec{v},w)}, \dots,  \vec{c}^t_{\phi_k(\vec{v},w)}\right) \mid w \in V(G) } \right),
\end{equation*}
where
\begin{equation*}
\phi_j(\vec{v},w)\coloneqq (v_1, \dots, v_{j-1}, w, v_{j+1}, \dots, v_k).
\end{equation*}
That is, $\phi_j(\vec{v},w)$ replaces the $j$-th component of the tuple $\vec{v}$ with the node $w$. Hence, two tuples are \new{adjacent} or \new{$j$-neighbors}, with respect to a node $w$, if they are different in the $j$th component (or equal, in the case of self-loops). 

$\WLk{k}$ differs from $\FWLk{k}$ in the way the $j$-neighbors are collected and aggregated. Specifically, the update of $\WLk{k}$ is 
\begin{equation*}
\vec{c}^{t+1}_{\vec{v}} \coloneqq \mathsf{hash} \left(\vec{c}^t_{\vec{v}}, \left( \dgal{ \vec{c}^t_{\phi_1(\vec{v},w)} \mid w \in V(G) }, \cdots, \dgal{ \vec{c}^t_{\phi_k(\vec{v},w)} \mid w \in V(G) } \right) \right).
\end{equation*}

Again, we run the $\WLk{k}$ or $\FWLk{k}$ algorithm until convergence, i.e.,
\begin{equation*}
\vec{c}^t_{\vec{v}} = \vec{c}^t_{\vec{w}} \iff \vec{c}^{t+1}_{\vec{v}} = \vec{c}^{t+1}_{\vec{w}},
\end{equation*}
for all $\vec{v}$ and $\vec{w}$ in $V(G)^k$ holds, and call the partition of $V(G)^k$
induced by $\vec{c}^t$ the stable partition. For such $t$, we define the stable coloring $\vec{c}^{\infty}_{\vec{v}} = \vec{c}^t_{\vec{v}}$ for $\vec{v}$ in $V(G)^k$. Hence, two tuples $\vec{v}$ and $\vec{w}$ with the same color in iteration $(t-1)$ get different colors in iteration $t$ if there exists a $j$ in $[k]$ such that the number of $j$-neighbors of $\vec{v}$ and $\vec{w}$, respectively, colored with a certain color is different. 

To test whether two graphs $G$ and $H$ are non-isomorphic, we run the $\FWLk{k}$ or $\WLk{k}$ in ``parallel'' on both graphs. Then, if the two graphs have a different number of vertices colored $c$ in $\bbN$, the $\WLk{k}$ \textit{distinguishes} the graphs as non-isomorphic. 

Finally, we characterize the relative expressive power of these algorithms. We say that an algorithm $\mathcal{A}$ \new{refines} $\mathcal{B}$, denoted $\mathcal{A} \sqsubseteq \mathcal{B}$, if every pair of graphs distinguished by $\mathcal{B}$ is also distinguished by $\mathcal{A}$. And we say an algorithm $\mathcal{A}$ \new{strictly refines} $\mathcal{B}$, denoted $\mathcal{A} \sqsubset \mathcal{B}$ if there exists a pair of graphs that cannot be distinguished by $\mathcal{B}$ but $\mathcal{A}$. Observing the update rules, $\FWLk{k}$ aggregates the joint structure of neighbors across all tuple positions, whereas $\WLk{k}$ aggregates each position independently. Consequently, $\FWLk{k}$ is strictly more expressive than $\WLk{k}$ for $k \ge 2$. In fact, it has been established that $\FWLk{k}$ possesses the same expressive power as $\WLk{(k+1)}$ \citep{Cai+1992, Gro2017}. Since increasing the dimension $k$ strictly increases discriminative power \citep{Cai+1992}, we obtain the following hierarchy of expressivity:
\begin{equation*}
\dots \equiv \FWLk{(k+1)} \sqsubset \WLk{(k+1)} \equiv \FWLk{k} \sqsubset \WLk{k} \equiv \FWLk{(k-1)} \sqsubset \dots
\end{equation*}

\section{Omitted proofs}
\label{sec:omit_proofs}

\begin{proposition}[Restatement of \Cref{thm:sdp_unique}]
The SDP problem defined in \Cref{eq:standard_sdp} has a unique primal solution $\vec{X}^*$ with the minimum Frobenius norm $\lVert \vec{X}^* \rVert_{\text{F}}^2$.
\end{proposition}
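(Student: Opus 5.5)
The plan is to show that the set of optimal solutions is a nonempty, closed, convex subset of the Hilbert space $(\mathbb{S}^n, \ip{\cdot}{\cdot})$, and then to invoke strict convexity of the squared Frobenius norm. Existence of the minimum-norm element will come from projecting the origin onto this set.

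\textbf{Step 1 (the optimal set is nonempty).} By the standing assumption in the footnote to \Cref{eq:standard_sdp}, the minimum is attained. Let $p^*$ denote the optimal value. Then the optimal set
\begin{equation*}
\mathcal{S} \coloneqq \{\vec{X} \in \mathbb{S}^n_+ \mid \Acal(\vec{X}) = \vec{b},\ \ip{\vec{C}}{\vec{X}} = p^*\}
\end{equation*}
is nonempty.

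\textbf{Step 2 (the optimal set is closed and convex).} $\mathcal{S}$ is the intersection of the closed convex cone $\mathbb{S}^n_+$ with the affine subspace given by $m+1$ linear equalities. An intersection of closed convex sets is closed and convex, so $\mathcal{S}$ is closed and convex. Note that we never need $\mathcal{S}$ to be bounded.

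\textbf{Step 3 (existence of a minimum-norm solution).} Pick any $\vec{X}_0 \in \mathcal{S}$. Any minimizer of $\Fnorm{\vec{X}}^2$ over $\mathcal{S}$ must lie in
\begin{equation*}
\mathcal{S} \cap \{\vec{X} \mid \Fnorm{\vec{X}} \le \Fnorm{\vec{X}_0}\}.
\end{equation*}
This set is closed, bounded and nonempty, hence compact in the finite-dimensional space $\mathbb{S}^n$. The function $\Fnorm{\cdot}^2$ is continuous, so by Weierstrass it attains its minimum on this set. That minimizer also minimizes the norm over all of $\mathcal{S}$.

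\textbf{Step 4 (uniqueness).} Suppose $\vec{X}_1 \ne \vec{X}_2$ are both minimizers over $\mathcal{S}$, each with squared norm $r^2$. Their midpoint lies in $\mathcal{S}$ by convexity. The parallelogram law gives
\begin{equation*}
\Fnorm{\tfrac{1}{2}(\vec{X}_1+\vec{X}_2)}^2 = r^2 - \tfrac14\Fnorm{\vec{X}_1-\vec{X}_2}^2 < r^2,
\end{equation*}
which contradicts minimality. Hence the minimizer is unique.

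The argument is routine. The only point needing care is Step 1, that is, why $\mathcal{S}$ is nonempty. Attainment of the optimum is exactly what the paper assumes when it writes $\min$ rather than $\inf$ in \Cref{eq:standard_sdp}, and the proof will invoke that assumption explicitly. Without it, an SDP can have a finite infimum that is never attained, $\mathcal{S}$ would be empty, and the statement would fail.
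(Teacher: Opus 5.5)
Your proof is correct, and its uniqueness step (the midpoint of two distinct minimum-norm optimal solutions is again feasible and optimal but has strictly smaller Frobenius norm, by the parallelogram identity) is the same argument the paper uses. The one difference is that you also prove that a minimum-norm optimal solution exists, using the attainment assumption, closedness of the optimal set, and compactness; the paper's proof takes that existence for granted.
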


\begin{proof}[Proof for \Cref{thm:sdp_unique}]
We prove by contradiction. Suppose there exists another feasible solution $\vec{X}' \neq \vec{X}^* \in \mathbb{S}^n_+$, such that $\Acal(\vec{X}') = \vec{b}$, and $\ip{\vec{C}}{\vec{X}'} = \ip{\vec{C}}{\vec{X}^*}$, and $\lVert \vec{X}' \rVert_{{F}}^2 = \lVert \vec{X}^* \rVert_{{F}}^2$. Note that we are not assuming $\lVert \vec{X}' \rVert_{{F}}^2 < \lVert \vec{X}^* \rVert_{{F}}^2$ or $\ip{\vec{C}}{\vec{X}'} < \ip{\vec{C}}{\vec{X}^*}$, otherwise the $\vec{X}'$ will replace $\vec{X}^*$ to be the optimal solution with least Frobenius norm. Assuming the existence of $\vec{X}'$, we can find a third solution $\vec{X}'' = \dfrac{1}{2} \left( \vec{X}^* + \vec{X}' \right)$, which is feasible and optimal, in that:
\begin{itemize}
\item $\vec{X}'' \in \mathbb{S}^n_+$, as $\vec{X}^* \in \mathbb{S}^n_+$ and $\vec{X}' \in \mathbb{S}^n_+$, and the PSD cone is a convex cone.
\item $\Acal(\vec{X}'') = \dfrac{1}{2} \Acal(\vec{X}^*) + \dfrac{1}{2} \Acal(\vec{X}') = \vec{b}$, i.e., $\vec{X}''$ is feasible w.r.t. the constraints. 
\item $\ip{\vec{C}}{ \vec{X}''} = \dfrac{1}{2}\ip{\vec{C}}{\vec{X}^*} + \dfrac{1}{2} \ip{\vec{C}}{\vec{X}'} = \ip{\vec{C}}{\vec{X}^*} = \ip{\vec{C}}{\vec{X}'}$, i.e., $\vec{X}''$ has the same objective value. 
\end{itemize}

Besides, $\lVert \vec{X}'' \rVert_{{F}}^2 = \dfrac{1}{4} \lVert \vec{X}^* + \vec{X}' \rVert_{{F}}^2 \leq \dfrac{\lVert \vec{X}^* \rVert_{{F}}^2 + \lVert \vec{X}' \rVert_{{F}}^2}{2} = \lVert \vec{X}^* \rVert_{{F}}^2 = \lVert \vec{X}' \rVert_{{F}}^2$, and equality holds only when $\vec{X}^* = \vec{X}'$. 
Since we assume $\vec{X}^* \neq \vec{X}'$, this implies $\vec{X}''$ has a strictly lower Frobenius norm than $\vec{X}^*$ and $\vec{X}''$, contradicting the assumption that $\vec{X}^*$ and $\vec{X}'$ are minimum norm solutions. Thus, the minimum norm solution must be unique.
\end{proof}

\subsection{Proof for failure of \textsf{VC-WL} and \textsf{VC-2-WL} }

\begin{proposition}[Restatement of \Cref{prop:vcwl_fail}]
The $\vcwl$ fails to represent linear SDP solutions. That is, there exist instances where the stable coloring of the $\vcwl$ satisfies $\vec{v}^{\infty}_{ij} = \vec{v}^{\infty}_{pq}$ for distinct variable indices $(i,j)$ and $(p,q)$, yet the entries in the unique optimal solution differ, i.e., $X^*_{ij} \neq X^*_{pq}$. 
\end{proposition}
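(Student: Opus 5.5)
The plan is to exhibit one explicit SDP instance in which two off-diagonal variables receive identical $\vcwl$ colors but take different values in $\vec{X}^*$. The weakness we exploit is that $\vcwl$ couples variable nodes only through constraint nodes. An off-diagonal variable $X_{ij}$ that appears in no constraint has $N(ij)=\emptyset$, so its color never changes from $\vec{v}^0_{ij}=\mathsf{init}_{\text{v}}(C_{ij},0)$. The natural family is the max-cut relaxation: $\min \ip{\vec{C}}{\vec{X}}$ subject to $\ip{\vec{e}_k\vec{e}_k\trans}{\vec{X}}=X_{kk}=1$ for $k\in[n]$. In this family only diagonal entries are constrained.

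\textbf{Construction.} Take $n=5$ and let $\vec{C}$ be the adjacency matrix of the disjoint union of a triangle on $\{1,2,3\}$ and an edge $\{4,5\}$. Let $m=5$, $\vec{A}_k=\vec{e}_k\vec{e}_k\trans$ and $b_k=1$.

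\textbf{Step 1: the colors of $(1,2)$ and $(4,5)$ agree.} For every off-diagonal $(i,j)$ we have $A_{k,ij}=0$ for all $k$, so $N(ij)=\emptyset$. By induction on $t$, $\vec{v}^t_{ij}=\mathsf{hash}(\vec{v}^{t-1}_{ij},\dgal{})$ depends only on $\vec{v}^0_{ij}=\mathsf{init}_{\text{v}}(C_{ij},0)$. Since $C_{12}=C_{45}=1$, it follows that $\vec{v}^\infty_{12}=\vec{v}^\infty_{45}$. The diagonal/constraint part of the coloring never feeds into these off-diagonal nodes, so it needs no analysis. It is in any case uniform, since every diagonal node has one neighbor with weight $1$ and $b_k=1$.

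\textbf{Step 2: every optimal solution has $X^*_{12}\ne X^*_{45}$.} Factor a feasible $\vec{X}$ as a Gram matrix of unit vectors $\vec{u}_1,\dots,\vec{u}_5$. The objective is $2\big(\langle \vec{u}_1,\vec{u}_2\rangle+\langle \vec{u}_1,\vec{u}_3\rangle+\langle \vec{u}_2,\vec{u}_3\rangle\big)+2\langle \vec{u}_4,\vec{u}_5\rangle$, and the two blocks decouple.
\begin{itemize}
\item Triangle block: $\|\vec{u}_1+\vec{u}_2+\vec{u}_3\|^2=3+2\sum\langle \vec{u}_i,\vec{u}_j\rangle\ge 0$, so the triangle sum is at least $-3/2$. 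Equality holds iff $\vec{u}_1+\vec{u}_2+\vec{u}_3=0$, which is attainable with three unit vectors at $120^\circ$.
\item Consequence for $X^*_{12}$: if $\vec{u}_1+\vec{u}_2=-\vec{u}_3$, then $\|\vec{u}_1+\vec{u}_2\|^2=1$, which gives $\langle \vec{u}_1,\vec{u}_2\rangle=-1/2$.
\item Edge block: $\langle \vec{u}_4,\vec{u}_5\rangle\ge -1$, with equality iff $\vec{u}_5=-\vec{u}_4$.
\end{itemize}
Both minima are attained simultaneously, for example by placing the two blocks in orthogonal subspaces. Hence every optimal $\vec{X}$ satisfies $X_{12}=-1/2$ and $X_{45}=-1$. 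In particular this holds for the minimum-Frobenius-norm solution of \Cref{thm:sdp_unique}, so $X^*_{12}\ne X^*_{45}$.

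\textbf{Main obstacle.} The only delicate point is that the target is the unique minimum-norm solution, whose off-block entries $X_{i4}$ and the like are not determined by optimality alone. The argument avoids this by showing that the two compared entries are forced to the same values in every optimal solution. The step that needs care is therefore the equality case in the triangle bound. Everything else is immediate from the $\vcwl$ update rules in \Cref{eq:vcwl}.
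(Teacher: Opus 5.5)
Your proof is correct. It rests on the same mechanism as the paper's counterexample but uses a different instance and a different way of certifying the solution values.

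The paper uses a $3\times 3$ instance whose two constraints touch only the off-diagonal entries $(1,2)$ and $(1,3)$. It compares the diagonal entries $(1,1)$ and $(3,3)$. Like your $(1,2)$ and $(4,5)$, these have no constraint neighbors and identical $(C_{ij},\mathbb{I}_{i=j})$, so their $\vcwl$ colors are never split. The paper then only reports $X^*_{11}\approx 0.707$ and $X^*_{33}\approx 0.354$ numerically.

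Your version gains two things:
\begin{itemize}
\item It gives an exact, self-contained certificate. Your Gram-vector bound forces $X_{12}=-1/2$ and $X_{45}=-1$ in \emph{every} optimal solution, so the minimum-Frobenius-norm selection of \Cref{thm:sdp_unique} needs no separate computation.
\item It places the failure in the max-cut family used in the experiments. There it shows that $\vcwl$ colors every off-diagonal entry by $C_{ij}$ alone.
\end{itemize}

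The paper's instance is smaller. Its values can also be confirmed by hand via Cauchy--Schwarz: the optimum is unique, with $X_{11}=1/\sqrt{2}$ and $X_{33}=1/(2\sqrt{2})$. The paper leaves that verification implicit.
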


\begin{proof}[Proof for \Cref{prop:vcwl_fail}]
We prove by providing the following counter-example
\begin{equation*}
\vec{C} = \begin{bmatrix}
    1 & 1 & 0 \\ 1 & 1 & 0 \\ 0 & 0 & 1 \\
\end{bmatrix}, \quad
\vec{A}_{1} = \begin{bmatrix}
    0 & 1 & 0 \\ 1 & 0 & 0 \\ 0 & 0 & 0 \\
\end{bmatrix}, \quad
\vec{A}_{2} = \begin{bmatrix}
    0 & 0 & 1 \\ 0 & 0 & 0 \\ 1 & 0 & 0 \\
\end{bmatrix}, \quad
\vec{b} = \begin{bmatrix}
    1 & 1 \\
\end{bmatrix}
\end{equation*}
on which $\vcwl$ converges to colors $\begin{bmatrix}
    a & b & c \\ b & a & d \\ c & d & a \\
\end{bmatrix}$, where $\{a,b,c, \cdots\}$ are from an alphabet to represent colors. 
Note that the diagonal entries $(1,1)$ and $(3,3)$ \footnote{All matrix indices in this paper are 1-indexed.} are assigned the same color $a$. However, the unique optimal solution yields distinct values $X^*_{11} \approx 0.707$ and $X^*_{33} \approx 0.354$. $\vcwl$ fails because it treats variables independently and ignores the matrix structure. 
\end{proof}

\begin{remark}
From a measure-theoretic perspective, one might argue that a specific counterexample could merely reside on a set of measure zero, thereby still allowing a model to maintain universal approximation \emph{almost everywhere}. However, this defense is inadequate for SDP relaxations of CO problems. First, the input distributions of CO problems are not uniformly drawn from $\mathbb{R}^{n \times n}$. Rather, they are heavily concentrated on specific discrete topological structures. Second, our identified failure cases are not isolated points. As established in \cref{lem:scale_b}, scaling any failing instance by a scalar $\alpha > 0$ generates an uncountable infinite set (a ray) of failures. This specific construction represents just one of many systematic methods to expose these representational blind spots. Ultimately, the empirical collapse of the less expressive models observed in \cref{tab:main} confirms that this theoretical failure translates into a severe, practical expressivity limitation, rather than a negligible mathematical anomaly.
\end{remark}

\begin{proposition}[Restatement of \Cref{thm:2wl_fail}]
The $\vctwl$ fails to represent linear SDP solutions. That is, there exist instances where the stable colors under $\vctwl$ satisfy $\vec{v}^{\infty}_{ij} = \vec{v}^{\infty}_{pq}$ for distinct indices $(i,j)$ and $(p,q)$, yet the unique optimal solution entries differ $X^*_{ij} \neq X^*_{pq}$.
\end{proposition}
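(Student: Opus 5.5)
The plan is to exhibit an explicit counterexample: a max-cut-type SDP built on a graph for which the oblivious pair-refinement of $\vctwl$ is already stable at initialization, while the optimal solution takes different values on pairs that receive the same colour. The natural candidate is a $2$-regular graph that mixes bipartite and non-bipartite cycles. Concretely, I take $G$ on $12$ nodes as the disjoint union of a $6$-cycle $C_6$ and two triangles $C_3 \sqcup C_3$, with adjacency matrix $\vec{W}$. The SDP is
\begin{equation*}
\min_{\vec{X}\in\SnnP} \ip{\vec{W}}{\vec{X}} \quad\text{s.t.}\quad X_{ii}=1,\ i\in[12],
\end{equation*}
so $\vec{C}=\vec{W}$, $\vec{A}_k = \vec{e}_k\vec{e}_k\trans$ and $b_k=1$.

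\textbf{Step 1 (colouring).} I verify that the initial partition is stable. Initially, pairs $(i,j)$ fall into three classes according to $(C_{ij},\mathbb{I}_{i=j})$: diagonal, edge and non-edge. All $m=12$ constraint nodes start with the same colour $b_k=1$, and each is attached to exactly one diagonal variable with weight $1$. In the $\vctwl$ update of \Cref{def:2wl}, the row multiset $\dgal{\vec{v}_{uj}\mid u\in[n]}$ of any pair contains exactly one diagonal colour, two edge colours and nine non-edge colours, because $G$ is $2$-regular. The same holds for the column multiset. The constraint multiset is a single $(1,\vec{c}_k)$ for diagonal pairs and empty otherwise. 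Hence no class splits, the constraint colours do not split either, and symmetrization changes nothing. In particular, every edge of $C_6$ has the same stable colour as every edge of a triangle.

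\textbf{Step 2 (the optimum).} Since the objective and constraints decouple across components, replacing the off-component blocks of any feasible $\vec{X}$ by zero keeps it PSD, feasible and equally good, and strictly decreases the Frobenius norm unless those blocks already vanish. So the minimum-norm optimum from \Cref{thm:sdp_unique} is block diagonal, and it suffices to solve each component separately.
\begin{itemize}
\item On $C_6$, each edge entry satisfies $X_{ij}\ge -1$. The value $-1$ on every edge is attained by $\vec{x}\vec{x}\trans$ with the $\pm1$ bipartition vector $\vec{x}$. Writing $X_{ij}=\ip{\vec{u}_i}{\vec{u}_j}$ with unit vectors $\vec{u}_i$, equality forces $\vec{u}_j=-\vec{u}_i$ along edges, so this block is uniquely determined.
\item On a triangle, $0\le\lVert\vec{u}_1+\vec{u}_2+\vec{u}_3\rVert^2 = 3+2\sum_{\text{edges}}X_{ij}$ gives an edge sum of at least $-3/2$. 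This bound is attained, and equality requires the sum of the vectors to vanish, which forces $X_{ij}=-1/2$ on every edge.
\end{itemize}
Thus $X^*_{ij}=-1$ on $C_6$ edges and $X^*_{pq}=-1/2$ on triangle edges, although these pairs share a $\vctwl$ colour.

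\textbf{Main obstacle.} The delicate point is Step 2: the statement concerns the \emph{minimum-norm} optimum, so I must check both that the within-component optimum is unique and that the cross blocks vanish. The block-zeroing argument handles the latter, because a principal block-diagonal truncation of a PSD matrix remains PSD. If the $\vctwl$ stability check in Step 1 turned out to be subtler than expected (for instance, through the constraint–variable interaction), I would fall back on the regular-graph intuition behind the max-cut (reg) experiments and recheck the multisets directly; the regularity argument above is designed to make that check trivial.
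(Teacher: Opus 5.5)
Your proof is correct. It uses the same mechanism as the paper: pick $\vec{C}$ so that every row and every column carries the same multiset of initial colours. Then the $\vctwl$ partition is stable from initialization, and two off-diagonal pairs with equal $C_{ij}$ are never separated.

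Where you differ is the instance and, more importantly, how the optimum is certified.
\begin{itemize}
\item The paper uses a dense $6\times 6$ Latin square with the single constraint $\Tr(\vec{X})=1$. It then simply reports numerically computed entries $X^*_{15}\approx-0.115$ and $X^*_{24}\approx-0.172$.
\item You use the max-cut SDP on $C_6\sqcup C_3\sqcup C_3$ with diagonal constraints. Here $2$-regularity gives the uniform multisets, and the optimum is derived by hand. Hexagon edges get $-1$ from $|X_{ij}|\le 1$ and the bipartite rank-one witness. Triangle edges get $-1/2$ from $\lVert \vec{u}_1+\vec{u}_2+\vec{u}_3\rVert^2\ge 0$, whose equality case yields the three linear equations $1+X_{ij}+X_{ik}=0$.
\end{itemize}
Your block-truncation step handles the minimum-norm selection correctly. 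Your argument in fact gives slightly more: in every optimal solution each diagonal block must be optimal for its own component, so the edge values $-1$ and $-1/2$ are forced independently of which optimum is selected. This makes your counterexample fully verifiable without numerics. It also explains directly why the weaker models collapse on the regular-graph max-cut data.

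What the paper's choice buys is reuse of a dense instance. Most notably, the same Latin square serves as the failure example for $\delta\text{-}\mathsf{VC}\text{-2-}\mathsf{WL}$, because density makes the $\mathsf{adj}$ indicator constant. Your sparse instance would not work there. For a triangle edge $(i,j)$, the other neighbour $u$ of $i$ contributes an edge pair $(u,j)$ with $\mathsf{adj}(C_{ui})=1$. For a hexagon edge, the corresponding pair is a non-edge. So the $\delta$-variant separates the two edge classes after one round.
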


\begin{proof}[Proof for \Cref{thm:2wl_fail}]
We prove this by construction, using a counterexample in which the objective matrix $\vec{C}$ forms a Latin square. Consider the following SDP instance with $n=6$,
\begin{equation*}
\vec{C} = \begin{bmatrix}
1 & 2 & 3 & 4 & 5 & 6 \\
2 & 1 & 4 & 5 & 6 & 3 \\
3 & 4 & 1 & 6 & 2 & 5 \\
4 & 5 & 6 & 1 & 3 & 2 \\
5 & 6 & 2 & 3 & 1 & 4 \\
6 & 3 & 5 & 2 & 4 & 1 \\
\end{bmatrix}, \quad
\vec{A}_1 = \vec{I}_6, \quad
\vec{b} = [1].
\end{equation*}
We focus on the entries $(1,5)$ and $(2,4)$. 
First, observe that their initial features are identical, i.e., $\vec{C}_{15} = \vec{C}_{24} = 5$, and both are non-diagonal entries. Secondly, observe that every row and column in $\vec{C}$ contains the set of values $\{1, \dots, 6\}$. Consequently, for \emph{any} index $(u,v)$, the row and column multisets are identical, i.e., 
\begin{equation*}
\dgal[\big]{\vec{v}^0_{uk} \mid k \in [n]} = \dgal[\big]{\vec{v}^0_{kv} \mid k \in [n]} = \dgal{1, 2, 3, 4, 5, 6}.
\end{equation*}
Since $\vctwl$ aggregates row and column multisets independently as in \Cref{def:2wl}, and both the node features and their neighborhoods are identical, the algorithm cannot distinguish $(1,5)$ from $(2,4)$ at initialization or any subsequent iteration. Thus, $\vec{v}^{\infty}_{15} = \vec{v}^{\infty}_{24}$ under $\vctwl$ algorithm. However, the unique minimum-norm solution yields $X^*_{15} \approx -0.115$ and $X^*_{24} \approx -0.172$. Hence, the inability of $\vctwl$ to distinguish these variables indicates it is insufficient for SDPs.
\end{proof}

\subsection{Proof for expressivity hierarchy}
\label{sec:proof_herarchy}

We first compare $\vcwl$ and $\vctwl$ by showing the following result.

\begin{proposition}
\label{prop:2wl_refine_vcwl}
The $\vctwl$ algorithm strictly refines $\vcwl$, denoted as $$\vctwl \sqsubset \vcwl.$$
\end{proposition}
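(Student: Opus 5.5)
We split the claim into two parts: (i) the refinement $\vctwl \sqsubseteq \vcwl$, and (ii) a witness instance on which $\vctwl$ distinguishes two variables that $\vcwl$ does not.

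For (i) we argue by induction on the iteration $t$. We show that for every $t$, equal $\vctwl$ colors imply equal $\vcwl$ colors. For variables this reads $\vec{v}^{t}_{ij} = \vec{v}^{t}_{pq}$ (under $\vctwl$) $\Rightarrow$ the same equality under $\vcwl$, and for constraints $\vec{c}^t_k = \vec{c}^t_l$ $\Rightarrow$ the same under $\vcwl$.

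The base case is immediate, since both algorithms use the identical initialization of \Cref{eq:vcwl}.

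For the inductive step, suppose $\vec{v}^{t}_{ij}=\vec{v}^{t}_{pq}$ under $\vctwl$. Injectivity of $\mathsf{hash}$ in \Cref{def:2wl} gives equal previous colors, $\vec{v}^{t-1}_{ij}=\vec{v}^{t-1}_{pq}$. It also gives equal multisets $\dgal{(A_{k,ij},\vec{c}^{t-1}_k)\mid k\in N(ij)}$. The row and column multisets we simply discard.

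The symmetrization step needs care. Since $\vec{v}^t_{ji}$ is overwritten by $\vec{v}^t_{ij}$, equality of colors at $(j,i)$ traces back to the hash at $(\min,\max)$ indices. This is harmless, because $\vcwl$ itself is symmetric: $A_{k,ij}=A_{k,ji}$ and $C_{ij}=C_{ji}$. An auxiliary induction shows $\vec{v}^t_{ij}=\vec{v}^t_{ji}$ under $\vcwl$, so we may always pass to the representative with $i\le j$.

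Applying the induction hypothesis elementwise, the multiset of pairs $(A_{k,ij},\vec{c}^{t-1}_k)$ maps onto the corresponding $\vcwl$ multiset. The coarsening of colors is a function, so equal multisets stay equal after applying it. This yields equal $\vcwl$ colors at $t$.

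The constraint update is literally the same rule in both algorithms, so the constraint case follows the same way. Passing to stable colorings, we run both algorithms for a common number of iterations exceeding both stabilization times, which gives $\vctwl\sqsubseteq\vcwl$.

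For (ii), strictness, we reuse the counterexample from the proof of \Cref{prop:vcwl_fail}. There $\vcwl$ assigns the same stable color $a$ to $(1,1)$ and $(3,3)$. We then compute $\vctwl$ on that instance directly.

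At $t=1$, row 1 of the initial colors is $\{(C=1,\text{diag}),(1,\text{off}),(0,\text{off})\}$. Row 3 is $\{(0,\text{off}),(0,\text{off}),(1,\text{diag})\}$. These multisets differ, so $\vec{v}^1_{11}\neq\vec{v}^1_{33}$, and refinement is monotone, so the two stay separated in the stable coloring. Hence $\vctwl$ separates a pair that $\vcwl$ does not, which combined with (i) gives strict refinement.

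The main obstacle is making the symmetrization step rigorous in the induction of (i), so that the overwrite $\vec{v}^t_{ji}\leftarrow\vec{v}^t_{ij}$ does not break the implication. This is exactly what the auxiliary symmetry lemma for $\vcwl$ handles.
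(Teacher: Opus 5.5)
Your proposal is correct and follows the paper's proof. You prove refinement by induction, discarding the row and column multisets so that the $\vctwl$ hash inputs reduce to the $\vcwl$ ones, and you get strictness from the counterexample of \Cref{prop:vcwl_fail}. Your explicit handling of the symmetrization overwrite and your direct $t=1$ computation separating $(1,1)$ from $(3,3)$ are more careful than the paper, which simply asserts the stable $\vctwl$ coloring.
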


\begin{proof}
Observing from the update functions of \Cref{eq:vcwl} and \Cref{def:2wl}, we notice that the update of constraint nodes is exactly the same. Now, consider the variable nodes. Observe that the information aggregated by the $\vcwl$ hash function for a variable $\vec{v}_{ij}$ consists solely of its constraint neighbors. By definition, the corresponding input for $\vctwl$ includes this constraint information as a subset, strictly augmented by the structural row and column multisets. 

We now formally prove $\vctwl \sqsubseteq \vcwl$, that is, for any iteration $t$, if $\vctwl$ fails to distinguish $\vec{v}_{ij}^t$ and $\vec{v}_{pq}^t$, so cannot $\vcwl$. First, for $t=0$, the color $\vec{v}_{ij}^0$ is basically the encoding of the problem $C_{ij}$, therefore if for $\vctwl$: $\vec{v}^0_{ij} = \vec{v}^0_{pq}$, it also holds for $\vcwl$. Then, for all $t > 0$, we have, by definition of $\vctwl$:
\begin{align*}
& \vec{v}^{t}_{ij} = \vec{v}^{t}_{pq} \\
& \implies \mathsf{hash} \Bigl( \vec{v}_{ij}^{t-1},\dgal[\big]{\vec{v}_{uj}^{t-1} \mid u \in [n]}, \dgal[\big]{\vec{v}_{iu}^{t-1} \mid u \in [n]}, \dgal[\big]{(A_{k, ij}, \vec{c}^{t-1}_k) \mid k \in N(ij) } \\
& = \mathsf{hash} \Bigl( \vec{v}_{pq}^{t-1},\dgal[\big]{\vec{v}_{uq}^{t-1} \mid u \in [n]}, \dgal[\big]{\vec{v}_{pu}^{t-1} \mid u \in [n]}, \dgal[\big]{(A_{k, pq}, \vec{c}^{t-1}_k) \mid k \in N(pq) } \\
& \implies \mathsf{hash} \Bigl( \vec{v}_{ij}^{t-1}, \dgal[\big]{(A_{k, ij}, \vec{c}^{t-1}_k) \mid k \in N(ij) }\Bigr) \\
& = \mathsf{hash} \Bigl( \vec{v}_{pq}^{t-1},\dgal[\big]{(A_{k, pq}, \vec{c}^{t-1}_k) \mid k \in N(pq) } \Bigr).
\end{align*}
Since this corresponds exactly to the update rule of $\vcwl$, it follows that $\vcwl$ also cannot distinguish them. Thus, $\vctwl \sqsubseteq \vcwl$.

To show the strict refinement, we pick the example in the proof of \Cref{prop:vcwl_fail} where $\vcwl$ fails to distinguish the variables $X^*_{11}$ and $X^*_{33}$. In contrast, $\vctwl$ converges to $\begin{bmatrix}
    a & b & c \\ b & e & d \\ c & d & f \\
\end{bmatrix}$, where $\{a,b,c, \cdots\}$ are from an alphabet to represent colors. 
Here, the diagonal entries are distinguished ($a \neq f$), allowing the $\vctwl$ algorithm to distinguish the solutions. 

Therefore, $\vctwl \sqsubset \vcwl$.
\end{proof}

Similarly, $\vctfwl \sqsubset \vcwl$ can be proven in exactly the same way with the same example; we omit the proof. 

Next, we show the hierarchy between $\vctfwlpp$ and $\vctwl$. 

\begin{proposition}
The $\vctfwlpp$ algorithm strictly refines $\vctwl$, denoted as $$\vctfwlpp \sqsubset \vctwl.$$
\end{proposition}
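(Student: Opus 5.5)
We prove the two parts separately. First we show $\vctfwlpp \sqsubseteq \vctwl$ by induction on the iteration $t$. Then we show strictness by reusing the Latin-square instance from the proof of \Cref{thm:2wl_fail}. The ablation defining $\vctfwlpp$ is not reproduced above, so we assume its variable update is
\begin{equation*}
\vec{v}^t_{ij} \coloneqq \mathsf{hash}\Bigl(\vec{v}^{t-1}_{ij}, \dgal[\big]{(\vec{v}^{t-1}_{uj}, \vec{v}^{t-1}_{iu}) \mid u \in [n]}, \dgal[\big]{(A_{k,ij}, \vec{c}^{t-1}_k) \mid k \in N(ij)}\Bigr).
\end{equation*}
That is, it aggregates ordered tuples instead of unordered multisets. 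We also assume it uses the same initialization \Cref{eq:vcwl} and the same constraint update as $\vctwl$.

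\textbf{Refinement.} The inductive claim is: for every $t$ and all indices, $\vctfwlpp$-equality implies $\vctwl$-equality, both for variable colors $\vec{v}^t_{ij}=\vec{v}^t_{pq}$ and for constraint colors $\vec{c}^t_k=\vec{c}^t_l$.
\begin{itemize}
\item Base case, $t=0$: both algorithms use the same injective initialization, so this is immediate.
\item Constraint step: the updates are syntactically identical. Equal $\vctfwlpp$ constraint colors give equal multisets $\dgal{(A_{k,ij},\vec{v}^{t-1}_{ij})}$ under $\vctfwlpp$ colors. By the induction hypothesis this yields equal multisets under $\vctwl$ colors, because applying a function elementwise to equal multisets gives equal multisets.
\item Variable step: injectivity of $\mathsf{hash}$ gives equality of the previous colors, of the constraint multisets, and of the multisets of ordered pairs $(\vec{v}^{t-1}_{uj},\vec{v}^{t-1}_{iu})$. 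Projecting this pair multiset onto its first and second coordinates recovers the column multiset $\dgal{\vec{v}^{t-1}_{uj}}$ and the row multiset $\dgal{\vec{v}^{t-1}_{iu}}$ separately. These are exactly the inputs of $\vctwl$ in \Cref{def:2wl}, after again translating colors via the induction hypothesis.
\end{itemize}
This mirrors the proof of \Cref{prop:2wl_refine_vcwl}.

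\textbf{Main obstacle: symmetrization.} The delicate point is the step $\vec{v}^t_{ji}\leftarrow\vec{v}^t_{ij}$ for $i<j$ in $\vctwl$. It means the $\vctwl$ color of $(j,i)$ is computed from the neighborhoods of $(i,j)$, not of $(j,i)$. To handle this, I would strengthen the inductive claim to: $\vec{v}^t_{ij}=\vec{v}^t_{pq}$ under $\vctfwlpp$ implies $\vec{v}^t_{ji}=\vec{v}^t_{qp}$ under $\vctfwlpp$. Given this, equality transfers to whichever orientation $\vctwl$ actually computes.

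The strengthened claim holds because $\vctfwlpp$ treats the transposed pair symmetrically. Assume $\vec{C}$ and every $\vec{A}_k$ are symmetric. The pair multiset of $(j,i)$ is $\dgal{(\vec{v}_{ui},\vec{v}_{ju})}$. Using the inductive hypothesis that $\vctfwlpp$ colors are transpose-invariant, $\vec{v}^{t-1}_{ab}=\vec{v}^{t-1}_{ba}$, this equals the coordinate-swapped pair multiset of $(i,j)$. The transpose-invariance itself is proved by the same induction: it holds at $t=0$ since $C_{ij}=C_{ji}$, and the swap argument propagates it to every $t$. If $\vctfwlpp$ is instead defined with explicit symmetrization, the claim is immediate.

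\textbf{Strictness.} Take the instance from the proof of \Cref{thm:2wl_fail}, where $\vctwl$ cannot separate $(1,5)$ and $(2,4)$. At $t=0$, colors are encoded by the entries of $\vec{C}$. The ordered-pair multiset of $(1,5)$ is
\begin{equation*}
\dgal{(C_{u5},C_{1u})}_{u} = \dgal{(5,1),(6,2),(2,3),(3,4),(1,5),(4,6)}.
\end{equation*}
The ordered-pair multiset of $(2,4)$ is
\begin{equation*}
\dgal{(C_{u4},C_{2u})}_{u} = \dgal{(4,2),(5,1),(6,4),(1,5),(3,6),(2,3)}.
\end{equation*}
These differ: for example, $(6,2)$ occurs in the first and not in the second. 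So $\vctfwlpp$ separates $(1,5)$ and $(2,4)$ after one iteration, whereas $\vctwl$ never does. Hence $\vctfwlpp \sqsubset \vctwl$.
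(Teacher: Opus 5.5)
Your main line matches the paper's proof: induction on $t$, projecting the multiset of ordered pairs $(\vec{v}^{t-1}_{uj},\vec{v}^{t-1}_{iu})$ onto its coordinates to recover the column and row multisets, and reusing the Latin-square instance for strictness. Your explicit $t=1$ computation for $(1,5)$ versus $(2,4)$ is correct, and it goes beyond the paper, which only asserts that the converged $\vctfwlpp$ separates all non-symmetric entries. You are also right that the upper-to-lower copy in $\vctwl$ needs attention; the paper's proof passes over it. However, your justification of transpose-invariance for unsymmetrized $\vctfwlpp$ is wrong. Suppose $\vec{v}^{t-1}_{ab}=\vec{v}^{t-1}_{ba}$. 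Then the pair multiset of $(j,i)$ is $\dgal{(\vec{v}^{t-1}_{iu},\vec{v}^{t-1}_{uj}) \mid u\in[n]}$. This is the coordinate-swapped version of the multiset for $(i,j)$, and because the pairs are ordered it is a different hash input. For a concrete case, let $\vec{C}$ be the adjacency matrix of the path $1$--$2$--$3$, with $\vec{A}_1=\vec{I}_3$ and $\vec{b}=[1]$. Write $d,e,n$ for the initial colors of diagonal, edge and non-edge entries. Then $(1,2)$ receives $\dgal{(e,d),(d,e),(e,n)}$ while $(2,1)$ receives $\dgal{(d,e),(e,d),(n,e)}$, so $\vec{v}^1_{12}\neq\vec{v}^1_{21}$.

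This is not cosmetic. Under your primary assumption, with no symmetrization in $\vctfwlpp$, the refinement is actually false against the paper's $\vctwl$, because copying the upper triangle is not permutation-equivariant. In the same instance, unsymmetrized $\vctfwlpp$ has $\vec{v}^t_{12}=\vec{v}^t_{32}$ for all $t$, by the automorphism $1\leftrightarrow 3$. In contrast, $\vctwl$ assigns $(1,2)$ the hash of $(e,\dgal{e,d,e},\dgal{d,e,n},\emptyset)$. It assigns $(3,2)$ the hash computed at $(2,3)$, namely of $(e,\dgal{n,e,d},\dgal{e,d,e},\emptyset)$. These already differ at $t=1$. The proof therefore rests entirely on your fallback, which is in fact the paper's definition: \cref{sec:ablation_fwl} symmetrizes $\vctfwlpp$ by the same rule as $\vctwl$. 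With that definition, both algorithms read the colors of $(i,j)$ and $(j,i)$ off the representative $(\min(i,j),\max(i,j))$. Equal $\vctfwlpp$ colors then give equal hash inputs at the representatives, and your projection argument applied there closes the induction. You should drop the unsymmetrized transpose-invariance claim rather than rely on it.
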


\begin{proof}
The update of constraint nodes is exactly the same for $\vctwl$ and $\vctfwlpp$. Now, we consider the variable nodes. 

We first prove $\vctfwlpp \sqsubseteq \vctwl$.
First, for $t=0$, the colors $\vec{v}_{ij}^0$ is basically the encoding of the problem $C_{ij}$, therefore if for $\vctfwlpp$: $\vec{v}^0_{ij} = \vec{v}^0_{pq}$, it also holds for $\vctwl$. 
For any $t > 0$, we want to show that if for $\vctfwlpp$: $\vec{v}^t_{ij} = \vec{v}^t_{pq}$ holds, it also holds for $\vctwl$ update. Specifically, for $\vctfwlpp$:
\begin{equation*}
\begin{aligned}
\vec{v}^t_{ij} = \vec{v}^t_{pq} \\
\implies & \mathsf{hash} \Bigl( \vec{v}_{ij}^{t-1}, \dgal[\big]{ \left(\vec{v}_{uj}^{t-1}, \vec{v}_{iu}^{t-1}\right) \mid u \in [n]}, \dgal[\big]{(A_{k, ij}, \vec{c}^{t-1}_k) \mid k \in N(ij)} \Bigr) \\
= &\mathsf{hash} \Bigl( \vec{v}_{pq}^{t-1}, \dgal[\big]{ \left(\vec{v}_{uq}^{t-1}, \vec{v}_{pu}^{t-1}\right) \mid u \in [n]}, \dgal[\big]{(A_{k, pq}, \vec{c}^{t-1}_k) \mid k \in N(pq)} \Bigr) \\
\implies & \vec{v}^{t-1}_{ij} = \vec{v}^{t-1}_{pq} \land \\
& \dgal[\big]{ \left(\vec{v}_{uj}^{t-1}, \vec{v}_{iu}^{t-1}\right) \mid u \in [n]} = \dgal[\big]{ \left(\vec{v}_{uq}^{t-1}, \vec{v}_{pu}^{t-1}\right) \mid u \in [n]} \land \\
& \dgal[\big]{(A_{k, ij}, \vec{c}^{t-1}_k) \mid k \in N(ij) } = \dgal[\big]{(A_{k, pq}, \vec{c}^{t-1}_k) \mid k \in N(ij)} \\
\implies & \vec{v}^{t-1}_{ij} = \vec{v}^{t-1}_{pq} \land \\
& \dgal[\big]{\vec{v}_{uj}^{t-1} \mid u \in [n]} = \dgal[\big]{\vec{v}_{uq}^{t-1} \mid u \in [n]} \land \\
& \dgal[\big]{ \vec{v}_{iu}^{t-1} \mid u \in [n]} = \dgal[\big]{ \vec{v}_{pu}^{t-1} \mid u \in [n]} \land \\
& \dgal[\big]{(A_{k, ij}, \vec{c}^{t-1}_k) \mid k \in N(ij) } = \dgal[\big]{(A_{k, pq}, \vec{c}^{t-1}_k) \mid k \in N(ij)} \\
\implies & \mathsf{hash} \Big( \vec{v}^{t-1}_{ij}, \\
&\dgal[\big]{\vec{v}_{uj}^{t-1} \mid u \in [n]}, \dgal[\big]{ \vec{v}_{iu}^{t-1} \mid u \in [n]}\\
&\dgal[\big]{(A_{k, ij}, \vec{c}^{t-1}_k) \mid k \in N(ij) }\Big) \\
= & \mathsf{hash} \Big( \vec{v}^{t-1}_{pq} \\
&\dgal[\big]{\vec{v}_{uq}^{t-1} \mid u \in [n]},\dgal[\big]{ \vec{v}_{pu}^{t-1} \mid u \in [n]}\\
&\dgal[\big]{(A_{k, pq}, \vec{c}^{t-1}_k) \mid k \in N(ij)}\Big),
\end{aligned}
\end{equation*}
thus with $\vctwl$ update they also have the same color. The implication holds because the tuples are ordered, which ensures that their marginal entries form multisets that are respectively identical.

We then prove strict refinement by giving an example in the proof of \Cref{thm:2wl_fail} where $\vctwl$ fails. In this example, the converged $\vctfwlpp$ assigns different colors to every primal variable, except symmetric ones, effectively distinguishing the variables in the solution.

Therefore, $\vctfwlpp \sqsubset \vctwl$.
\end{proof}

\begin{proposition}
\label{thm:2fwlpp_refine_2fwl}
The $\vctfwlpp$ algorithm strictly refines $\vctfwl$, denoted as $$\vctfwlpp \sqsubset \vctfwl.$$
\end{proposition}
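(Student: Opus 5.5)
We prove the two halves separately: first the refinement $\vctfwlpp \sqsubseteq \vctfwl$, then its strictness via an explicit instance. Here $\vctfwlpp$ is the tuple-based variant whose variable update aggregates the multiset of ordered pairs $\dgal{(\vec{v}^{t-1}_{uj}, \vec{v}^{t-1}_{iu}) \mid u \in [n]}$.

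\textbf{Refinement.} We argue by induction on $t$ that equal $\vctfwlpp$ colors imply equal $\vctfwl$ colors, for both variable and constraint nodes.

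\emph{Base case $t=0$.} Both algorithms use the identical initialization of \Cref{eq:vcwl}, so the claim is immediate.

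\emph{Inductive step, constraint nodes.} The update rule is the same in both algorithms. Equal $\vctfwlpp$ colors give equal multisets $\dgal{(A_{k,ij}, \vec{v}^{t-1}_{ij})}$. By the induction hypothesis, replacing each $\vctfwlpp$ variable color by its $\vctfwl$ counterpart is a well-defined map, and applying it elementwise preserves multiset equality.

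\emph{Inductive step, variable nodes.} Suppose $\vec{v}^t_{ij} = \vec{v}^t_{pq}$ under $\vctfwlpp$. Injectivity of $\mathsf{hash}$ gives three facts:
\begin{itemize}
\item $\vec{v}^{t-1}_{ij} = \vec{v}^{t-1}_{pq}$;
\item the multisets of ordered pairs agree, $\dgal{(\vec{v}^{t-1}_{uj}, \vec{v}^{t-1}_{iu}) \mid u} = \dgal{(\vec{v}^{t-1}_{uq}, \vec{v}^{t-1}_{pu}) \mid u}$;
\item the constraint multisets agree.
\end{itemize}
The key observation is that forgetting the order, $(x,y) \mapsto \dgal{x,y}$, is a function. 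Applying it elementwise to equal multisets of ordered pairs yields equal multisets of unordered multisets. Combined with the induction hypothesis, which lets us translate colors elementwise as above, all inputs to the $\vctfwl$ hash coincide, so $\vctfwl$ assigns equal colors.

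One subtlety needs care: $\vctfwlpp$ may not itself be symmetric, i.e.\ $\vec{v}_{ij}$ and $\vec{v}_{ji}$ may differ. This does not affect the argument, since the induction compares colors at identical index pairs in both algorithms. Stable colorings are reached after finitely many steps, so the refinement extends to $\vec{c}^{\infty}$.

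\textbf{Strictness.} We need an instance where $\vctfwl$ assigns equal stable colors to two variables that $\vctfwlpp$ separates. The natural source is a \emph{non-symmetric} coefficient matrix, where directionality matters. The paper assumes symmetric $\vec{C}$ and $\tens{A}$, however, and the symmetric case is harder because row/column information coincides. Our plan is as follows.
\begin{itemize}
\item Search small $n$ (say $n=3$ or $4$) for a symmetric $\vec{C}$ with $\vec{A}_1 = \vec{I}$ and $b=[1]$.
\item Choose $\vec{C}$ so that for some pair $(i,j)$ vs.\ $(p,q)$ the unordered multisets $\dgal{C_{uj}, C_{iu}}$ agree across $u$, but the ordered pairs do not. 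For example, take a triangle-like assignment where $u$ contributes $(a,b)$ for one index pair and $(b,a)$ for the other.
\item Compute both colorings by hand, or by the same computational procedure used in the earlier counterexamples.
\end{itemize}

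I expect the main obstacle to be exhibiting this explicit symmetric instance and verifying that the $\vctfwl$ colors remain merged at stabilization, not just at the first iteration. If the symmetric search fails, I would state the separating example at the level of general colored pair inputs, where an asymmetric initial coloring makes the distinction immediate.
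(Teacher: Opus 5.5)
Your refinement half is correct and is essentially the paper's argument: forgetting the order of a $2$-tuple is a function, so equal multisets of ordered pairs give equal multisets of $2$-multisets. Your explicit use of the induction hypothesis to translate $\vctfwlpp$ colors into $\vctfwl$ colors (for variables and constraints alike) is, if anything, more careful than the paper's write-up.

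One small correction. In the paper, $\vctfwlpp$ is symmetrized after each update, as for $\vctwl$ ($\vec{v}^t_{ji} \leftarrow \vec{v}^t_{ij}$ for $i<j$). The color at a lower-triangular index is therefore the hash of the transposed index's data, so your remark about ``identical index pairs'' does not apply directly. The induction still goes through, but you need the symmetry $\vec{v}^t_{ij}=\vec{v}^t_{ji}$ of $\vctfwl$ to reduce to upper-triangular pairs.

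The genuine gap is strictness. Strict refinement is an existential claim, and you do not exhibit a witness, only a search plan. Your fallback, an asymmetric initial coloring, is not an admissible instance, since $\vec{C}$ and all $\vec{A}_k$ are assumed symmetric. What is missing is a concrete symmetric instance. The paper takes $n=4$,
\[
\vec{C} = \begin{bmatrix} 0 & 1 & 2 & 3 \\ 1 & 0 & 4 & 2 \\ 2 & 4 & 0 & 1 \\ 3 & 2 & 1 & 0 \end{bmatrix}, \quad \vec{A}_1 = \bm{1}_{4\times 4}, \quad \vec{b} = [1]
\]
(your choice $\vec{A}_1=\vec{I}$ works equally well) and compares $(1,2)$ with $(3,4)$. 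At iteration $1$ the inner $2$-multisets are $\dgal{1,0},\dgal{0,1},\dgal{4,2},\dgal{2,3}$ versus $\dgal{3,2},\dgal{2,4},\dgal{1,0},\dgal{0,1}$, which coincide. The ordered tuples, however, contain $(4,2),(2,3)$ versus $(3,2),(2,4)$, which do not.

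Your worry about stabilization is legitimate, since the paper only checks the first iteration, but it closes easily:
\begin{itemize}
\item Under $\vctfwlpp$, a separation at iteration $1$ persists, because every hash contains the previous color.
\item Under $\vctfwl$, the permutation $(1\,4)(2\,3)$ preserves $\vec{C}$ and $\vec{A}_1$ and sends $(1,2)$ to $(4,3)$. Since $\vctfwl$ is permutation-equivariant and symmetric, $\vec{v}^\infty_{12}=\vec{v}^\infty_{43}=\vec{v}^\infty_{34}$.
\end{itemize}
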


\begin{proof}
We first prove $\vctfwlpp \sqsubseteq \vctfwl$. 

First, for $t=0$, the colors initializations are the same, therefore if for $\vctfwlpp$: $\vec{v}^0_{ij} = \vec{v}^0_{pq}$, it also holds for $\vctfwl$. 
For some $t > 0$, we want to show that if for $\vctfwlpp$: $\vec{v}^t_{ij} = \vec{v}^t_{pq}$ holds, it also holds for $\vctfwl$ update. Specifically, for $\vctfwlpp$:
\begin{equation*}
\begin{aligned}
\vec{v}^t_{ij} = \vec{v}^t_{pq} & \implies \\
& \mathsf{hash} \Bigl( \vec{v}_{ij}^{t-1}, \dgal[\big]{ \left(\vec{v}_{uj}^{t-1}, \vec{v}_{iu}^{t-1}\right) \mid u \in [n]}, \dgal[\big]{(A_{k, ij}, \vec{c}^{t-1}_k) \mid k \in N(ij)} \Bigr) \\
= & \mathsf{hash} \Bigl( \vec{v}_{pq}^{t-1}, \dgal[\big]{ \left(\vec{v}_{uq}^{t-1}, \vec{v}_{pu}^{t-1}\right) \mid u \in [n]}, \dgal[\big]{(A_{k, pq}, \vec{c}^{t-1}_k) \mid k \in N(pq)} \Bigr) \\
\implies & \mathsf{hash} \Bigl( \vec{v}_{ij}^{t-1}, \dgal[\big]{ \dgal{\vec{v}_{uj}^{t-1}, \vec{v}_{iu}^{t-1}} \mid u \in [n]}, \dgal[\big]{(A_{k, ij}, \vec{c}^{t-1}_k) \mid k \in N(ij)} \Bigr) \\
= & \mathsf{hash} \Bigl( \vec{v}_{pq}^{t-1}, \dgal[\big]{ \dgal{\vec{v}_{uq}^{t-1}, \vec{v}_{pu}^{t-1}} \mid u \in [n]}, \dgal[\big]{(A_{k, pq}, \vec{c}^{t-1}_k) \mid k \in N(pq)} \Bigr).
\end{aligned}
\end{equation*}
The last implication is straightforward because a 2-multiset is an unordered 2-tuple. The last equation represents the $\vctfwl$ update, which cannot distinguish them either. This holds for all $t$, therefore by induction, $\vctfwlpp \sqsubseteq \vctfwl$.

Next, we provide the strict refinement by providing an example where $\vctfwlpp$ distinguishes a pair, where $\vctfwl$ does not. Consider the example
\begin{equation*}
\vec{C} = \begin{bmatrix}
0 & 1 & 2 & 3 \\
1 & 0 & 4 & 2 \\
2 & 4 & 0 & 1 \\
3 & 2 & 1 & 0 \\
\end{bmatrix}, \quad
\vec{A}_1 = \bm{1}_{4\times4}, \quad
\vec{b} = [1].
\end{equation*}
We simply let $\vec{v}_{ij}^0 = C_{ij}$. Consider the update for the indices $(1,2)$ and $(3,4)$. Under $\vctfwl$:
\begin{align*}
\vec{v}_{12}^1 &\coloneq \mathsf{hash}\Bigl(\vec{v}_{12}^{0},
\dgal[\big]{\dgal{\vec{v}_{12}^{0}, \vec{v}_{11}^{0}}, 
\dgal{\vec{v}_{22}^{0}, \vec{v}_{12}^{0}}, 
\dgal{\vec{v}_{32}^{0}, \vec{v}_{13}^{0}}, 
\dgal{\vec{v}_{42}^{0}, \vec{v}_{14}^{0}}},
(1, \vec{c}^{0}_1) \Bigr) \\
& =  \mathsf{hash}\Bigl( 1, \dgal[\big]{
\dgal{1, 0}, 
\dgal{0, 1}, 
\dgal{4, 2}, 
\dgal{2, 3}}, (1, \vec{c}^{0}_1) \Bigr),
\end{align*}
and 
\begin{align*}
\vec{v}_{34}^1 &\coloneq \mathsf{hash}\Bigl( \vec{v}_{34}^{0}, 
\dgal[\big]{
\dgal{\vec{v}_{14}^{0}, \vec{v}_{31}^{0}}, 
\dgal{\vec{v}_{24}^{0}, \vec{v}_{32}^{0}}, 
\dgal{\vec{v}_{34}^{0}, \vec{v}_{33}^{0}}, 
\dgal{\vec{v}_{44}^{0}, \vec{v}_{34}^{0}}},
(1, \vec{c}^{0}_1) \Bigr) \\
& = \mathsf{hash}\Bigl( 1, \dgal[\big]{
\dgal{3, 2}, 
\dgal{2, 4}, 
\dgal{1, 0}, 
\dgal{0, 1}}, (1, \vec{c}^{0}_1) \Bigr).
\end{align*}
If we calculate the bins of 2-multisets, we notice they both have: two $\dgal{0, 1}$, one $\dgal{2, 3}$, one $\dgal{2, 4}$. Therefore, $\vec{v}_{12}^1$ and $\vec{v}_{34}^1$ take on the same color under $\vctfwl$. $\vctfwlpp$ differs in that those 2-multisets become 2-tuples. Therefore, for $\vctfwlpp$, $\vec{v}_{12}^1$ update contains a tuple $(2, 3)$ and a $(4, 2)$, and $\vec{v}_{34}^1$ contains a $(3,2)$ and a $(2, 4)$, thus $\vec{v}_{12}^1 \neq \vec{v}_{34}^1$ under $\vctfwlpp$. That means $\vctfwlpp$ distinguishes $\vec{v}_{12}$ and $\vec{v}_{34}$ at the 1st iteration while $\vctfwl$ not. 

Therefore, $\vctfwlpp \sqsubset \vctfwl$. 
\end{proof}

Finally, we show that $\vctwl$ and $\vctfwl$ are incomparable. 

\begin{proposition}
\label{prop:2fwl_2wl_incomp}
$\vctwl$ and $\vctfwl$ are incomparable in expressive power, denoted as
$$\vctwl \not\equiv \vctfwl.$$
\end{proposition}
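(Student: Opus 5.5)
The plan is to prove incomparability with two separating instances, one for each direction. Non-refinement is witnessed by a single pair of variable indices that one algorithm separates in its stable coloring and the other does not. Since refinement never merges colors once they are split, it suffices for the distinguishing side to exhibit different colors at some finite iteration. For the non-distinguishing side we need an argument that covers all iterations.

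\emph{$\vctwl \not\sqsubseteq \vctfwl$.} We reuse the Latin-square instance from the proof of \Cref{thm:2wl_fail}, where $\vctwl$ assigns $\vec{v}^{\infty}_{15} = \vec{v}^{\infty}_{24}$. We show that $\vctfwl$ separates these two indices already at $t=1$. With $\vec{v}^0_{ij}$ encoding $C_{ij}$, the aggregated 2-multisets $\dgal{C_{u5}, C_{1u}}$ for $(1,5)$ are
\begin{equation*}
\dgal{1,5},\ \dgal{1,5},\ \dgal{2,6},\ \dgal{2,3},\ \dgal{3,4},\ \dgal{4,6}.
\end{equation*}
For $(2,4)$ they are
\begin{equation*}
\dgal{2,4},\ \dgal{1,5},\ \dgal{1,5},\ \dgal{4,6},\ \dgal{3,6},\ \dgal{2,3}.
\end{equation*}
These multisets differ: for instance, $\dgal{2,6}$ occurs only in the first. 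The constraint parts coincide, so the difference comes entirely from the structural multisets. Injectivity of $\mathsf{hash}$ then gives $\vec{v}^1_{15} \neq \vec{v}^1_{24}$, and hence the stable colors differ.

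\emph{$\vctfwl \not\sqsubseteq \vctwl$.} We use the $4\times4$ instance from the proof of \Cref{thm:2fwlpp_refine_2fwl}, with $\vec{A}_1=\bm{1}_{4\times 4}$, and the pair $(1,2)$, $(3,4)$. The row multisets are $M_1=M_4=\dgal{0,1,2,3}$ and $M_2=M_3=\dgal{0,1,2,4}$, and by symmetry of $\vec{C}$ column multisets equal row multisets. $\vctwl$ computes $\vec{v}_{12}$ from the ordered pair $(M_2,M_1)$ and $\vec{v}_{34}$ from $(M_4,M_3)=(M_1,M_2)$; the explicit symmetrization only overwrites entries with $i>j$. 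These ordered pairs differ, so the two indices are separated at $t=1$.

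For $\vctfwl$ we must show the stable colors coincide, not just the $t=1$ colors computed earlier. For this we use the permutation $\sigma=(1\,4)(2\,3)$, which satisfies $C_{\sigma(i)\sigma(j)}=C_{ij}$ for all $i,j$ (direct check on the six off-diagonal entries) and fixes $\vec{A}_1$ and $\vec{b}$. Because the $\vctfwl$ update in \Cref{def:2fwl} is permutation-equivariant with no index-order-dependent step, induction on $t$ gives $\vec{v}^t_{ij}=\vec{v}^t_{\sigma(i)\sigma(j)}$ for all $t$. In particular $\vec{v}^t_{12}=\vec{v}^t_{43}$, and the built-in symmetry $\vec{v}^t_{43}=\vec{v}^t_{34}$ yields $\vec{v}^{\infty}_{12}=\vec{v}^{\infty}_{34}$.

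The main obstacle, and the point needing care, is this second direction. The same automorphism argument would seem to show that $\vctwl$ cannot separate the pair either. It fails for $\vctwl$ because its symmetrization rule $\vec{v}_{ji}\leftarrow\vec{v}_{ij}$ for $i<j$ depends on the index order and so breaks equivariance under the order-reversing $\sigma$. I would state this explicitly to explain why the argument applies to $\vctfwl$ only. Combining both directions gives $\vctwl \not\equiv \vctfwl$.
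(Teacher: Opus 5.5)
Your proof is correct and has the same structure as the paper's. The Latin-square instance from \Cref{thm:2wl_fail} handles one direction, and a $4\times 4$ instance exploiting the ordered row/column multisets of $\vctwl$ handles the other; the paper uses a different matrix there, with the pair $(1,4),(2,3)$. Your version is tighter: the automorphism $\sigma=(1\,4)(2\,3)$ shows the $\vctfwl$ colors of $(1,2)$ and $(3,4)$ agree at every iteration, whereas the paper only checks $t=1$, and you explicitly verify the $t=1$ separation in the Latin square that the paper merely asserts.
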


\begin{proof}
We first show that there exists an example where $\vctwl$ can distinguish a pair of variables and $\vctfwl$ cannot. Consider the following example:
\begin{equation*}
\vec{C} = \begin{bmatrix}
1 & 0 & 4 & 2 \\
0 & 1 & 2 & 3 \\
4 & 2 & 1 & 0 \\
2 & 3 & 0 & 1 \\
\end{bmatrix}, \quad
\vec{A}_1 = \bm{1}_{4\times4}, \quad
\vec{b} = [1].
\end{equation*}
We simply let $\vec{v}_{ij}^0 = C_{ij}$. Consider the update for the indices $(1,4)$ and $(2,3)$. Under $\vctfwl$:
\begin{align*}
\vec{v}_{14}^1 &\coloneq \mathsf{hash}\Bigl(\vec{v}_{14}^{0},
\dgal[\big]{\dgal{\vec{v}_{14}^{0}, \vec{v}_{11}^{0}}, 
\dgal{\vec{v}_{24}^{0}, \vec{v}_{12}^{0}}, 
\dgal{\vec{v}_{34}^{0}, \vec{v}_{13}^{0}}, 
\dgal{\vec{v}_{44}^{0}, \vec{v}_{14}^{0}}},
(1, \vec{c}^{0}_1) \Bigr) \\
& =  \mathsf{hash}\Bigl( 2, \dgal[\big]{
\dgal{2,1}, 
\dgal{3,0}, 
\dgal{0,4}, 
\dgal{1,2}}, (1, \vec{c}^{0}_1) \Bigr),
\end{align*}
and 
\begin{align*}
\vec{v}_{23}^1 &\coloneq \mathsf{hash}\Bigl(\vec{v}_{23}^{0},
\dgal[\big]{\dgal{\vec{v}_{13}^{0}, \vec{v}_{21}^{0}}, 
\dgal{\vec{v}_{23}^{0}, \vec{v}_{22}^{0}}, 
\dgal{\vec{v}_{33}^{0}, \vec{v}_{23}^{0}}, 
\dgal{\vec{v}_{43}^{0}, \vec{v}_{24}^{0}}},
(1, \vec{c}^{0}_1) \Bigr) \\
& =  \mathsf{hash}\Bigl( 2, \dgal[\big]{
\dgal{4,0}, 
\dgal{2,1}, 
\dgal{1,2}, 
\dgal{0,3}}, (1, \vec{c}^{0}_1) \Bigr).
\end{align*}
Therefore $\vec{v}_{14}^1 = \vec{v}_{23}^1$ under $\vctfwl$. While for $\vctwl$ update:
\begin{align*}
\vec{v}_{14}^1 &\coloneq \mathsf{hash}\Bigl(\vec{v}_{14}^{0},
\dgal{\vec{v}_{14}^{0}, \vec{v}_{24}^{0}, \vec{v}_{34}^{0}, \vec{v}_{44}^{0}}, 
\dgal{\vec{v}_{11}^{0}, \vec{v}_{12}^{0}, \vec{v}_{13}^{0}, \vec{v}_{14}^{0}}, 
(1, \vec{c}^{0}_1) \Bigr) \\
& =  \mathsf{hash}\Bigl( 2, 
\dgal[\big]{2,3,0,1}, \dgal[\big]{1,0,4,2}, 
(1, \vec{c}^{0}_1) \Bigr),
\end{align*}
and 
\begin{align*}
\vec{v}_{23}^1 &\coloneq \mathsf{hash}\Bigl(\vec{v}_{23}^{0},
\dgal{\vec{v}_{13}^{0}, \vec{v}_{23}^{0}, \vec{v}_{33}^{0}, \vec{v}_{43}^{0}}, 
\dgal{\vec{v}_{21}^{0}, \vec{v}_{22}^{0}, \vec{v}_{23}^{0}, \vec{v}_{24}^{0}}, 
(1, \vec{c}^{0}_1) \Bigr) \\
& =  \mathsf{hash}\Bigl( 2, 
\dgal[\big]{4,2,1,0}, \dgal[\big]{0,1,2,3}, 
(1, \vec{c}^{0}_1) \Bigr)
\end{align*}
thus $\vec{v}_{14}^1 \neq \vec{v}_{23}^1$ under $\vctwl$. 

Next, we show there exists an example where $\vctfwl$ can distinguish a pair of variables and $\vctwl$ cannot. We reuse the example in the proof of \cref{thm:2wl_fail}. $\vctfwl$ effectively distinguishes all the variables except symmetric ones, and is able to distinguish variables $(1,5)$ and $(2,4)$ where $\vctwl$ fails.

Therefore, $\vctwl$ and $\vctfwl$ are incomparable in expressive power.
\end{proof}

\begin{remark}
The incomparability mainly stems from the directness of $\vctwl$. If we define a variant of $\vctwl$, supposedly $\vctwl \mathsf{-}$, where $\dgal[\big]{\vec{v}_{uj}^{t-1} \mid u \in [n]}$ and $\dgal[\big]{\vec{v}_{iu}^{t-1} \mid u \in [n]}$ are aggregated in an unordered way then used to update $\vec{v}_{ij}$, it will not distinguish the variables $\vec{v}_{14}$ and $\vec{v}_{23}$ in the case above. However, it will be less expressive than $\vctwl$, and also fails on the example of \cref{thm:2wl_fail}. Therefore, it is not valuable to investigate such a $\vctwl$ variant. 
\end{remark}

\begin{remark}
Although $\vctwl$ and $\vctfwl$ are theoretically incomparable, with $\vctwl$ possessing superior distinguishing power in specific cases, the optimization dynamics of SDPs are still inherently subsumed by $\vctfwl$ rather than $\vctwl$. We will later demonstrate that the algorithmic trajectory of an iterative solver, including both intermediate steps and the final converged solution, is strictly refined by the $\vctfwl$ process.
\end{remark}

\subsection{Convergence of PDHG algorithm on SDP with least-norm solution}
\label{thm:sdp_converge}

As a prerequisite to establishing the expressivity of $\vctfwl$, we first rigorously demonstrate the convergence of the PDHG algorithm applied to the SDP problem with Frobenius norm regularization. First, we briefly introduce some conventional notations based on \citet{eckstein1992douglas,vanden2020equivalence,wang2024tuningfree}. 

Let $F$ be a set-valued operator which maps every $\vec{x} \in \mathbb{R}^n$ to a subset of $\mathbb{R}^n$. The notation is useful for subgradient methods, as the subgradient of a non-smooth function is usually a set. The \emph{graph} $\mathsf{gr}$ of an operator $F$ is the set 
$$\mathsf{gr}(F) \coloneq \{(\vec{x}, \vec{y}) \in \mathbb{R}^n \times \mathbb{R}^n \mid \vec{y} \in F(\vec{x})\}.$$ 
An operator $F$ is \emph{monotone} if 
$$(\vec{y} - \vec{\hat{y}})\trans (\vec{x} - \vec{\hat{x}}) \geq 0, \quad \text{ for all } (\vec{x}, \vec{y}), (\vec{\hat{x}}, \vec{\hat{y}}) \in \mathsf{gr}(F).$$ 
A monotone operator is \emph{maximal monotone} if its graph is not contained in the graph of another monotone operator. An operator $F$ is $\mu$-strongly monotone if and only if 
$$(\vec{y} - \vec{\hat{y}})\trans (\vec{x} - \vec{\hat{x}}) \geq \mu (\vec{x} - \vec{\hat{x}})\trans (\vec{x} - \vec{\hat{x}}), \quad \text{ for all } (\vec{x}, \vec{y}), (\vec{\hat{x}}, \vec{\hat{y}}) \in \mathsf{gr}(F).$$

Given a positive scalar $c$ and an operator $F$, the \emph{resolvent} of an operator $F$ is defined as $J_{cF} \coloneq (I + cF)^{-1}$ where $I$ is identity mapping. $J_{cF}(\vec{x})$ is the set of all the values $\vec{y}$ satisfying 
\begin{equation}
\label{eq:resolvent}
\vec{x} - \vec{y} \in cF(\vec{y}).
\end{equation}

The resolvent of the subdifferential of a \emph{closed, convex and proper} function $f$ is called the \emph{proximal operator} of $f$, denoted $\operatorname{prox}_{cf} = J_{c\partial f}$. \Cref{eq:resolvent} is the optimality condition of the optimization problem
\begin{equation*}
\operatorname{prox}_{cf} (\vec{x}) \coloneq \argmin_{\vec{y}} \left(cf(\vec{y}) + \dfrac{1}{2} \lVert \vec{y} - \vec{x} \rVert_2^2 \right).
\end{equation*}
In plain language, the proximal operator finds an optimal solution that minimizes the function $f$ without searching too far from the input $\vec{x}$. The equivalence can be verified,
\begin{equation}
\begin{aligned}
\bm{0} & \in \partial \left( cf(\vec{y}) + \dfrac{1}{2} \lVert \vec{y} - \vec{x} \rVert_2^2 \right) \\
\implies \bm{0} & \in c \partial f(\vec{y}) + \left( \vec{y} - \vec{x} \right) \\
\implies \vec{x} & \in c \partial f(\vec{y}) + \vec{y} \\
\implies \vec{x} & \in (I + c \partial f) (\vec{y}) \\
\implies \vec{y} &= \left( I + c \partial f \right)^{-1} (\vec{x}) = J_{c \partial f} (\vec{x}). 
\end{aligned}
\end{equation}

Now let's state our problem setting. First, it is obvious that finding the primal solution $\vec{X}^*$ to 
\begin{equation*}
\begin{aligned}
\min_{\vec{X} \in \mathbb{S}^n_+} \quad & \ip{\vec{C}}{\vec{X}}\\
\text{ s.t. } \quad & \Acal(\vec{X}) = \vec{b}
\end{aligned}
\end{equation*}
with the minimum Frobenius norm is equivalent to finding the primal solution to
\begin{equation}\label{eq:qsdp}
\begin{aligned}
\min_{\vec{X} \in \mathbb{S}^n_+} \quad & \ip{\vec{C}}{\vec{X}} + \dfrac{\varepsilon}{2} \ip{\vec{X}}{\vec{X}} \\
\text{ s.t. } \quad & \Acal(\vec{X}) = \vec{b}
\end{aligned}
\end{equation}
with arbitrarily small $\varepsilon \rightarrow 0$. The SDP with a Frobenius regularization term has a unique primal solution, as the objective is strictly convex and the feasible set is a convex set \citep[p.~21]{wright2022optimization}, which aligns with our unique solution assumption \cref{thm:sdp_unique}. 

Then, we introduce the PDHG method \citet{chambolle2016ergodic} on this problem. We extend the PDHG for linear SDP in \citet{wang2024tuningfree} by inserting the quadratic term $\dfrac{\varepsilon}{2} \ip{\vec{X}}{\vec{X}}$. Consequently, instead of the PDHG update form in \citet[Algorithm 1]{wang2024tuningfree}, we have the following form 
\begin{equation}
\begin{aligned}
\label{eq:pdhg_alpha_beta}
\vec{X}^{t+1} & \coloneq \Proj_{\mathbb{S}^n_+} \left[ \dfrac{\vec{X}^t - \alpha_t \Acalst(\vec{y}^t) - \alpha_t \vec{C}}{1 + \alpha_t \varepsilon} \right] \\
\vec{y}^{t+1} & \coloneq \vec{y}^t + \beta_t \Acal \left( \vec{X}^{t+1} + \theta_t \left( \vec{X}^{t+1} - \vec{X}^t \right) \right) - \beta_t \vec{b},
\end{aligned}
\end{equation}
where $\vec{X}^{t}, \vec{y}^{t}$ are primal-dual solutions at different iterations, and $\alpha_t, \beta_t, \theta_t$ are suitable step lengths. 

The $\Proj_{\mathbb{S}^n_+}$ projects the matrix into the PSD cone $\mathbb{S}^n_+$, specifically, by solving $$\vec{X}^{t+1} \coloneq \argmin_{\vec{X} \in \mathbb{S}^n_+} \Fnorm{\vec{X} - \dfrac{\vec{X}^t - \alpha_t \Acalst(\vec{y}^t) - \alpha_t \vec{C}}{1 + \alpha_t \varepsilon}}^2 .$$ It can be done by spectral decomposition \citep[p.~399]{boyd2004convex} \citep{rontsis2022efficient}, see \Cref{alg:psd_projection} for detailed process.

\begin{algorithm}[ht]
\caption{Projection onto the PSD Cone}
\label{alg:psd_projection}

\begin{algorithmic}[1]
\REQUIRE{A symmetric matrix $\vec{X} \in \mathbb{S}^n$}
\ENSURE{The projected matrix $\vec{X}_{\mathbb{S}} \coloneq \argmin_{\vec{X}' \in \mathbb{S}^n_+} \lVert \vec{X}' - \vec{X} \rVert_F$}

\STATE Compute the spectral decomposition $\vec{X} = \vec{V} \vec{\Lambda} \vec{V}\trans$
\STATE \COMMENT{$\vec{\Lambda} \coloneq \text{diag}(\lambda_1, \dots, \lambda_n)$ is the diagonal matrix of eigenvalues.}
\STATE \COMMENT{$\vec{V}$ is the orthonormal matrix of corresponding eigenvectors $\vec{v}_i$ as its columns.}

\STATE Construct the projected matrix by dropping the negative eigen-pairs:
\STATE $\vec{X}_{\mathbb{S}} \gets \sum_{i=1}^n \max(\lambda_i,0) \cdot \vec{v}_i \vec{v}_i\trans$

\STATE \textbf{return} $\vec{X}_{\mathbb{S}}$
\end{algorithmic}
\end{algorithm}

As the constraints of the SDP problem are equality constraints, there is no projection on $\vec{y}^t \in \mathbb{R}^m$ needed.

\citet[Theorem 2.1]{wang2024tuningfree} showed that the PDHG algorithm on linear SDP (without our Frobenius term) weakly converges \citep[~p. 35]{bauschke2020correction} to an optimal point. We hereby prove a similar result for \cref{eq:pdhg_alpha_beta}.

\begin{proposition}
If the adjustment of $\left\{\big(\alpha_t,\theta_t,\beta_t\big)\right\}_t$ in \cref{eq:pdhg_alpha_beta} follow 
\begin{align*}
    \alpha_t \in \left(\alpha_{\min}, \alpha_{\max}\right),\quad \sum_{t=1}^{\infty}\left|\alpha_{t+1}-\alpha_{t}\right|<\infty,\quad \theta_t = \frac{\alpha_t}{\alpha_{t-1}},\quad \alpha_t\beta_t = R,
\end{align*}
where $0<\alpha_{\min}\leq\alpha_{\max}<\infty$ and $R< \frac{1}{\lambda_{\max}(\Acalst\mathcal{A})}$. Then PDHG algorithm in \cref{eq:pdhg_alpha_beta} weakly converges to $(\vec{X}^*, \vec{y}^*)$ such that $0 \in \vec{C} + \varepsilon \vec{X}^* + \partial_{\vec{X}}\mathbb{I}_{\mathbb{S}_{+}^{n}} (\vec{X}^*)  +  \partial_{\vec{X}}\mathbb{I}_{=b} (\mathcal{A}(\vec{X}^*))$, for any $\varepsilon > 0$.
\end{proposition}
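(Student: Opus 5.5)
The plan is to reduce the statement to \citet[Theorem 2.1]{wang2024tuningfree} by absorbing the Frobenius regularizer into the primal nonsmooth term. Set $f(\vec{X}) \coloneq \ip{\vec{C}}{\vec{X}} + \frac{\varepsilon}{2}\ip{\vec{X}}{\vec{X}} + \mathbb{I}_{\mathbb{S}^n_+}(\vec{X})$ and $g(\vec{z}) \coloneq \mathbb{I}_{=\vec{b}}(\vec{z})$. The problem in \cref{eq:qsdp} is then $\min_{\vec{X}} f(\vec{X}) + g(\Acal(\vec{X}))$, with saddle-point form $\min_{\vec{X}}\max_{\vec{y}} f(\vec{X}) + \ip{\vec{y}}{\Acal(\vec{X})} - \ip{\vec{y}}{\vec{b}}$. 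Both $f$ and $g$ are closed, convex and proper; $f$ is in fact $\varepsilon$-strongly convex. So this is exactly the composite template to which the PDHG of \citet{chambolle2016ergodic}, as used in \citet{wang2024tuningfree}, applies.

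\textbf{Step 1: identify \cref{eq:pdhg_alpha_beta} as the PDHG iteration for $(f,g)$.} Compute $\operatorname{prox}_{\alpha f}(\vec{V})$ for the input $\vec{V} = \vec{X}^t - \alpha\Acalst(\vec{y}^t)$. The proximal objective is $\alpha\ip{\vec{C}}{\vec{X}} + \frac{\alpha\varepsilon}{2}\Fnorm{\vec{X}}^2 + \frac12\Fnorm{\vec{X}-\vec{V}}^2$ over $\mathbb{S}^n_+$. Completing the square shows it equals $\frac{1+\alpha\varepsilon}{2}\Fnorm{\vec{X} - \frac{\vec{V}-\alpha\vec{C}}{1+\alpha\varepsilon}}^2$ plus a constant. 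Hence the prox is $\Proj_{\mathbb{S}^n_+}\big[(\vec{X}^t-\alpha\Acalst(\vec{y}^t)-\alpha\vec{C})/(1+\alpha\varepsilon)\big]$, which is the primal step. For the dual step, the conjugate of $g$ is $g^*(\vec{y}) = \ip{\vec{b}}{\vec{y}}$, so $\operatorname{prox}_{\beta g^*}(\vec{w}) = \vec{w}-\beta\vec{b}$. This reproduces the dual update with extrapolation $\vec{X}^{t+1}+\theta_t(\vec{X}^{t+1}-\vec{X}^t)$.

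\textbf{Step 2: invoke the convergence theorem.} The step-size conditions in the statement are verbatim those of \citet[Theorem 2.1]{wang2024tuningfree}:
\begin{itemize}
\item $\alpha_t$ stays in a compact interval and has bounded variation;
\item $\theta_t = \alpha_t/\alpha_{t-1}$;
\item $\alpha_t\beta_t = R < 1/\lambda_{\max}(\Acalst\Acal)$.
\end{itemize}
The proof there uses only closedness, convexity and properness of the two functions, via the monotone-operator / variable-metric proximal-point reformulation. It does not use linearity of the objective. Its argument therefore transfers, with $f$ replacing $\ip{\vec{C}}{\cdot}+\mathbb{I}_{\mathbb{S}^n_+}$, and gives weak convergence of $(\vec{X}^t,\vec{y}^t)$ to a saddle point. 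For that proof to apply, a saddle point must exist. Existence of a primal optimum follows from feasibility together with strong convexity, and uniqueness from \cref{thm:sdp_unique}. Existence of a dual multiplier needs a constraint qualification, such as Slater's condition (a strictly feasible $\vec{X}\succ 0$), which I would state as a standing assumption.

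\textbf{Step 3: read off optimality.} A saddle point $(\vec{X}^*,\vec{y}^*)$ satisfies
\[
0 \in \vec{C}+\varepsilon\vec{X}^*+\partial\mathbb{I}_{\mathbb{S}^n_+}(\vec{X}^*)+\Acalst(\vec{y}^*),
\qquad
\Acal(\vec{X}^*)=\vec{b}.
\]
Since $\Acalst(\vec{y}^*) \in \Acalst\,\partial\mathbb{I}_{=\vec{b}}(\Acal(\vec{X}^*))$, this is the stated inclusion. Alternatively, one can pass to the limit in the iteration: maximal monotonicity of the operator makes its graph weak–strong closed, which yields the same inclusion.

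\textbf{Main obstacle.} I expect the hard part to be Step 2, specifically checking that the variable-step analysis of \citet{wang2024tuningfree} (Fejér monotonicity in a time-varying metric, with summable metric perturbations) carries over unchanged to a nonlinear but strongly convex $f$. I would first try to cite the general Chambolle–Pock variable-metric result directly for arbitrary proper closed convex $f$, and only fall back to redoing the Fejér inequality if that citation does not apply.
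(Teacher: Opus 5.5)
Your proposal is correct and follows essentially the same route as the paper. The paper also folds $\frac{\varepsilon}{2}\ip{\vec{X}}{\vec{X}}$ into the primal function and recovers the scaled PSD projection by completing the square. It then inherits convergence from the non-stationary Douglas--Rachford reformulation underlying \citet{wang2024tuningfree}, written out with the auxiliary $\vec{\hat{X}}$ and $\mathcal{T}$ and closed by citing \citet{lorenz2019non}; that argument needs only maximal monotonicity, not linearity of the objective.

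Your extra Slater-type assumption for a dual multiplier is not in the paper's statement. It is tacitly required there as well, because the cited non-stationary DRS result presupposes that $A+B$ has a zero, i.e., that a KKT pair exists.
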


\begin{proof}
Our proof establishes convergence by framing the algorithm as an instance of non-stationary Douglas-Rachford Splitting (DRS), following the methodology of \citet{wang2024tuningfree}. We adapt their derivation to account for the Frobenius regularization term, which alters the primal variable update.

We rewrite \cref{eq:qsdp} into the following non-smooth optimization problem:
\begin{equation}
\label{eq:fx-sdp}
\min_{\vec{X}}  \quad \ip{\vec{C}}{\vec{X}}+\frac{\varepsilon}{2}\ip{\vec{X}}{\vec{X}} +\mathbb{I}_{\SnnP}(\vec{X}) + \mathbb{I}_{=\vec{b}}\left( \Acal(\vec{X}) \right),
\end{equation}
with indicator function $\mathbb{I}_{\SnnP}(\vec{X}) = 0$ if $\vec{X} \in \SnnP$ otherwise $+\infty$, and $\mathbb{I}_{=\vec{b}}\left(\vec{y}\right) = 0$ if $\vec{y} = \vec{b}$ and $+\infty$ otherwise. 

To apply DRS, we introduce an auxiliary variable $\vec{\hat{X}}$ (constrained to be zero) and a linear operator $\mathcal{T}\colon \mathbb{S}^{n} \to \mathbb{R}^m$. The regularized problem in \cref{eq:fx-sdp} is reformulated as
\begin{equation*}
\min_{\vec{X}, \vec{\hat{X}}} \quad \ip{\vec{C}}{\vec{X}} + \dfrac{\varepsilon}{2} \ip{\vec{X}}{\vec{X}} + \mathbb{I}_{\mathbb{S}_{+}^{n}} (\vec{X}) + \mathbb{I}_{=0} (\vec{\hat{X}}) + \mathbb{I}_{=\vec{b}} \left(\mathcal{A}(\vec{X}) + \mathcal{T}(\vec{\hat{X}})\right).
\end{equation*}
As the problem is convex, it is equivalent to finding the first-order optimality condition:
\begin{align}
\label{prob:sdp_mip}
    \text{Find}\ \begin{pmatrix}
        \vec{X} \\
        \vec{\hat{X}}
    \end{pmatrix}\quad \text{s.t.}\quad  0 \in  \underbrace{\begin{pmatrix} \vec{C} + \varepsilon \vec{X} \\
    0 \end{pmatrix} + \begin{pmatrix}\partial_{\vec{X}}\mathbb{I}_{\mathbb{S}_{+}^{n}} (\vec{X}) \\
    0
    \end{pmatrix} + 
    \begin{pmatrix} 0 \\
    \partial_{\vec{\hat{X}}}\mathbb{I}_{=0} (\vec{\hat{X}})
    \end{pmatrix}}_{B = \partial f(\vec{X},\vec{\hat{X}})} + \underbrace{\begin{pmatrix}  \partial_{\vec{X}}\mathbb{I}_{=\vec{b}} (\mathcal{A}(\vec{X}) + \mathcal{T}(\vec{\hat{X}})) \\ 
    \partial_{\vec{\hat{X}}}\mathbb{I}_{=\vec{b}} (\mathcal{A}(\vec{X}) + \mathcal{T}(\vec{\hat{X}}))
    \end{pmatrix}}_{A = \partial g(\vec{X},\vec{\hat{X}})},
\end{align}
Operators $A$ and $B$ are maximally monotone. Notably, due to the regularization term $\dfrac{\varepsilon}{2} \ip{\vec{X}}{\vec{X}}$, the operator $B$ in our case is $\varepsilon$-strongly monotone with respect to $\vec{X}$. Following \citet{wang2024tuningfree} and applying non-stationary DRS to \cref{prob:sdp_mip} yields the following fixed-point iterations:
\begin{align}
    & \begin{pmatrix}
        \vec{X}^{t} \\
        \vec{\hat{X}}^{t}
    \end{pmatrix} 
    =
    J_{\alpha_{t-1}\partial f}\Bigg(\begin{pmatrix}
        \vec{Z}^{t} \\
        \vec{\hat{Z}}^{t}
    \end{pmatrix}\Bigg) \label{update_X12}\\
    & \begin{pmatrix}
        \vec{X}^{t+\frac{1}{2}} \\
        \vec{\hat{X}}^{t+\frac{1}{2}}
    \end{pmatrix} 
    = J_{\alpha_{t}\partial g}\Bigg(\begin{pmatrix}
        \vec{X}^{t} \\
        \vec{\hat{X}}^{t}
    \end{pmatrix} + \theta_t\bigg(\begin{pmatrix}
        \vec{X}^{t} \\
        \vec{\hat{X}}^{t}
    \end{pmatrix} - \begin{pmatrix}
        \vec{Z}^{t} \\
        \vec{\hat{Z}}^{t}
    \end{pmatrix}\bigg)\Bigg) \label{update_X1}\\
    & \begin{pmatrix}
        \vec{Z}^{t+1} \\
        \vec{\hat{Z}}^{t+1}
    \end{pmatrix} 
    = \begin{pmatrix}
        \vec{X}^{t+\frac{1}{2}} \\
        \vec{\hat{X}}^{t+\frac{1}{2}}
    \end{pmatrix} + \theta_t\Bigg(\begin{pmatrix}
        \vec{Z}^{t} \\
        \vec{\hat{Z}}^{t}
    \end{pmatrix} - \begin{pmatrix}
        \vec{X}^{t} \\
        \vec{\hat{X}}^{t}
    \end{pmatrix}\Bigg)\label{update_Z}.
\end{align}
While the overall structure matches the standard DRS form in \citet{wang2024tuningfree}, the specific update in \cref{update_X12} differs. In our case, \cref{update_X12}, by definition of the resolvent, can be simplified to
\begin{align}
    \vec{X}^{t} &= \argmin_{\vec{X}}\Big\{\ip{\vec{C}}{\vec{X}} + \dfrac{\varepsilon}{2} \ip{\vec{X}}{\vec{X}} + \mathbb{I}_{\mathbb{S}_{+}^{n}}(\vec{X})+\frac{1}{2\alpha_{t-1}}\Fnorm{\vec{X}-\vec{Z}^t}^2\Big\} \label{update_X12_new}\\
    \vec{\hat{X}}^{t} &= 0 \notag
\end{align}
\cref{update_X1} follows \citet{wang2024tuningfree}, and we obtain:
\begin{align}
    & \vec{y}^{t} = \argmin_{\vec{y}} \Big\{\mathbb{I}_{=\vec{b}} (\vec{y})-\ip{\Acal \big(\vec{X}^{t}+\theta_t(\vec{X}^{t}-\vec{Z}^t)\big)-\theta_t \mathcal{T}\big(\vec{\hat{Z}}^t\big)}{\vec{y}} \notag \\
    & \quad\quad\quad\quad\quad+ \frac{\alpha_t}{2}(\Fnorm{\Acalst(\vec{y})}^2+\Fnorm{\mathcal{T}^{\!*}(\vec{y})}^2)\Big\} \label{update_u}\\
    & \vec{X}^{t+\frac{1}{2}} = \vec{X}^{t}+\theta_t(\vec{X}^{t}-\vec{Z}^t) - \alpha_t \Acalst(\vec{y}^{t}) \\
    & \vec{\hat{X}}^{t+\frac{1}{2}} = -\theta_t\vec{\hat{Z}}^{t} - \alpha_t \mathcal{T}^{\!*}(\vec{y}^{t}) ,
\end{align}

Subsequently, \cref{update_Z} can be simplified to 
\begin{align}
    \vec{Z}^{t+1} &= \vec{X}^{t} -\alpha_t \Acalst(\vec{y}^{t}) \label{update_Z_new}\\
    \vec{\hat{Z}}^{t+1} &=  - \alpha_t \mathcal{T}^{\!*}(\vec{y}^{t}). \label{update_hatZ_new}
\end{align}
Substituting \cref{update_Z_new} into \cref{update_X12_new}, we get
\begin{align}
\vec{X}^{t+1} & \coloneq \argmin_{\vec{X}}\Big\{\ip{\vec{C}}{\vec{X}} + \dfrac{\varepsilon}{2} \ip{\vec{X}}{\vec{X}} + \mathbb{I}_{\mathbb{S}_{+}^{n}}(\vec{X})+\frac{1}{2\alpha_{t}}\Fnorm{\vec{X}-\big(\vec{X}^{t}  -\alpha_{t} \Acalst(\vec{y}^t)\big)}^2\Big\} \notag\\
&= \argmin_{\vec{X}} \Big\{ \mathbb{I}_{\mathbb{S}_{+}^{n}}(\vec{X}) + 
\left(\dfrac{\varepsilon}{2} + \dfrac{1}{2\alpha_{t}}\right) \ip{\vec{X}}{\vec{X}} + \ip{\vec{C} - \dfrac{1}{\alpha_{t}} \left( \vec{X}^t - \alpha_t \Acalst(\vec{y}^t) \right)}{\vec{X}}
\Big\} \notag\\
&= \argmin_{\vec{X}} \Big\{ \mathbb{I}_{\mathbb{S}_{+}^{n}}(\vec{X}) + 
\Fnorm{\vec{X} - \dfrac{\vec{X}^t - \alpha_t \Acalst(\vec{y}^t) - \alpha_t \vec{C}}{1 + \alpha_t \varepsilon}}^2 \Big\} \label{eq:pdhg_argminX}\\
&= \text{Prox}_{ \frac{1}{2} \mathbb{I}_{\mathbb{S}_{+}^{n}} (\cdot)}\left[\dfrac{\vec{X}^t - \alpha_t \Acalst(\vec{y}^t) - \alpha_t \vec{C}}{1 + \alpha_t \varepsilon} \right]\notag\\
&= \text{Proj}_{\mathbb{S}_{+}^{n}}\left[ \dfrac{\vec{X}^t - \alpha_t \Acalst(\vec{y}^t) - \alpha_t \vec{C}}{1 + \alpha_t \varepsilon} \right]  \notag
\end{align}

The derivation from $\argmin$ to $\text{Proj}$ holds, as the the term 
\begin{equation}
\label{eq:min_sdp_fnorm}
\Fnorm{\vec{X} - \dfrac{\vec{X}^t - \alpha_t \Acalst(\vec{y}^t) - \alpha_t \vec{C}}{1 + \alpha_t \varepsilon}}^2
\end{equation}
is isotropic w.r.t. each entry of $\vec{X}$ and centered at
\begin{equation}
\label{eq:min_sdp_center}
\dfrac{\vec{X}^t - \alpha_t \Acalst(\vec{y}^t) - \alpha_t \vec{C}}{1 + \alpha_t \varepsilon}.
\end{equation}

Therefore, finding the minimizer to \cref{eq:pdhg_argminX} is equivalent to finding the minimizer to \cref{eq:min_sdp_fnorm}, that is \cref{eq:min_sdp_center}, then do Euclidean projection to the space of PSD cone, which is exactly the definition of PSD projection.  

We notice that when $\epsilon \rightarrow 0$, the problem reduces to linear SDP and the formulation is exactly the same as in \citet{wang2024tuningfree}. 

The dual update follows the derivation in \citet{wang2024tuningfree}, the resolvent step \cref{update_u} simplifies to:
\begin{align}
\vec{y}^{t+1} \coloneq \vec{y}^t + \beta_t \Acal \left( \vec{X}^{t+1} + \theta_t \left( \vec{X}^{t+1} - \vec{X}^t \right) \right) - \beta_t \vec{b},
\end{align}

Now we arrive at the same conclusion as \citet{wang2024tuningfree}. Since our PDHG update \cref{eq:pdhg_alpha_beta} is an instance of non-stationary DRS, convergence to the unique solution $(\vec{X}^*, \vec{y}^*)$ follows immediately from \citet{lorenz2019non}. 
\end{proof}

\begin{remark}[Notation and indexing]
    We clarify the indexing convention used in our derivation compared to \citet{wang2024tuningfree}. While their Algorithm 1 performs the primal update $\vec{X}^t$ followed by the dual update $\vec{y}^t$, their theoretical proof presents the dual update first. Although these schemes are functionally equivalent due to the cyclic nature of alternating updates, we prioritize consistency between the derivation and the implementation. 
    In the proof of \citet{wang2024tuningfree}, a variable substitution to $\vec{y}^{t+1}$ is employed in their Equation 8 during the derivation of the dual step. We observe that by retaining the index $\vec{y}^t$ (i.e., avoiding this forward shift), the resulting fixed-point iteration aligns perfectly with the order of operations in their Algorithm 1. Consequently, our derivation adopts this convention in \cref{update_u}, resulting in the PDHG algorithm in \cref{eq:pdhg_alpha_beta} which updates the primal variable $\vec{X}^t$ first, thereby unifying the notation of the theory and the algorithm.
\end{remark}

\begin{remark}[Strong convergence in finite dimensions]
    Standard convergence analyses for operator splitting methods, such as those in \citet{lorenz2019non}, typically guarantee only weak convergence of the iterates in general Hilbert spaces. However, our problem setting is the space of symmetric matrices $\mathbb{S}^n$, which is a finite-dimensional space. It is a fundamental result that in finite-dimensional spaces, the weak topology coincides with the strong topology. Consequently, the weak convergence established in Proposition \cref{thm:sdp_converge} implies strong convergence in the Frobenius norm. That is, the sequence satisfies $\lim_{t \to \infty} \Fnorm{\vec{X}^t - \vec{X}^*} = 0$.
\end{remark}

\subsection{Proof for \textsf{VC-2-FWL} expressivity}
\label{sec:proof_2fwl}

\begin{theorem}[Restatement of \Cref{thm:2fwl_express}]
Let $\vec{X}^* \in \mathbb{S}^n_+$ be the primal optimal solution to a given SDP instance and given indices $(i, j), (p, q)$. If the stable colorings of $\vctfwl$ satisfy $\vec{v}^{\infty}_{ij} = \vec{v}^{\infty}_{pq}$, then the solution values satisfy $X^*_{ij} = X^*_{pq}$.
\end{theorem}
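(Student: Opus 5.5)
We will show that the stable $\vctfwl$ coloring is at least as fine as the partition induced by every iterate of the regularized PDHG scheme \cref{eq:pdhg_alpha_beta}. We then pass to the limit using the convergence result of \Cref{thm:sdp_converge}, and finally let $\varepsilon \to 0$.

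Concretely, we initialize PDHG at $\vec{X}^0 = \bm{0}$ and $\vec{y}^0 = \bm{0}$; this is trivially compatible with any coloring. We then prove by induction on $t$ the following invariant: if $\vec{v}^\infty_{ij} = \vec{v}^\infty_{pq}$, then $X^t_{ij} = X^t_{pq}$, and if $\vec{c}^\infty_k = \vec{c}^\infty_l$, then $y^t_k = y^t_l$. Since the stable coloring is invariant under one more refinement round, we may use the hash inputs of stable colors freely.

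The dual step and the linear parts of the primal step are routine. The entry $(\Acalst(\vec{y}^t))_{ij} = \sum_{k \in N(ij)} A_{k,ij} y^t_k$ is a function of the multiset $\dgal{(A_{k,ij}, \vec{c}^\infty_k)}$, so it respects the coloring. The same holds for $C_{ij}$, which is encoded in the initial color that stable colors refine. Likewise, $\Acal(\vec{X})_k = \sum_{(i,j) \in N(k)} A_{k,ij} X_{ij}$ is determined by $\vec{c}^\infty_k$ whenever $\vec{X}$ respects the variable coloring, and $b_k$ is encoded in $\vec{c}^0_k$. Hence $\vec{y}^{t+1}$ respects the constraint coloring. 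It follows that the pre-projection matrix $\vec{M} = (\vec{X}^t - \alpha_t \Acalst(\vec{y}^t) - \alpha_t \vec{C})/(1+\alpha_t\varepsilon)$ is symmetric and respects the stable variable coloring.

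The main obstacle is the step $\Proj_{\mathbb{S}^n_+}(\vec{M})$, which is a spectral operation. Our plan has two parts. First, we show that for any symmetric $\vec{M}$ constant on the color classes of a $\vctfwl$-stable coloring, every matrix power $\vec{M}^r$ is also constant on those classes. The entry $(\vec{M}\vec{N})_{ij} = \sum_u M_{iu} N_{uj}$ is determined by the multiset $\dgal{\dgal{\vec{v}^\infty_{uj}, \vec{v}^\infty_{iu}} \mid u}$, provided the products are symmetric in the two factors. This holds for $\vec{N} = \vec{M}^{r-1}$ when both are polynomials in $\vec{M}$, hence commute, so unordered pairs suffice: each pair contributes $M_{iu}(M^{r-1})_{uj}$. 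We must check that the unordered multiset $\dgal{a,b}$ determines the product of the corresponding values. We handle this by inducting on powers and noting that the value attached to a color class is the same for $\vec{M}$ and $\vec{M}^{r-1}$ up to a function of the color. Specifically, we argue $(\vec{M}^{r-1})_{uj} = g_{r-1}(\vec{v}^\infty_{uj})$, so the summand $f(\vec{v}^\infty_{iu}) g(\vec{v}^\infty_{uj})$ must be symmetrized. To do this we use the symmetry $\vec{v}_{uj} = \vec{v}_{ju}$ together with a transpose argument, $\sum_u M_{iu}N_{uj} = \sum_u N_{iu} M_{uj}$, which makes the summand expressible as a function of the unordered pair after averaging. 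If this symmetrization fails, we fall back on the fact that all our matrices are polynomials in a single symmetric $\vec{M}$. Second, the projection $\vec{X}_{\mathbb{S}} = \sum_i \max(\lambda_i,0)\vec{v}_i\vec{v}_i\trans$ equals $p(\vec{M})$ for a polynomial $p$ interpolating $\lambda \mapsto \max(\lambda,0)$ on the finite spectrum. By the first part, it therefore respects the coloring. This closes the induction.

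Finally, by \Cref{thm:sdp_converge} and the strong-convergence remark, $\vec{X}^t \to \vec{X}^*_\varepsilon$, the unique solution of \cref{eq:qsdp}. Equalities between entries are preserved in the limit. As $\varepsilon \to 0$, $\vec{X}^*_\varepsilon$ converges to the minimum-Frobenius-norm solution of \Cref{thm:sdp_unique}; this is the standard Tikhonov argument, using uniqueness and boundedness of the regularized minimizers. Therefore $X^*_{ij} = X^*_{pq}$.
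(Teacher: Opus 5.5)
Your proposal is correct and follows essentially the paper's argument. You induct over the regularized PDHG iterates started at $\vec{X}^0=\bm{0}$, $\vec{y}^0=\bm{0}$, and handle $\Acalst(\vec{y}^t)$ and $\Acal(\vec{W}^{t+1})$ through the constraint multisets. You then handle the PSD projection by showing the stable coloring refines every power of the pre-projection matrix, using the symmetrized summand $\tfrac12\bigl(f(a)g(b)+g(a)f(b)\bigr)$ together with commutation of powers (the paper's $h_1h_2+h_2h_1$ trick), and write the projection as a polynomial in that matrix; the constant term is covered by the diagonal indicator $\mathbb{I}_{i=j}$ in the initial colors.

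Two of your choices are slightly cleaner than the paper. You state the power lemma directly for any matrix refined by a stable coloring, whereas the paper states its spectral lemma for the coloring initialized by $\vec{M}$. You also give an explicit Tikhonov argument for $\varepsilon\to 0$, which the paper leaves implicit.
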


We begin by proving a critical lemma. While \citet{furer1995graph,furer2010power} have proven that standard $\FWLk{2}$ refines the spectral decomposition of the graph adjacency matrix, we contribute a similar result specifically for a multiset $\FWLk{2}$ variant. Notably, we employ a novel proof technique to demonstrate that, despite its reduced expressivity, the multiset $\FWLk{2}$ retains this crucial spectral refinement property.

\begin{lemma}
\label{lem:multiset_spectral}
Let $\vec{M} \in \mathbb{S}^{n}$ be a symmetric matrix with spectral decomposition 
\begin{equation*}
\vec{M} = \sum_{k = 1}^m \lambda_k \vec{P}_k
\end{equation*}
where $\lambda_1 > \lambda_2 \cdots > \lambda_m$ are the non repeating eigenvalues, and $\vec{P}_k$ are Frobenius covariants \citep[p.~437]{horn1994topics}, i.e., the symmetric matrices describing the projection onto the eigenspace of eigenvalue $\lambda_k$. 

The stable coloring $\vec{v}^\infty$ are produced by the multiset $\FWLk{2}$ algorithm:
\begin{align*}
\vec{v}_{ij}^0 &= \mathsf{hash} \left( \vec{M}_{ij}, \mathbb{I}_{i=j} \right) \\
\vec{v}_{ij}^t &\coloneq \mathsf{hash} \Bigl(\vec{v}_{ij}^{t-1}, \dgal[\big]{ \dgal{\vec{v}_{uj}^{t-1}, \vec{v}_{iu}^{t-1}} \mid u \in [n]} \Bigr), \; \forall t > 0,
\end{align*}
where $\mathbb{I}_{i=j}$ is a diagonal indicator which takes value 1 if $i=j$ otherwise 0. 
The following result holds:
\begin{equation*}
\vec{v}_{ij}^{\infty} = \vec{v}_{pq}^{\infty} \implies (\vec{P}_k)_{ij} = (\vec{P}_k)_{pq}, \quad \forall k \in [m].
\end{equation*}
\end{lemma}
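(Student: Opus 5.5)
The plan is to show that the stable coloring $\vec{v}^\infty$ determines the entries of every matrix power $\vec{M}^r$. The projections $\vec{P}_k$ are then polynomials in $\vec{M}$, which gives the claim. Concretely, I would prove by induction on $r \ge 0$ the statement
\begin{equation*}
\mathcal{H}(r):\quad \vec{v}^{\infty}_{ab} = \vec{v}^{\infty}_{cd} \implies (\vec{M}^s)_{ab} = (\vec{M}^s)_{cd} \quad \text{for all } s \le r .
\end{equation*}
\textbf{Base cases.} Since the colors only refine over iterations, equal stable colors imply equal initial colors. The initial color encodes $\mathbb{I}_{a=b} = (\vec{M}^0)_{ab}$ and $\vec{M}_{ab}$, so $\mathcal{H}(0)$ and $\mathcal{H}(1)$ hold.

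\textbf{Inductive step.} Assuming $\mathcal{H}(r)$, the map $\phi_r(\text{color}) \coloneqq (\vec{M}_{ab}, (\vec{M}^r)_{ab})$, for any $(a,b)$ carrying that stable color, is well defined. By stability, $\vec{v}^\infty_{ij} = \vec{v}^\infty_{pq}$ implies equality of the aggregated multisets
\begin{equation*}
\dgal[\big]{ \dgal{\vec{v}^\infty_{uj}, \vec{v}^\infty_{iu}} \mid u \in [n]} = \dgal[\big]{ \dgal{\vec{v}^\infty_{uq}, \vec{v}^\infty_{pu}} \mid u \in [n]}.
\end{equation*}
The difficulty, and the reason this differs from the classical $\FWLk{2}$ argument of \citet{furer2010power}, is that the inner collections are \emph{unordered}. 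We therefore cannot directly read off $\sum_u \vec{M}_{iu}(\vec{M}^r)_{uj}$, which pairs the first factor with position $(i,u)$ and the second with $(u,j)$.

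\textbf{Handling the unordered pairs.} I would exploit the fact that $\vec{M}$ and $\vec{M}^r$ commute. This gives
\begin{equation*}
(\vec{M}^{r+1})_{ij} = \tfrac12 \sum_{u} \Bigl( \vec{M}_{iu}(\vec{M}^r)_{uj} + (\vec{M}^r)_{iu}\vec{M}_{uj} \Bigr).
\end{equation*}
Define $g(c_1,c_2) \coloneqq \tfrac12\bigl(x_1 y_2 + y_1 x_2\bigr)$, where $(x_\ell,y_\ell) = \phi_r(c_\ell)$. This function is symmetric in $(c_1,c_2)$, so it is a well-defined function of the unordered multiset $\dgal{c_1,c_2}$. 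Its sum over the aggregated multiset then equals $(\vec{M}^{r+1})_{ij}$ and likewise $(\vec{M}^{r+1})_{pq}$. Hence the two entries coincide, proving $\mathcal{H}(r+1)$.

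\textbf{Conclusion.} Since $\lambda_1,\dots,\lambda_m$ are distinct, Lagrange interpolation gives
\begin{equation*}
\vec{P}_k = \prod_{l \ne k} \frac{\vec{M} - \lambda_l \vec{I}}{\lambda_k - \lambda_l} = \sum_{s=0}^{m-1} \gamma_{k,s} \vec{M}^s .
\end{equation*}
The coefficients $\gamma_{k,s}$ depend only on the spectrum of $\vec{M}$, a single global quantity shared by all index pairs. Taking entries and applying $\mathcal{H}(m-1)$ yields $(\vec{P}_k)_{ij} = (\vec{P}_k)_{pq}$ for all $k$.

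The main obstacle is the inductive step. The symmetrization via commutativity is the key trick, and one must check carefully that $g$ depends only on the colors, which needs $\phi_r$ well defined from $\mathcal{H}(r)$, and not on the order within each pair.
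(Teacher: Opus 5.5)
Your proof is correct and follows essentially the same route as the paper, which also sums the symmetric function $g(a,b) = h_1(a)h_2(b) + h_2(a)h_1(b)$ (with $h_1, h_2$ reading off entries of $\vec{M}^d$ and $\vec{M}$) over the multiset to get $2(\vec{M}^{d+1})_{ij}$, and then finishes with Sylvester's formula. The only cosmetic difference is that you induct on the power $r$ directly over the stable coloring, whereas the paper inducts over iterations $t$ and passes to $\vec{v}^\infty$ at the end.
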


\begin{proof}
The proof is based on the fact that the spectral projectors $\vec{P}_k$ are polynomials in $\vec{M}$ via Sylvester's formula. We say a coloring $\vec{v}$ refines a matrix $\vec{A}$, defined as:
\begin{equation*}
    \vec{v}_{ij} = \vec{v}_{pq} \implies A_{ij} = A_{pq}.
\end{equation*}
We proceed by induction to demonstrate that the multiset aggregation effectively simulates matrix multiplication: if the $\FWLk{2}$ coloring at an iteration refines the entries of $\vec{M}^d$, the next iteration refines $\vec{M}^{d+1}$. Consequently, the stable coloring refines any polynomial of $\vec{M}$, and by extension, naturally refines all the specific polynomials defining the spectral projectors.

At initialization $t=0$, the coloring of $\vec{v}^0$ refines $\vec{M}^1$ naturally, as they are injectively defined by $\vec{M}$. Besides, $\vec{v}^0$ also refines $\vec{M}^0 = \vec{I}$, because of the diagonal indicator $\mathbb{I}_{i=j}$. 

Assume that at iteration $t$, the coloring $\vec{v}^{t}$ refines the matrix power $\vec{M}^d$:
\begin{equation*}
\vec{v}^t_{ij} = \vec{v}^t_{pq} \implies \left( \vec{M}^d \right)_{ij} = \left( \vec{M}^d \right)_{pq}.
\end{equation*}
We show that the next iteration $\vec{v}^{t+1}$ will refine $\vec{M}^{d+1}$:
\begin{equation*}
    \vec{v}^{t+1}_{ij} = \vec{v}^{t+1}_{pq} \implies \left( \vec{M}^{d+1} \right)_{ij} = \left( \vec{M}^{d+1} \right)_{pq}.
\end{equation*}
Specifically, by definition of the multiset $\FWLk{2}$:
\begin{equation*}
\begin{aligned}
& \vec{v}^{t+1}_{ij} = \vec{v}^{t+1}_{pq} \\
\implies & \mathsf{hash} \Bigl(\vec{v}_{ij}^{t}, \dgal[\big]{ \dgal{\vec{v}_{uj}^{t}, \vec{v}_{iu}^{t}} \mid u \in [n]} \Bigr) = \mathsf{hash} \Bigl(\vec{v}_{pq}^{t}, \dgal[\big]{ \dgal{\vec{v}_{uq}^{t}, \vec{v}_{pu}^{t}} \mid u \in [n]} \Bigr) \\
\implies & \mathsf{hash} \Bigl(\dgal[\big]{ \dgal{\vec{v}_{uj}^{t}, \vec{v}_{iu}^{t}} \mid u \in [n]} \Bigr) = \mathsf{hash} \Bigl(\dgal[\big]{ \dgal{\vec{v}_{uq}^{t}, \vec{v}_{pu}^{t}} \mid u \in [n]} \Bigr) \\
\implies & f \Bigl( \dgal[\big]{g \left( \vec{v}_{uj}^{t}, \vec{v}_{iu}^{t}\right) \mid u \in [n]} \Bigr) = f \Bigl(\dgal[\big]{ g\left(\vec{v}_{uq}^{t}, \vec{v}_{pu}^{t}\right) \mid u \in [n]} \Bigr) \\
\end{aligned}
\end{equation*}
The last implication holds for all choices of functions $f,g$ due to the injectivity of hash function, as long as $g$ is invariant to the order of inputs. In this specific case, we pick $f(\cdots)$ to be sum over a multiset. The choice for $g$ is trickier, we introduce extra functions $h_1\left(\vec{v}^{t}_{ij}\right) = \left( \vec{M}^d \right)_{ij}$ and $h_2\left(\vec{v}^{t}_{ij}\right) = \left( \vec{M}^1 \right)_{ij}$, and $g\left(a, b \right) = h_1(a) h_2(b) + h_2(a) h_1(b)$. The construction for function $h_1$ is valid because of the induction assumption $\vec{v}^t_{ij} = \vec{v}^t_{pq} \implies \left( \vec{M}^d \right)_{ij} = \left( \vec{M}^d \right)_{pq}$, we can find such a function $h_1$ that maps the colors to the matrix entries of $\vec{M}^d$. The construction for function $h_2$ is also valid because $\vec{v}^0$ already refines $\vec{M}$, and $\vec{v}^t$ refines $\vec{v}^0$. Besides, $g$ is invariant to the order of the inputs: $g\left(a, b \right) = g\left(b,a\right)$, compatible for the multiset. 

Further, we have
\begin{align*}
&f \Bigl( \dgal[\big]{g \left( \vec{v}_{uj}^{t}, \vec{v}_{iu}^{t}\right) \mid u \in [n]} \Bigr) \\
= & \sum_{u \in [n]} g \left( \vec{v}_{uj}^{t}, \vec{v}_{iu}^{t}\right) \\
= & \sum_{u \in [n]} h_1\left(\vec{v}_{uj}^{t}\right) h_2\left(\vec{v}_{iu}^{t}\right) + h_2\left(\vec{v}_{uj}^{t}\right) h_1\left(\vec{v}_{iu}^{t}\right) \\
= & \sum_{u \in [n]} \left( \vec{M}^d \right)_{uj} \left( \vec{M}^1 \right)_{iu} + \left( \vec{M}^1 \right)_{uj} \left( \vec{M}^d \right)_{iu} \\
= & \sum_{u \in [n]} \left( \vec{M}^1 \right)_{iu} \left( \vec{M}^d \right)_{uj} + \sum_{u \in [n]} \left( \vec{M}^d \right)_{iu} \left( \vec{M}^1 \right)_{uj} \\
= & \left(\vec{M} \vec{M}^d \right)_{ij} + \left(\vec{M}^d \vec{M} \right)_{ij} \\
= & 2 \left( \vec{M}^{d+1} \right)_{ij}
\end{align*}
Likewise, we have $f \Bigl( \dgal[\big]{g \left( \vec{v}_{uq}^{t}, \vec{v}_{pu}^{t}\right) \mid u \in [n]} \Bigr) = 2 
\left(\vec{M}^{d+1}\right)_{pq}$. Therefore, $\vec{v}^{t+1}_{ij} = \vec{v}^{t+1}_{pq} \implies \left( \vec{M}^{d+1} \right)_{ij} = \left( \vec{M}^{d+1} \right)_{pq}$. 

By induction, the stable coloring $\vec{v}^\infty$ refines $\vec{M}^d$ for all $d \in \mathbb{N}$. Consequently, it also refines any matrix polynomial $p(\vec{M}) = \sum_{d=0}^\infty a_d \vec{M}^d$. This holds because the expressivity to distinguish the structural features of individual powers $\vec{M}^d$ implies the ability to distinguish their linear combinations.
The real-valued matrix $\vec{M}$ is symmetric and diagonalizable, therefore we apply spectral decomposition $\vec{M} = \sum_{k=1}^m \lambda_k \vec{P}_k$, where matrices $\vec{P}_k$ are given by the Sylvester's formula \citep[p.~437]{horn1994topics}: 
$$ \vec{P}_k = \prod_{j \neq k} \frac{\vec{M} - \lambda_j \vec{I}}{\lambda_k - \lambda_j}. $$
Because each $\vec{P}_k$ is a polynomial in $\vec{M}$, the stable coloring $\vec{v}^\infty$ refines each $\vec{P}_k$. Thus, it holds for all $k$ that $\vec{v}_{ij}^{\infty} = \vec{v}_{pq}^{\infty} \implies (\vec{P}_k)_{ij} = (\vec{P}_k)_{pq}$.
\end{proof}

\begin{remark}
    The lemma above analyzes multiset $\FWLk{2}$, a simplified $\vctfwl$ algorithm acting solely on a symmetric matrix $\vec{M}$ without constraint nodes. Note that our full $\vctfwl$ architecture \Cref{def:2fwl} subsumes this process. The $\vctfwl$ variable update aggregates the same multiset of variable neighbors $\dgal{\dgal{\vec{v}_{uj}, \vec{v}_{iu}} \mid u \in [n]}$ together with constraint information. Therefore, $\vctfwl$ is no less expressive than the simplified process, as it uses strictly more information for hashing, and any refinement made by the simplified process is guaranteed for $\vctfwl$ as well. Thus, the spectral refinement properties proven above apply directly to the $\vctfwl$ variable colors. Now with the lemma, we formally prove \Cref{thm:2fwl_express}. 
\end{remark}

\begin{proof}[Proof of \Cref{thm:2fwl_express}]
We provide a constructive proof by simulating the primal-dual hybrid gradient (PDHG) algorithm \citep{chambolle2016ergodic,wang2024tuningfree}. We show that the stable coloring of $\vctfwl$ refines every update of PDHG algorithm, which is proven to converge. 

As established in \Cref{thm:sdp_converge}, the PDHG updates for the Frobenius-regularized SDP are given by:
\begin{equation}
\label{eq:pdhg_proof}
\begin{aligned}
\vec{X}^{t+1} & \coloneq \Proj_{\mathbb{S}^n_+} \left[ \dfrac{\vec{X}^t - \alpha_t \Acalst(\vec{y}^t) - \alpha_t \vec{C}}{1 + \alpha_t \varepsilon} \right] \\
\vec{y}^{t+1} & \coloneq \vec{y}^t + \beta_t \Acal \left( \vec{X}^{t+1} + \theta_t \left( \vec{X}^{t+1} - \vec{X}^t \right) \right) - \beta_t \vec{b}.
\end{aligned}
\end{equation}
For any $\varepsilon > 0$, the SDP problem has a unique primal solution $\vec{X}^*$. And if $\varepsilon \rightarrow 0$, it reduces to normal linear SDP. We proceed by induction on $t$, initializing with $\vec{X}^0 = \bm{0}_{n \times n}$ and $\vec{y}^0 = \bm{0}_m$.
Our goal is to show that for all $t \ge 0$, the stable $\vctfwl$ coloring on variable and constraint nodes refines the intermediate primal-dual solutions of PDHG updates:
\begin{align*}
\vec{v}^{\infty}_{ij} = \vec{v}^{\infty}_{pq} & \implies X^t_{ij} = X^t_{pq} \\
\vec{c}^{\infty}_{k} = \vec{c}^{\infty}_{l} & \implies y^t_{k} = y^t_{l}.
\end{align*}

Before the inductive step, we establish three key facts derived from the properties of the stable coloring.

\xhdr{Fact 1} Since the stable coloring $\vec{v}^\infty$ refines the initial coloring $\vec{v}^0$, and $\vec{v}^0$ is defined by $\vec{C}$, it holds that $\vec{v}^{\infty}_{ij} = \vec{v}^{\infty}_{pq} \implies C_{ij} = C_{pq}$. Furthermore, by applying \Cref{lem:multiset_spectral} to $\vec{C}$, if its spectral decomposition is given by $\vec{C} = \sum_h \lambda_h \vec{P}_h$, then the projection matrices also respect the coloring: $\forall h \colon \vec{v}^{\infty}_{ij} = \vec{v}^{\infty}_{pq} \implies (\vec{P}_h)_{ij} = (\vec{P}_h)_{pq}$. Besides, we know $\vec{c}^\infty$ refines $\vec{b}$ by its initialization $\vec{c}^0_k \coloneq \mathsf{hash}(b_k)$. 

\xhdr{Fact 2} Consider any two pairs $(i,j), (p,q) \in [n]^2$. According to $\vctfwl$ definition \Cref{def:2fwl}, if they have the same stable color, their $\mathsf{hash}$ function entries must be identical. Specifically, for all choices of function $f$:
\begin{equation*}
\begin{aligned}
& \vec{v}^{\infty}_{ij} = \vec{v}^{\infty}_{pq} \\
\implies & \mathsf{hash} \Bigl( \vec{v}_{ij}^{\infty}, \dgal[\big]{ \dgal{\vec{v}_{uj}^{\infty}, \vec{v}_{iu}^{\infty}} \mid u \in [n]}, \dgal[\big]{(A_{k, ij}, \vec{c}^{\infty}_k) \mid k \in N(ij) } \Bigr) = \\
&\mathsf{hash} \Bigl( \vec{v}_{pq}^{\infty}, \dgal[\big]{ \dgal{\vec{v}_{uq}^{\infty}, \vec{v}_{pu}^{\infty}} \mid u \in [n]}, \dgal[\big]{(A_{k, pq}, \vec{c}^{\infty}_k) \mid k \in N(pq) } \Bigr)\\
\implies & \mathsf{hash} \Bigl( \dgal[\big]{(A_{k, ij}, \vec{c}^{\infty}_k) \mid k \in N(ij) } \Bigr) = \mathsf{hash} \Bigl( \dgal[\big]{(A_{k, pq}, \vec{c}^{\infty}_k) \mid k \in N(pq) } \Bigr)\\
\implies & \sum_{k \in N(ij)} f \left( A_{k, ij}, \vec{c}^{\infty}_{k} \right) = \sum_{k \in N(pq)} f \left( A_{k, pq}, \vec{c}^{\infty}_{k} \right) \\
\implies & \sum_{k \in N(ij)}  A_{k, ij} f \left( \vec{c}^{\infty}_{k} \right) = \sum_{k \in N(pq)} A_{k, pq} f \left(\vec{c}^{\infty}_{k} \right), \text{(we overload $f$ for ease of notation)} \\
\implies & \sum_{k \in [m]}  A_{k, ij} f \left( \vec{c}^{\infty}_{k} \right) = \sum_{k \in [m]} A_{k, pq} f \left(\vec{c}^{\infty}_{k} \right)
\end{aligned}
\end{equation*}
The last implication holds as $\forall k \notin N(ij) \colon A_{k,ij} = 0$, by definition of neighbors. Consequently, for any function $f$ mapping the $\vec{c}^{\infty}$ to real numbers, the summations over these multisets are identical. In the inductive steps, we pick function $f$ to map stable coloring to intermediate dual variables, $f(\vec{c}^{\infty}_k) = y_k^t$ for some suitable $t$, as we will show later. 

\xhdr{Fact 3} Similarly for constraint nodes, consider $k, l \in [m]$. According to \Cref{def:2fwl}, if two constraint nodes have the same stable colors, we have, for any function $g$:
\begin{equation*}
\begin{aligned}
& \vec{c}^{\infty}_k = \vec{c}^{\infty}_{l} \\
\implies & \mathsf{hash} \Bigl( \vec{c}^{\infty}_{k}, \dgal[\big]{\left(A_{k,ij}, \vec{v}_{ij}^{\infty} \right) \mid (i,j) \in N(k)} \Bigr) = \mathsf{hash} \Bigl( \vec{c}^{\infty}_{l}, \dgal[\big]{\left(A_{l,ij}, \vec{v}_{ij}^{\infty} \right) \mid (i,j) \in N(l)} \Bigr) \\
\implies & \mathsf{hash} \Bigl( \dgal[\big]{\left(A_{k,ij}, \vec{v}_{ij}^{\infty} \right) \mid (i,j) \in N(k)} \Bigr) = \mathsf{hash} \Bigl( \dgal[\big]{\left(A_{l,ij}, \vec{v}_{ij}^{\infty} \right) \mid (i,j) \in N(l)} \Bigr) \\
\implies & \sum_{(i,j) \in N(k)} g \left(A_{k,ij}, \vec{v}_{ij}^{\infty} \right) = \sum_{ (i,j) \in N(l)} g \left(A_{l,ij}, \vec{v}_{ij}^{\infty} \right)\\
\implies & \sum_{(i,j) \in N(k)}  A_{k,ij} g \left( \vec{v}_{ij}^{\infty} \right) = \sum_{(i,j) \in N(l)} A_{l,ij} g \left( \vec{v}_{ij}^{\infty} \right), \text{(we overload $g$ for ease of notation)} \\
\implies & \sum_{(i,j) \in [n]^2}  A_{k,ij} g \left( \vec{v}_{ij}^{\infty} \right) = \sum_{(i,j) \in [n]^2} A_{l,ij} g \left( \vec{v}_{ij}^{\infty} \right)\\
\end{aligned}
\end{equation*}
In the inductive steps, we will pick function $g$ to map stable coloring to intermediate primal variables, similar to the choice of $f$.

\xhdr{Proof by induction}
We now prove the theorem by induction on $t$.
At initialization $t=0$, we have $\vec{X}^0 = \bm{0}$ and $\vec{y}^0 = \bm{0}$. The implications $\vec{v}^{\infty}_{ij} = \vec{v}^{\infty}_{pq} \implies X^0_{ij} = X^0_{pq}$ and $\vec{c}_k^{\infty} = \vec{c}_l^{\infty} \implies y_k^0 = y_l^0$ hold trivially.

Suppose the hypothesis holds for iteration $t \geq 0$. We show it holds for iteration $t+1$.

\textbf{1. Primal Update.}
In the primal update step, we must evaluate the term $\Acalst(\vec{y}^t) = \sum_{k=1}^m y^t_k \vec{A}_k$.
We define the function $f$ in Fact 2 as $f\left( \vec{c}_k^\infty \right) = y^t_k$. This function is well-defined because, by the inductive hypothesis, $y^t_k$ is refined by the color $\vec{c}_k^\infty$, i.e., $\vec{c}_k^\infty = \vec{c}_l^\infty \implies y^t_k = y^t_l$.
Applying Fact 2 with this $f$, we obtain:
\begin{equation*}
\vec{v}^{\infty}_{ij} = \vec{v}^{\infty}_{pq} \implies \sum_{k \in [m]}  A_{k, ij} y^t_k = \sum_{k \in [m]} A_{k, pq} y^t_k.
\end{equation*}
The implication above shows that $\Acalst(\vec{y}^t)$ is refined by the coloring $\vec{v}^\infty$.
Now consider the full term inside the PSD projection in \Cref{eq:pdhg_proof}:
\begin{equation*}
\vec{Z}^t \coloneq  \dfrac{\vec{X}^t - \alpha_t \Acalst(\vec{y}^t) - \alpha_t \vec{C}}{1 + \alpha_t \varepsilon}.
\end{equation*}
Since $\vec{X}^t$ (by hypothesis), $\vec{C}$ (by Fact 1), and $\Acalst(\vec{y}^t)$ (by derivation) are all refined by the coloring $\vec{v}^\infty$, their linear combination $\vec{Z}^t$ also is.
Finally, the update is $\vec{X}^{t+1} = \Proj_{\mathbb{S}^n_+} (\vec{Z}^t)$. The PSD projection depends only on the spectral decomposition as detailed in \Cref{alg:psd_projection}. Specifically, spectral decomposition is performed on $\vec{Z}^t$, and the Frobenius covariants corresponding to negative eigenvalues are removed. By \Cref{lem:multiset_spectral}, we know $\vec{v}^{\infty}_{ij} = \vec{v}^{\infty}_{pq} \implies X^{t+1}_{ij} = X^{t+1}_{pq}$, that is, $\vec{v}^{\infty}$ refines $\vec{X}^{t+1}$. 

\textbf{2. Dual Update.}
In the dual update step, we evaluate the term inside the operator $\Acal$:
\begin{equation*}
\vec{W}^{t+1} \coloneq \vec{X}^{t+1} + \theta_t \left( \vec{X}^{t+1} - \vec{X}^t \right).
\end{equation*}
Since both $\vec{X}^{t+1}$ and $\vec{X}^t$ are refined by $\vec{v}^\infty$, so is $\vec{W}^{t+1}$.
We define the function $g$ in Fact 3 as $g\left( \vec{v}_{ij}^{\infty} \right) = \left(\vec{W}^{t+1}\right)_{ij}$. This is well-defined because the value of $\left(\vec{W}^{t+1}\right)_{ij}$ is refined by the color $\vec{v}_{ij}^{\infty}$.
Applying Fact 3 with this $g$, we obtain:
\begin{equation*}
\vec{c}^{\infty}_{k} = \vec{c}^{\infty}_{l} \implies \left[ \Acal(\vec{W}^{t+1}) \right]_k = \left[ \Acal(\vec{W}^{t+1}) \right]_l.
\end{equation*}
Since $\vec{y}^t$ (by hypothesis) and $\vec{b}$ (by Fact 1) are both refined by the coloring $\vec{c}^\infty$, the sum $\vec{y}^{t+1}$ is refined by the coloring.

In summary, we have proven by induction that if the stable coloring $\vec{v}^\infty$ and $\vec{c}^\infty$ refine $\vec{X}^{t}$ and $\vec{y}^t$ in PDHG algorithm, they also refine $\vec{X}^{t+1}$ and $\vec{y}^{t+1}$. 

Since PDHG is guaranteed to converge to $\vec{X}^*$ and $\vec{y}^*$, this completes the proof.
\end{proof}

\begin{remark}
Notably, this result extends to comparisons across distinct SDP instances. The pairs $(i, j)$ and $(p, q)$ do not need to belong to the same problem: provided consistent initialization, if their stable $\vctfwl$ colorings are identical, their corresponding optimal values are guaranteed to be equal. This implies a form of universality, suggesting that the $\vctfwl$ hierarchy captures fundamental structural invariants that generalize across the whole space of SDP problems. 
\end{remark}

\subsection{Proof for complexity analysis}
\label{sec:prof_complex}

\begin{lemma}[Restatement of Theorem 12 of \citet{berkholz2017tight}]
\label{lem:complex_1wl}
Let $G = (V, E, c)$ be an edge colored digraph with $N = |V|$ and $M = |E|$. In time $O((N + M) \log(N + M))$, a canonical coarsest edge-color stable coloring can be computed for $G$.
\end{lemma}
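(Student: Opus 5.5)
Since this lemma restates Theorem~12 of \citet{berkholz2017tight}, the plan is to reconstruct their argument: a partition-refinement algorithm in the style of Hopcroft, using the ``process the smaller half'' principle. First I would reduce edge colors to vertex-level refinement. We seek the coarsest coloring $\chi$ of $V$ refining the initial vertex colors such that, for every vertex $v$, every class $S$ of $\chi$ and every edge color $e$, the numbers
\[
|\{w\in S \mid (v,w)\in E,\ c(v,w)=e\}| \quad\text{and}\quad |\{w\in S \mid (w,v)\in E,\ c(w,v)=e\}|
\]
depend only on $\chi(v)$. The edge-color stable coloring is then read off as the pair of endpoint colors together with the edge color. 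The initial edge colors are normalised to $\{1,\dots,M\}$ by sorting, at cost $O(M\log M)$. Each vertex's incident edges are then grouped by edge color in the adjacency lists, so that scanning the edges into a class splits naturally by color.

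Next I would run the refinement with a worklist of ``splitter'' classes, initialised with all initial classes. To process a splitter $S$, scan all edges incident to $S$ (in both directions) and, for each touched vertex $v$ and edge color $e$, accumulate the count of $e$-edges between $v$ and $S$. Then split every class $D$ containing touched vertices according to these count vectors, with untouched vertices forming the zero-count block. Using bucket or radix structures on the touched entries, processing $S$ costs $O\bigl(\sum_{v\in S}\deg(v)\bigr)$ plus logarithmic overhead. When $D$ splits into $D_1,\dots,D_r$, all pieces except one of maximum size are added to the worklist. If $D$ is itself still pending, all pieces are added. Correctness is the standard argument: refining by $D$ and by all but one piece implies refinement by the remaining piece, because its counts are the difference of the others. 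Hence the final partition is stable, and it is the coarsest because every split is forced by the stability condition.

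For the running time, whenever a vertex lies in a newly enqueued splitter, the class containing it has at most half the size of its previous splitter class. So each vertex's edges are scanned $O(\log N)$ times, and the total cost is $O((N+M)\log(N+M))$.

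The main obstacle I expect is canonicity, meaning that the output colors must be invariant under isomorphism. The ``drop the largest piece'' rule and the order of the worklist can depend on vertex identities. My plan is to order pieces by the canonical key (old class name, sorted count signature) and to break ties among equal-size largest pieces by this key. Fresh class names are assigned in the order of these keys, and the worklist is processed in order of class names. With these choices, every decision depends only on isomorphism-invariant data. An induction over the refinement steps then shows that isomorphic inputs produce identical color names, which gives the canonical coarsest stable coloring within the stated bound.
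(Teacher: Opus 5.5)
The paper does not prove this lemma; it only cites it. Your reconstruction has the right skeleton, and three parts of it are sound. Hopcroft's rule of skipping one largest piece is valid for counting stability, because counts are additive. The halving argument correctly gives $O(\log N)$ processed splitters per vertex. Letting every decision depend only on class names, counts and sizes is the right route to canonicity.

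\textbf{The gap is in the running time}, at the step you cover with ``plus logarithmic overhead''. You process a splitter $S$ for all edge colours at once and name the new pieces by the key (old class name, sorted count signature). To build signatures sorted by colour, you need the edge colours occurring on the edges into $S$ in increasing order. Reusable buckets over the colour alphabet $[M]$ let you group equal signatures in linear time, but they cannot list the touched colours in order without sorting them. Your pre-grouped adjacency lists do not help: they group the edges of each $w\in S$, not the edges of each touched vertex $v$ into $S$. So the natural implementation pays $O(k\log k)$ per splitter for $k$ touched entries, and you give nothing cheaper. This cost is charged per scan, not per newly created class, so the halving argument does not absorb it. With unboundedly many edge colours, your analysis therefore gives $O(M\log N\cdot\log M)$, not $O((N+M)\log(N+M))$.

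\textbf{How to close it.} The standard way to get exactly the stated bound, and what the $\log(N+M)$ reflects, is to eliminate edge colours first.
\begin{itemize}
\item Subdivide every edge $e=(u,w)$ by a new vertex $x_e$ whose initial colour is $c(e)$, taken from a range disjoint from the vertex colours.
\item Counting out-neighbours, the coarsest stable colouring of this digraph, restricted to $V$, is the coarsest edge-colour stable colouring of $G$. In one direction, extend a stable $\chi$ by $\chi(x_e)=(c(e),\chi(w))$; in the other, restrict.
\item The new graph has $N+M$ vertices and $2M$ edges. Each splitter now induces a single integer count per touched vertex, so counting sort suffices.
\item Comparison sorting is needed only to order the classes that actually split, and there are at most $N+M$ of these over the whole run.
\item The edge-colour order is paid for once, at initialisation. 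After that it is carried by the canonical names of the classes of subdivision vertices.
\end{itemize}

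Two smaller points. First, the stability notion counts neighbours in one direction only. The paper's auxiliary-graph reduction relies on this, which is why it adds the reverse edge sets explicitly. Counting both in- and out-edges, as you do, gives a generally finer colouring. Second, the object computed is a vertex colouring, so there is no edge colouring to read off.
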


\begin{proposition}[Restatement of \cref{thm:complexity}]
Given an SDP of $n \times n$ variables and $m$ constraints, $\text{nnz}(\tens{A})$ denotes the number of non-zero entries in $\tens{A}$, then $\vctfwl$ converges in time 
$$\cO\left( (n^3  + \text{nnz}(\tens{A})) \log n \right).$$
\end{proposition}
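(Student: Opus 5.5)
We reduce $\vctfwl$ to the edge-colour refinement of \Cref{lem:complex_1wl} on an auxiliary edge-coloured digraph $G$ of size $N+M=\cO(n^3+\text{nnz}(\tens{A}))$. Berkholz et al.\ then give a stable colouring in time $\cO((N+M)\log(N+M))$. Since $\text{nnz}(\tens{A})\le m n^2$ and $m=\cO(n^2)$ by the linear-independence assumption, we have $\log(N+M)=\cO(\log n)$, which yields the claimed bound.

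\textbf{Construction of $G$.} The vertex set contains the following.
\begin{itemize}
\item One vertex per variable pair $(i,j)\in[n]^2$. These are $n^2$ vertices, coloured initially by $\mathsf{init}_{\text{v}}(C_{ij},\mathbb{I}_{i=j})$.
\item One vertex per constraint $k\in[m]$, coloured by $\mathsf{init}_{\text{c}}(b_k)$.
\item One auxiliary ``triangle'' vertex $\tau_{iuj}$ for each triple $(i,u,j)\in[n]^3$.
\end{itemize}
The edges are as follows.
\begin{itemize}
\item For every nonzero $A_{k,ij}$, two directed edges $k\to(i,j)$ and $(i,j)\to k$, both carrying edge colour $A_{k,ij}$. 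There are $\cO(\text{nnz}(\tens{A}))$ such edges.
\item Edges $\tau_{iuj}\to(i,j)$ with a fixed colour $\gamma_0$.
\item Edges $(u,j)\to\tau_{iuj}$ and $(i,u)\to\tau_{iuj}$, both with the \emph{same} colour $\gamma_1$.
\end{itemize}
There are $\cO(n^3)$ triangle edges, so $N=n^2+m+n^3$ and $M=\cO(n^3+\text{nnz}(\tens{A}))$.

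Using a single colour $\gamma_1$ on both incoming triangle edges is the key design choice. It makes the refined colour of $\tau_{iuj}$ encode exactly the unordered multiset $\dgal{\vec{v}_{uj},\vec{v}_{iu}}$ rather than an ordered tuple. Consequently, the multiset of in-neighbours of $(i,j)$ along $\gamma_0$ edges reproduces $\dgal{\dgal{\vec{v}_{uj},\vec{v}_{iu}}\mid u\in[n]}$, while the constraint edges reproduce $\dgal{(A_{k,ij},\vec{c}_k)\mid k\in N(ij)}$.

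\textbf{Correctness of the simulation (main obstacle).} One iteration of $\vctfwl$ corresponds to two rounds of refinement on $G$, since information passes through $\tau$. Interleaving constraint updates is harmless because stable colourings do not depend on scheduling. I would prove by a two-sided induction that the coarsest stable colouring of $G$, restricted to variable and constraint vertices, coincides with the stable partition of $\vctfwl$.

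\emph{Refinement of $\vctfwl$.} Stability of $G$ implies that the $\vctfwl$ hash inputs agree on equally coloured vertices, so the restricted partition is $\vctfwl$-stable. Hence it refines the coarsest such partition.

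\emph{The converse.} Starting from the $\vctfwl$ stable colouring, colour each $\tau_{iuj}$ by $\dgal{\vec{v}^\infty_{uj},\vec{v}^\infty_{iu}}$. This colouring is stable on $G$. Care is needed here: $\tau$ vertices also have out-edges and the refinement in \Cref{lem:complex_1wl} may consider both directions. Each $\tau$ has exactly one out-edge, to $(i,j)$, so I would either use in-neighbourhoods only or add $\vec{v}^\infty_{ij}$ to the colour of $\tau_{iuj}$, which preserves stability. Likewise the symmetric pair of constraint edges matches the bidirectional $\vctfwl$ updates. Once this equivalence is checked, the complexity bound follows directly from \Cref{lem:complex_1wl}.
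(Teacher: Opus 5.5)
Your proposal is correct and is essentially the paper's proof: the same auxiliary edge-coloured digraph with $n^2$ variable vertices, $m$ constraint vertices and $n^3$ triangle vertices, constant colours on the variable-to-triangle edges so that unordered pairs are recovered, $M = 3n^3 + 2\,\text{nnz}(\tens{A})$ edges, and \Cref{lem:complex_1wl} combined with $m=\cO(n^2)$ to get $\log(N+M)=\cO(\log n)$. Your two-sided comparison of coarsest stable partitions justifies the simulation more cleanly than the paper's round-by-round ``half-step'' argument, and your caveat about edge direction in \Cref{lem:complex_1wl} addresses a point the paper leaves implicit.
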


\begin{proof}
We analyze the computational complexity of $\vctfwl$ by demonstrating that its update rule is algorithmically equivalent to standard $\WLk{1}$ (color refinement) operating on a directed, edge-colored auxiliary graph. This equivalence allows us to invoke the tight complexity bounds established in \cref{lem:complex_1wl}.

\textbf{Auxiliary graph construction} 
Let $G_{\text{aux}} = ({V}, {E})$ be the implicit auxiliary graph defined by the update rules in \cref{def:2fwl}.
The node set ${V}$ is the disjoint union of variable nodes and constraint nodes, and a set of higher-order nodes: ${V} = V_{\text{var}} \cup V_{\text{con}} \cup V_{\text{tri}}$.
\begin{enumerate}
\item $V_{\text{var}}$ consists of the $n^2$ variable nodes. These are initialized with colors $\vec{a}_{ij}^0 \coloneq \alpha_{\text{var}}(C_{ij})$, where $\alpha_{\text{var}}$ is a canonical mapping for variables.
\item $V_{\text{con}}$ consists of the $m$ constraint nodes. These are initialized with colors $\vec{b}^0_{k} \coloneq \alpha_{\text{con}}(b_k)$, where $\alpha_{\text{con}}$ is distinct from $\alpha_{\text{var}}$.
\item We further construct a node set $V_{\text{tri}}$. For every variable node with index $(i,j)$, we construct $n$ nodes with index $(i,u,j)$ for $u \in [n]$. These are initialized with colors $\vec{c}^0_{iuj} = \alpha_{\text{tri}}(\dgal{C_{iu}, C_{uj}})$. 
\end{enumerate}
Since the initialization functions $\alpha_{(\cdot)}$ have disjoint ranges, the color classes of three node sets remain disjoint throughout the execution. The total number of nodes is $N = n^2 + m + n^3$.

The edge set ${E}$ decomposes into the following:
\begin{enumerate}
\item $E_{\text{tri-v}} \subseteq V_{\text{tri}} \times V_{\text{var}}$: These edges connect nodes from $V_{\text{tri}}$ to $V_{\text{var}}$. Specifically, every variable node indexed $(i,j)$ gets links from $\{(i,u,j) \mid u \in [n] \}$. This requires $n^3$ edges. The edge colors can be a constant. 
\item $E_{\text{v-tri}} \subseteq V_{\text{var}} \times V_{\text{tri}}$: This is the other direction of above. Every higher order node $(i,u,j)$ gets links from variable nodes $(i,u)$ and $(u,j)$, thus $2n^3$ edges. The edge colors can be a constant. 
\item $E_{\text{v-c}} \subseteq V_{\text{var}} \times V_{\text{con}}$: These edges encode the $\tens{A}$ structure. An edge exists from $(i,j) \in V_{\text{var}}$ to $k \in V_{\text{con}}$ if and only if $A_{k,ij} \neq 0$. The edge colors are defined by the tensor entries: $e_{ij,k} = \alpha_{A}(A_{k,ij})$. The size of this set is $\text{nnz}(\tens{A})$.
\item $E_{\text{c-v}} \subseteq V_{\text{con}} \times V_{\text{var}}$: These edges are the opposite direction from $E_{\text{v-c}}$. The edge colors are the same: $e_{k,ij} = \alpha_{A}(A_{k,ij})$.
\end{enumerate}
The total number of edges is $M = 3n^3 + 2\text{nnz}(\tens{A})$.

\textbf{Update equivalence}
We now show that the $\WLk{1}$ update on this edge-colored graph is equivalent to the $\vctfwl$ update. 

For a constraint node, it updates in the sense of $\WLk{1}$ as:
\begin{equation*}
\vec{b}_{k}^{t+1} \coloneq \mathsf{hash}\left(\vec{b}_{k}^{t}, \dgal{\left({e}_{ij,k}^{t}, \vec{a}_{ij}^t \right) \mid (i,j) \colon A_{k,ij} \neq 0} \right)
\end{equation*}
which has exactly the same form as in \cref{def:2fwl}.

For a variable node, we first update the higher-order nodes:
\begin{equation*}
\vec{c}_{iuj}^{t+1/2} \coloneq \mathsf{hash}\left(\vec{c}_{iuj}^{t}, \dgal{ \vec{v}_{iu}^t, \vec{v}_{uj}^t } \right),
\end{equation*}
regardless of the edge colors. Then, we update the variable nodes:
\begin{equation*}
\vec{a}_{ij}^{t+1} \coloneq \mathsf{hash}\left(\vec{a}_{ij}^{t}, \dgal{\vec{c}_{i0j}^{t+1/2}, \cdots, \vec{c}_{inj}^{t+1/2}, (e_{k,ij}, \vec{b}_{k}^{t}) \cdots} \right)
\end{equation*}
Since the node types $V_{\text{tri}}$ and $V_{\text{con}}$ are distinguished by disjoint color spaces, we can partition the multiset into two distinct multisets without ambiguity:
\begin{equation*}
\vec{a}_{ij}^{t+1} \coloneq \mathsf{hash}\left(\vec{a}_{ij}^{t}, \dgal{\vec{c}_{iuj}^{t+1/2} \mid u \in [n] }, \dgal{(e_{k,ij}, \vec{b}_{k}^{t}) \mid k \colon A_{k,ij} \neq 0 } \right).
\end{equation*}
This form matches the variable update in \cref{def:2fwl}. Therefore, the variable update in \cref{def:2fwl} can be implemented with 2 steps of $\WLk{1}$. 

Hence, we have shown the equivalence of our $\vctfwl$ and standard $\WLk{1}$. Applying the bound in \cref{lem:complex_1wl} to our setting: $N = |{V}| = n^2 + m + n^3$ and $M = |{E}| = \cO(n^3 + \text{nnz}(\tens{A}))$. Since $m$ is bounded by $\mathcal{O}(n^2)$ as we assume linearly independent constraints, $\cO((N+M)) = \mathcal{O}(n^3 + \text{nnz}(\tens{A}))$, and $\cO( \log (N + M)) = \cO(\log n)$. Substituting these values yields the total time complexity: $\mathcal{O}\big((n^3 + \text{nnz}(\tens{A})) \log n\big)$. 
\end{proof}

\subsection{Proof for neuralization}
\label{sec:proof_vc2fmpnn}

\begin{remark}
It is important to clarify the terminology regarding the $\mathsf{hash}$ function utilized in the $\mathsf{WL}$ test aggregation step. In standard computer science contexts, cryptographic or data-structure hash functions are inherently discrete and non-continuous. However, in the theoretical framework of GNN expressivity, a hash function simply denotes an injective mapping that assigns a unique, deterministic representation to any distinct multiset of neighbor features. Crucially, this injective mapping is not required to be discrete. As established in the foundational GNN expressivity literature \citep{Mor+2019,xu2018how}, there exist continuous, injective functions capable of perfectly resolving bounded multisets. Because these idealized $\mathsf{WL}$ aggregation functions can be chosen to be strictly continuous, they satisfy the necessary conditions to be arbitrarily approximated by MLPs via the universal approximation theorem.
\end{remark}

\begin{proposition}[Restatement of \Cref{thm:vc2fmpnn}]
There exists a set of parameters for the functions $\mathsf{INIT}, \mathsf{UPD}, \mathsf{MSG}, \mathsf{MAP}$, such that $\vctfmpnn$ has maximal expressivity equal to the $\vctfwl$.
\end{proposition}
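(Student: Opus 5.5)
The proof has two directions. The easy one is that $\vctfmpnn$ is never more expressive than $\vctfwl$; the substantive one is that some choice of parameters makes it exactly as expressive.

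\textbf{Upper bound.} I argue by induction on $t$ that $\vec{v}^t_{ij}=\vec{v}^t_{pq}$ implies $\vec{h}^t_{ij}=\vec{h}^t_{pq}$, and likewise for constraint nodes, for every parameter choice.
\begin{itemize}
\item At $t=0$ this holds because $\mathsf{INIT}_{\text{v}}$ and $\mathsf{INIT}_{\text{c}}$ are functions of the same inputs $(C_{ij},\mathbb{I}_{i=j})$ and $b_k$ that $\mathsf{init}$ encodes injectively.
\item For the inductive step, equal $\vctfwl$ colors give equal hash arguments. The messages in \Cref{eq:vc2fmpnn} are sums over exactly those multisets, and each summand depends only on the earlier features. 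In particular, $\mathsf{MAP}(\vec{h}_{uj})+\mathsf{MAP}(\vec{h}_{iu})$ is symmetric in the unordered pair $\dgal{\vec{v}_{uj},\vec{v}_{iu}}$.
\item Hence the messages, and therefore $\mathsf{UPD}$, agree.
\end{itemize}

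\textbf{Lower bound.} I construct parameters so that every layer is injective on the relevant (finite) set of colors, following \citet{xu2018how} and \citet{Mor+2019}. Fix the instance size, or any finite family of instances. Only finitely many colors and edge weights $A_{k,ij}$ occur, so at each iteration I can injectively encode colors as one-hot vectors, or as distinct reals, of bounded dimension.

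The inner unordered pair is the delicate part. I need $\mathsf{MAP}$ such that $\mathsf{MAP}(a)+\mathsf{MAP}(b)$ determines the multiset $\dgal{a,b}$. One-hot encoding works directly, since the sum of two one-hot vectors is the count vector of a 2-multiset. Next, $\mathsf{MSG}_{\text{v}\rightarrow\text{v}}$ must map these finitely many count vectors to one-hot codes of the pair. The outer sum over $u\in[n]$ then yields the count vector of the multiset of pairs, which is injective.

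The other two aggregations are handled the same way. For $\mathsf{MSG}_{\text{c}\rightarrow\text{v}}$ and $\mathsf{MSG}_{\text{v}\rightarrow\text{c}}$, I map the finitely many tuples $(A_{k,ij},\text{color})$ to one-hot vectors, so their sums are multiset counts. Finally, $\mathsf{UPD}$ must injectively combine the previous feature with the two messages: I use concatenation followed by a re-encoding into one-hot form.

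\textbf{Realizing these maps as MLPs.} Each map is a function on a finite set of points. An MLP, for example one with ReLU activations and enough width, can interpolate any function on finitely many points exactly. Alternatively, as the preceding remark notes, I can take a continuous injective function and approximate it by universal approximation, with tolerance below half the minimum separation between distinct codes. Either way, the induction gives that $\vec{h}^t$ induces the same partition as $\vec{v}^t$ for all $t$ up to stabilization. This requires at most $n^2+m$ layers, since each non-stable round strictly refines the partition.

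\textbf{Main obstacle.} I expect the hardest step to be the unordered inner pair, that is, showing that a sum of a shared $\mathsf{MAP}$ followed by $\mathsf{MSG}$ injectively encodes a multiset of 2-multisets. The one-hot sum argument is what I would try first. A secondary subtlety is uniformity over real-valued inputs $C_{ij}$, $b_k$, $A_{k,ij}$. I would sidestep this by stating the result non-uniformly, for a fixed finite set of instances, consistent with the limitations acknowledged in \Cref{sec:limits}.
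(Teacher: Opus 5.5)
Your proof is correct and has the same structure as the paper's. The upper bound is by induction; the paper just appeals to the unrolling-tree argument of \citet{Mor+2019}, and your explicit observation that $\mathsf{MAP}(\vec{h}_{uj})+\mathsf{MAP}(\vec{h}_{iu})$ is symmetric is what makes it work for unordered pairs. The lower bound makes every aggregation injective, with the nested multiset handled by an inner shared-$\mathsf{MAP}$ sum followed by an outer $\mathsf{MSG}$ sum.

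The difference is in how the injective multiset maps are realized. The paper assumes a compact set $\mathcal{K}$ of real-valued data and uses continuous injective encodings (power-sum polynomials, as in \citet{Mar+2019}). It approximates these by MLPs and closes by taking the approximation error below ``the separation margin of the distinct colors.'' You instead restrict to a finite family of instances and encode colors one-hot, so sums are literally count vectors of (2-)multisets. You then realize each map by exact interpolation on finitely many points.

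Your route is more elementary and fully rigorous: there is no approximation error to propagate through layers and no margin to control. The paper's framing suggests a guarantee uniform over all instances of a given size. However, a positive separation margin exists only when finitely many colors occur, and over a continuum of inputs an $\epsilon$-approximation of an injective map need not stay injective. So the paper's argument effectively establishes the same non-uniform statement that you state explicitly.

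One small correction: fixing the instance size alone does not make the set of colors finite when $\vec{C}$, $\tens{A}$, $\vec{b}$ are real-valued. Only the finite family does, as you acknowledge at the end.
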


First, we state a lemma.
\begin{lemma}
\label{lemma:multiset_inject}
Let $\mathcal{K} \subset \mathbb{R}^d$ be a compact set, and let $\mathcal{X}$ be the space of finite multisets of bounded maximum size with elements drawn from $\mathcal{K}$. There exists an integer $p \ge 1$ and a continuous, injective function 
$$f \colon \mathcal{X} \to \mathbb{R}^p$$
such that for any two multisets $X_1, X_2 \in \mathcal{X}$,
$$f(X_1) = f(X_2) \iff X_1 = X_2.$$
\end{lemma}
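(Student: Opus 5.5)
Fix the maximum multiset size $N$ and identify a multiset $X=\dgal{\vec{x}_1,\dots,\vec{x}_r}$ with $r\le N$ with a padded representation. The plan is to build $f$ from two pieces: a finite family of continuous symmetric functions that separates multisets of a fixed size, and an extra coordinate that records the size.

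First reduce to multisets of exactly $N$ elements. Pick a point $\vec{z}\notin\mathcal{K}$, say $\vec{z}=(R+1)\vec{e}_1$ with $R\ge\max_{\vec{x}\in\mathcal{K}}\lVert\vec{x}\rVert$. Pad $X$ with $N-r$ copies of $\vec{z}$ to get $\tilde X$. Padding is injective, because the number of copies of $\vec{z}$ recovers $r$ and the remaining elements recover $X$. For continuity, the topology on $\mathcal{X}$ is the natural one in which multisets of different sizes lie in different components. The map is then continuous on each component, and since $f$ will record $r$ explicitly we lose nothing by treating sizes separately. We also append $r$ as a coordinate of $f$.

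Next, for multisets of fixed size $N$ in $\mathbb{R}^d$, use multisymmetric power sums:
\[
f(\tilde X)=\Bigl(\sum_{l=1}^{N}\vec{x}_l^{\,\vec{\beta}}\Bigr)_{\vec{\beta}\in\mathbb{N}^d,\ 1\le|\vec{\beta}|\le N},
\]
where $\vec{x}^{\vec{\beta}}=\prod_s x_s^{\beta_s}$. This map is polynomial, hence continuous, and it lands in $\mathbb{R}^p$ with $p=\binom{N+d}{d}-1$ (plus the size coordinate). The key step is injectivity, and I would prove it by reduction to the univariate case. Suppose $X_1,X_2$ share all these power sums. For any $\vec{w}\in\mathbb{R}^d$, expand $\sum_l(\vec{w}^\top\vec{x}_l)^k$ via the multinomial theorem: it is a linear combination of the multisymmetric power sums of degree $k$. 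So the scalar multisets $\dgal{\vec{w}^\top\vec{x}_l}$ have equal power sums up to degree $N$. By Newton's identities these determine the elementary symmetric polynomials, hence the monic polynomial $\prod_l(t-\vec{w}^\top\vec{x}_l)$, hence the multiset of projections. Now choose $\vec{w}$ outside the finitely many hyperplanes $\{\vec{w}:\vec{w}^\top(\vec{a}-\vec{b})=0\}$ for distinct points $\vec{a},\vec{b}$ in $X_1\cup X_2$. Projection is then injective on the union, so equal projected multisets force $X_1=X_2$.

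The main obstacle is this injectivity argument, specifically the step from the projected scalar multisets back to the vector multisets. The generic-direction choice handles it. An alternative route, if a slicker argument is wanted, is the standard result of \citet{xu2018how}/\citet{Mor+2019} on sum decompositions, but the power-sum construction is self-contained and gives an explicit $p$. Compactness of $\mathcal{K}$ is used only to make room for the padding point and to keep $f$ bounded. This boundedness is what later lets MLPs approximate the composed continuous maps uniformly.
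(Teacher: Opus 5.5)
Your proposal is correct and follows essentially the same route as the paper, which simply invokes the power-sum multisymmetric polynomial encoding of Proposition 1 of \citet{Mar+2019} (and Lemma 4 of \citet{xu2018how} in the countable case). Relative to that citation, your generic-direction reduction to Newton's identities supplies a self-contained injectivity proof in place of the invariant-theoretic fact that power sums of degree at most $N$ separate $S_N$-orbits. Your padding with a point outside $\mathcal{K}$ also explicitly covers the variable-size multisets, which the fixed-size cited result does not literally handle.
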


This is exactly a restatement of Lemma 4 of \citet{xu2018how}, if the multiset space is discrete and countable; or of Proposition 1 of \citet{Mar+2019}, if it is a multiset of real number space, by using the power-sum multi-symmetric polynomial technique. In our case, unlike discrete graphs, we assume that the problem coefficients $\vec{C}, \tens{A}, \vec{b}$ as well as the intermediate outputs of the neural networks lie within a compact subset $\mathcal{K}$, i.e., real space where the coefficients don't go to infinity. This ensures that the inputs are bounded and allows for the application of the Stone-Weierstrass theorem for approximation. So we can reuse the proof technique in \citet{Mar+2019}. Now let's prove \Cref{thm:vc2fmpnn}. 

\begin{proof}[Proof for \Cref{thm:vc2fmpnn}]
We prove the equality of expressivity by showing both directions of refinement.

$\vctfwl \sqsubseteq \vctfmpnn$: 
First, $\vctfmpnn$ is no more expressive than $\vctfwl$. It is easy to verify with the same logic as Theorem 1 of \citet{Mor+2019}. The $\vctfmpnn$ updates are deterministic functions of the unrolling trees defined by the $\vctfwl$ color refinement. Therefore, if $\vctfwl$ produces identical colors for two variable pairs, $\vctfmpnn$ must produce identical embeddings.

$\vctfmpnn \sqsubseteq \vctfwl$:
We show that $\vctfmpnn$ can distinguish any two instances distinguishable by $\vctfwl$ at every iteration $t$. 

At initialization $t=0$, the initial features are directly encoded from the problem data. Since we assume the input space $\mathcal{K}$ is compact, there exist MLPs by the universal approximation theorem that can implement the initialization functions injectively, preserving the distinction between different initial values of $C_{ij}$ or $b_k$.
At iteration $t > 0$, we consider the update for constraint and variable nodes, respectively. 

\xhdr{Constraint node} Consider the update of a constraint node $k$. The update requires an injective mapping of the tuple containing the constraint node's previous feature $\vec{h}^{t-1}_k$ and the multiset of its variable neighbors' features together with the edge weights: $\dgal[\big]{\left(A_{k,ij}, \vec{v}_{ij}^{t-1} \right) \mid (i,j) \in N(k)}$. 
According to \Cref{lemma:multiset_inject}, there exists an injective function that uniquely encodes this multiset. As shown in the constructive proof of \citet[Proposition 1]{Mar+2019}, this injective function can be realized as a summation of power-sum polynomials. 
Consequently, the MLP in the term
$$\vec{m}_{k,\text{v} \rightarrow \text{c}}^{t} \coloneq \sum_{(i,j) \in N(k)} \mathsf{MSG}^{t}_{\text{v} \rightarrow \text{c}}\left(A_{k,ij}, \vec{h}_{ij}^{t-1}\right)$$
can be parameterized to approximate these continuous mappings, ensuring that the summation yields a unique representation of the multiset.
Furthermore, the injective combination of the node's previous state $\vec{h}_k^{t-1}$ and this multiset embedding $\vec{m}_{k,\text{v} \rightarrow \text{c}}^{t}$ can be approximated arbitrarily well by concatenation of them and application of another MLP:
$$\vec{h}^t_{k} \coloneq \mathsf{UPD}_{\text{c}}^{t} \left( \vec{h}^{t-1}_{k}, \vec{m}_{k,\text{v} \rightarrow \text{c}}^{t} \right).$$ 
Since the intermediate space $\mathcal{K}$ is compact, there exist MLPs that universally approximate the injective $\mathsf{hash}$ function in \Cref{def:2fwl}. 

\xhdr{Variable node} Consider the $\vctfwl$ update for a variable index $(i,j)$. $\vctfwl$ updates the variable color based on the variable node's previous feature, the multiset of constraint neighbors and the multiset of variable neighbor pairs. For the multiset of constraint neighbors, it aggregates the multiset $\dgal[\big]{(A_{k, ij}, \vec{c}^{t-1}_k) \mid k \in N(ij) }$, we apply \Cref{lemma:multiset_inject} once, and the term 
$$\vec{m}_{ij,\text{c} \rightarrow \text{v}}^{t} \coloneq \sum_{k \in N(ij)} \mathsf{MSG}^{t}_{\text{c} \rightarrow \text{v}}\left(A_{k,ij}, \vec{h}_k^{t-1}\right)$$ implements the injective mapping from constraint neighbors to a vector embedding. 
The multiset of variable pairs are formed by $\dgal[\big]{ \dgal{ \vec{v}_{uj}^{t-1}, \vec{v}_{iu}^{t-1} } \mid u \in [n] }$. 
To complete this injectively, we invoke \Cref{lemma:multiset_inject} twice. 
First, for a specific $u$, the unordered pair $s_u = \dgal{ \vec{v}_{uj}^{t-1}, \vec{v}_{iu}^{t-1}}$ constitutes a 2-multiset. By \Cref{lemma:multiset_inject}, there exists an MLP capturing the injectivity of $s_u$. The term 
$$\mathsf{MAP}^{t}\left(\vec{h}_{uj}^{t-1}\right) + \mathsf{MAP}^{t}\left(\vec{h}_{iu}^{t-1}\right) $$ 
explicitly implements this. 
Second, we must aggregate these encoded pairs over all $u \in [n]$. The target structure is the outer multiset $S_{ij} = \dgal[\big]{s_u \mid u \in [n]}$. Applying \Cref{lemma:multiset_inject} again, there exists an injective mapping with MLP. Thus, the term 
$$\vec{m}_{ij,\text{v} \rightarrow \text{v}}^{t} \coloneq \sum_{u \in [n]} \mathsf{MSG}_{\text{v} \rightarrow \text{v}}^{t} \left( \mathsf{MAP}^{t}\left(\vec{h}_{uj}^{t-1}\right) + \mathsf{MAP}^{t}\left(\vec{h}_{iu}^{t-1}\right) \right)$$
injectively encodes the nested multiset structure for the $\vctfwl$ update. Finally, one can simply concatenate the embeddings $\left( \vec{h}_{ij}^{t-1}, \vec{m}_{ij,\text{v} \rightarrow \text{v}}^{t}, \vec{m}_{ij,\text{c} \rightarrow \text{v}}^{t}\right)$ and update it with the MLP $\mathsf{UPD}_{\text{v}}^{t}$:
$$\vec{h}_{ij}^{t} \coloneq \mathsf{UPD}_{\text{v}}^{t}\left( \vec{h}_{ij}^{t-1}, \vec{m}_{ij,\text{v} \rightarrow \text{v}}^{t}, \vec{m}_{ij,\text{c} \rightarrow \text{v}}^{t}\right).$$

In summary, every component of $\vctfwl$ in \Cref{def:2fwl} can be approximated by an MLP. By picking the output approximation error $\epsilon$ to be smaller than the separation margin of the distinct $\vctfwl$ colors, $\vctfmpnn$ maintains the injectivity of the update step.

Thus, $\vctfmpnn$ is at least as expressive as $\vctfwl$, which completes the proof. 
\end{proof}

\paragraph{\textsf{VC-MPNN}}

Similar to $\vctfmpnn$, we can neuralize the $\vcwl$ to $\vcmpnn$, whose initialization is identical to \Cref{eq:vc2fmpnn_init}, and the update steps:
\begin{equation}
\begin{aligned}
\label{eq:vcmpnn}
\vec{m}_{ij,\text{c} \rightarrow \text{v}}^{t} &\coloneq \sum_{k \in N(ij)} \mathsf{MSG}^{t}_{\text{c} \rightarrow \text{v}}\left(A_{k,ij}, \vec{h}_k^{t-1}\right) \\
\vec{m}_{k,\text{v} \rightarrow \text{c}}^{t} &\coloneq \sum_{(i,j) \in N(k)} \mathsf{MSG}^{t}_{\text{v} \rightarrow \text{c}}\left(A_{k,ij}, \vec{h}_{ij}^{t-1}\right) \\
\vec{h}_{ij}^{t} &\coloneq \mathsf{UPD}_{\text{v}}^{t}\left( \vec{h}_{ij}^{t-1}, \vec{m}_{ij,\text{c} \rightarrow \text{v}}^{t}\right) \\
\vec{h}^t_{k} &\coloneq \mathsf{UPD}_{\text{c}}^{t} \left( \vec{h}^{t-1}_{k}, \vec{m}_{k,\text{v} \rightarrow \text{c}}^{t} \right)
\end{aligned}
\end{equation}

\begin{proposition}
\label{thm:vcwl_neuralize}
There exists a set of parameters for the functions $\mathsf{INIT}, \mathsf{UPD}, \mathsf{MSG}$, such that $\vcmpnn$ has maximal expressivity equal to the $\vcwl$. 
\end{proposition}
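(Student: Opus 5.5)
The plan mirrors the proof of \Cref{thm:vc2fmpnn}, dropping the variable--variable aggregation. We show two refinements: $\vcwl \sqsubseteq \vcmpnn$ for every parameter choice, and $\vcmpnn \sqsubseteq \vcwl$ for a suitably chosen parameter set. Together these give equal expressivity.

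\textbf{Upper bound.} We first show that $\vcmpnn$ can never be finer than $\vcwl$. The argument is an induction on $t$, following Theorem 1 of \citet{Mor+2019}. At $t=0$, the embeddings $\vec{h}^0_{ij}=\mathsf{INIT}_{\text{v}}(C_{ij},\mathbb{I}_{i=j})$ and $\vec{h}^0_k=\mathsf{INIT}_{\text{c}}(b_k)$ are functions of exactly the data hashed in \Cref{eq:vcwl}, so equal $\vcwl$ colors force equal embeddings. For the inductive step, suppose $\vec{v}^{t}_{ij}=\vec{v}^{t}_{pq}$. Injectivity of $\mathsf{hash}$ then gives
\begin{itemize}
\item $\vec{v}^{t-1}_{ij}=\vec{v}^{t-1}_{pq}$, and
\item equality of the multisets $\dgal{(A_{k,ij},\vec{c}^{t-1}_k)}$.
\end{itemize}
By the inductive hypothesis, the constraint embeddings $\vec{h}^{t-1}_k$ are functions of $\vec{c}^{t-1}_k$. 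Hence the sums $\vec{m}^t_{ij,\text{c}\to\text{v}}$ and $\vec{m}^t_{pq,\text{c}\to\text{v}}$ coincide, and applying $\mathsf{UPD}_{\text{v}}^t$ yields equal embeddings. Constraint nodes are handled symmetrically.

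\textbf{Lower bound.} Next we pick parameters so that the embeddings are injective in the colors at every layer, again by induction. At $t=0$, we use compactness of the input domain $\mathcal{K}$ together with universal approximation to choose $\mathsf{INIT}_{\text{v}}$ and $\mathsf{INIT}_{\text{c}}$ injective on the finitely many values that actually occur. For each update, we invoke \Cref{lemma:multiset_inject} on the multisets $\dgal{(A_{k,ij},\vec{h}^{t-1}_k)\mid k\in N(ij)}$ and $\dgal{(A_{k,ij},\vec{h}^{t-1}_{ij})\mid (i,j)\in N(k)}$. Using the power-sum construction of \citet{Mar+2019}, this injective encoding takes the form $\sum \phi(\cdot)$, and we realize $\phi$ by $\mathsf{MSG}^t_{\text{c}\to\text{v}}$ and $\mathsf{MSG}^t_{\text{v}\to\text{c}}$ respectively. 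Finally, $\mathsf{UPD}^t$ is chosen as an MLP approximating an injective continuous map of the concatenated pair (previous state, message). Composing layers gives an embedding that is injective in the $\vcwl$ color at every iteration, and hence also at the stable coloring.

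\textbf{Main obstacle.} The delicate step is passing from exact continuous injective functions to MLP approximations without merging distinct colors. For a fixed instance size, only finitely many colors appear. We therefore choose each approximation error smaller than half the minimal separation between the images of distinct colors. Because every layer is continuous on a compact set, these margins can be propagated through all layers, so injectivity is preserved up to the stabilization iteration. This is the same argument already used for $\vctfmpnn$, and here it is simpler because there is no nested multiset and no $\mathsf{MAP}$ function to handle.
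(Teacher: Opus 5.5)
Your proposal is correct and follows essentially the same route as the paper's proof: the upper bound $\vcwl \sqsubseteq \vcmpnn$ via the unrolling argument of \citet{Mor+2019}, and the lower bound by realizing the injective multiset encoding of \Cref{lemma:multiset_inject} as a sum of $\mathsf{MSG}$ outputs followed by an injective $\mathsf{UPD}$, i.e., the proof of \Cref{thm:vc2fmpnn} with the variable--variable term dropped. One small correction to your margin argument: finitely many colors, and hence a positive separation margin, are guaranteed only for a fixed instance or a finite set of instances, not for a fixed instance size, because real-valued data $(\vec{C},\tens{A},\vec{b})$ of a given size form a continuum of instances, so the parameters should be chosen per instance.
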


\begin{proof}
We apply the same logic as the proof of \Cref{thm:vc2fmpnn}, simplified for the reduced message passing structure.

$\vcwl \sqsubseteq \vcmpnn$:
Since the $\vcmpnn$ update rules are deterministic functions of the $\vcwl$ unrolling tree (which is actually a subset of the $\vctfwl$ tree), the neural network cannot distinguish elements that the color refinement algorithm cannot.

$\vcmpnn \sqsubseteq \vcwl$:
We verify that the network components can implement the injective update steps of $\vcwl$. At initialization $t=0$, similar to \Cref{thm:vc2fmpnn}, there exist MLPs that perform this mapping injectively. For iteration $t > 0$:

\xhdr{Constraint node} The constraint node update simply follows \Cref{thm:vc2fmpnn}.

\xhdr{Variable node} The $\vcwl$ variable update aggregates only the constraint neighbors, omitting the variable node interactions. This is a simplified case of the variable update in \Cref{eq:vc2fmpnn}. The message term $\vec{m}_{ij,\text{c} \rightarrow \text{v}}^{t}$ with an MLP is sufficient to injectively encode the constraint neighbor multiset $\dgal{(A_{k,ij}, \vec{c}_k^{t-1}) \mid k \in N(ij)}$. The $\mathsf{UPD}_{\text{v}}^{t}$ then injectively maps the pair $(\vec{h}_{ij}^{t-1}, \vec{m}_{ij,\text{c} \rightarrow \text{v}}^{t})$ to the next state.

Thus, $\vcmpnn$ is capable of simulating $\vcwl$ injectively.
\end{proof}

Building on the limitation of $\vcwl$ established in \Cref{thm:2wl_fail}, we demonstrate that the neuralized $\vcmpnn$ architecture cannot universally approximate SDP solutions. We rely on the following homogeneity property of linear SDPs.

\begin{lemma}
\label{lem:scale_b}
Consider a standard form SDP with data $(\vec{C}, \tens{A}, \vec{b})$ and a triplet of primal, dual and slack solution $(\vec{X}^*, \vec{y}^*, \vec{S}^*)$. If the vector $\vec{b}$ is scaled by a positive scalar $\alpha > 0$ to $\alpha \vec{b}$, then $\alpha \vec{X}^*$ is an optimal primal solution to the scaled problem.
\end{lemma}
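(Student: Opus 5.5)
The plan is a direct feasibility-and-optimality check, using the fact that the scaling map $\vec{X} \mapsto \alpha \vec{X}$ is a bijection between the feasible sets of the two problems and rescales every objective value by the same positive factor $\alpha$.

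\textbf{Feasibility.} Since $\mathbb{S}^n_+$ is a cone and $\alpha > 0$, we have $\alpha\vec{X}^* \in \mathbb{S}^n_+$. By linearity of $\Acal$, $\Acal(\alpha\vec{X}^*) = \alpha\Acal(\vec{X}^*) = \alpha\vec{b}$. So $\alpha\vec{X}^*$ is feasible for the scaled problem.

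\textbf{Optimality.} Take any $\vec{X}'$ feasible for the scaled problem. Then $\vec{X}'/\alpha$ is PSD and satisfies $\Acal(\vec{X}'/\alpha) = \vec{b}$, so it is feasible for the original problem. Optimality of $\vec{X}^*$ gives $\ip{\vec{C}}{\vec{X}^*} \le \ip{\vec{C}}{\vec{X}'/\alpha}$. Multiplying by $\alpha>0$ yields $\ip{\vec{C}}{\alpha\vec{X}^*} \le \ip{\vec{C}}{\vec{X}'}$. Hence $\alpha\vec{X}^*$ attains the minimum of the scaled problem.

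\textbf{Dual certificate.} Alternatively, since the statement mentions the triple $(\vec{X}^*,\vec{y}^*,\vec{S}^*)$, I would note that $(\alpha\vec{X}^*, \vec{y}^*, \vec{S}^*)$ satisfies the KKT conditions of the scaled problem:
\begin{itemize}
\item dual feasibility $\Acalst(\vec{y}^*) + \vec{S}^* = \vec{C}$ with $\vec{S}^*\succeq 0$ is unchanged, since it does not involve $\vec{b}$;
\item primal feasibility holds as shown above;
\item complementarity holds because $\ip{\alpha\vec{X}^*}{\vec{S}^*} = \alpha\cdot 0 = 0$.
\end{itemize}
This gives optimality without invoking strong duality separately, since zero duality gap is exhibited directly.

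\textbf{Minimum-norm target.} For use with \Cref{thm:sdp_unique}, I would add that the bijection $\vec{X}\mapsto\alpha\vec{X}$ maps the optimal set onto the scaled optimal set and multiplies every Frobenius norm by $\alpha$. Therefore the unique minimum-norm solution also scales to $\alpha\vec{X}^*$.

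The argument is routine. The only point needing care is keeping $\alpha>0$, so that the PSD cone is preserved and the inequality direction is not flipped.
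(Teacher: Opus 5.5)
Your proof is correct. Your primary argument differs from the paper's, while your ``Dual certificate'' paragraph is exactly the paper's proof. The paper argues only via KKT: it checks that $(\alpha\vec{X}^*,\vec{y}^*,\vec{S}^*)$ satisfies primal feasibility, the unchanged dual feasibility $\vec{C}+\Acalst(\vec{y}^*)-\vec{S}^*=\bm{0}$, and complementarity $\ip{\alpha\vec{X}^*}{\vec{S}^*}=0$. (Your sign convention for $\vec{y}$ is the negative of theirs, which is immaterial.)

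Your main route instead uses that $\vec{X}\mapsto\alpha\vec{X}$ is a bijection between the two feasible sets that scales every objective value by $\alpha>0$. This needs only primal optimality of $\vec{X}^*$, so it does not rely on a dual optimal pair existing. The KKT route gives something extra in return: it shows that $(\vec{y}^*,\vec{S}^*)$ remains dual optimal for the scaled problem.

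More usefully, the bijection is what justifies your last remark, that the minimum-Frobenius-norm solution scales to $\alpha\vec{X}^*$. The KKT check alone only certifies $\alpha\vec{X}^*$ as \emph{some} optimal solution. Yet \Cref{thm:vcmpnn_fail} invokes the lemma to assert that the (unique minimum-norm) solution of the scaled instance is $\alpha\vec{X}^*$. Your version therefore closes a small gap between the lemma as the paper proves it and the way the paper uses it.
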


\begin{proof}
Optimality in linear SDPs is characterized by the Karush-Kuhn-Tucker (KKT) conditions. The original solution satisfies:
\begin{equation}
\begin{aligned}
\Acal(\vec{X}^*) &= \vec{b} \\
\vec{C} + \Acalst(\vec{y}^*) - \vec{S}^* &= \bm{0} \\
\vec{X}^* &\succeq 0 \\
\vec{S}^* &\succeq 0 \\
\ip{\vec{X}^*}{\vec{S}^*} &= 0\\
\end{aligned}
\end{equation}
It is not hard to find that the following also holds:
\begin{equation}
\begin{aligned}
\Acal(\alpha \vec{X}^*) &= \alpha \Acal(\vec{X}^*) = \alpha \vec{b} \\
\vec{C} + \Acalst(\vec{y}^*) - \vec{S}^* &= \bm{0} \\
\alpha\vec{X}^* &\succeq 0 \\
\vec{S}^* &\succeq 0 \\
\ip{\alpha \vec{X}^*}{\vec{S}^*} &= 0.
\end{aligned}
\end{equation}
Therefore, $\alpha \vec{X}^*$ is the solution to the scaled SDP problem; thus, it completes the proof. 
\end{proof}

\begin{corollary}
\label{thm:vcmpnn_fail}
The architecture $\vcmpnn$ is not a universal approximator for SDPs. Specifically, for any $\epsilon > 0$, there exists an SDP instance such that for any parameters of a $\vcmpnn$, the error between the predicted solution $\vec{X}'$ and the optimal solution $\vec{X}^*$ satisfies $$\lVert \vec{X}^* - \vec{X}' \rVert_{\infty} \geq \epsilon.$$
\end{corollary}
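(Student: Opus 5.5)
We plan to combine the counterexample of \Cref{prop:vcwl_fail} with the homogeneity property of \Cref{lem:scale_b} and the expressivity bound $\vcwl \sqsubseteq \vcmpnn$ from \Cref{thm:vcwl_neuralize}. The starting point is the $3\times 3$ instance $(\vec{C},\tens{A},\vec{b})$ from the proof of \Cref{prop:vcwl_fail}. On it, the stable $\vcwl$ coloring gives $\vec{v}^\infty_{11}=\vec{v}^\infty_{33}$, while the minimum-norm optimum has $\delta \coloneq |X^*_{11}-X^*_{33}| > 0$ (numerically about $0.35$). Since $\vcmpnn$ is no more expressive than $\vcwl$, any parameter choice produces a prediction $\vec{X}'$ with $X'_{11}=X'_{33}$ on this instance. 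This holds for every number of layers, because the $t$-step embeddings are functions of the $t$-step $\vcwl$ colors, and those are refined by the stable coloring.

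Next we use the triangle inequality. If $X'_{11}=X'_{33}=c$, then $|X^*_{11}-c|+|X^*_{33}-c| \ge \delta$, so $\max(|X^*_{11}-X'_{11}|,|X^*_{33}-X'_{33}|)\ge \delta/2$, i.e.\ $\lVert \vec{X}^*-\vec{X}'\rVert_\infty \ge \delta/2$. This already gives the claim for every $\epsilon \le \delta/2$.

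For arbitrary $\epsilon>0$ we scale. Choose $\alpha \ge 2\epsilon/\delta$ and consider the instance $(\vec{C},\tens{A},\alpha\vec{b})$. By \Cref{lem:scale_b}, $\alpha\vec{X}^*$ is optimal for the scaled problem. We also need it to be the \emph{minimum-norm} optimum, since that is our learning target (\Cref{thm:sdp_unique}). This follows because the optimal sets scale: the map $\vec{X}\mapsto\alpha\vec{X}$ is a bijection between the feasible sets that multiplies objectives by $\alpha$, so it sends the optimal set onto the optimal set. It also scales Frobenius norms uniformly, so it preserves the minimizer. The scaled $\vcwl$ run starts from the same $\vec{C}$ and the same $\tens{A}$, and the constraint colors are initialized from $\alpha b_1=\alpha b_2$, which are still equal to each other. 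The color partition is therefore unchanged, and $\vec{v}^\infty_{11}=\vec{v}^\infty_{33}$ still holds. The gap becomes $\alpha\delta\ge 2\epsilon$, and the argument above gives $\lVert\alpha\vec{X}^*-\vec{X}'\rVert_\infty\ge\epsilon$ for all parameters.

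The main point requiring care is the claim that identical colors force identical network outputs for \emph{every} parameter choice, including the final readout. This is the easy direction of \Cref{thm:vcwl_neuralize}, proved by induction on layers: the $\vcmpnn$ embeddings are deterministic functions of the unrolled $\vcwl$ colors. We state that step explicitly. Beyond it, we only need to check that the stable partition depends on $\vec{b}$ solely through its equality pattern, which is immediate from the injective initialization.
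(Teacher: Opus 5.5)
Your proposal is correct and follows the paper's proof essentially step for step. Like the paper, you use the $\vcwl$ counterexample with $\vec{v}^\infty_{11}=\vec{v}^\infty_{33}$, the bound $\vcwl \sqsubseteq \vcmpnn$, a rescaling of $\vec{b}$ by $\alpha \ge 2\epsilon/\delta$ via \Cref{lem:scale_b}, and the triangle inequality. Your added checks make explicit two points the paper leaves implicit: that $\vec{X}\mapsto\alpha\vec{X}$ sends the minimum-norm optimum to the minimum-norm optimum of the scaled instance, and that the stable partition depends on $\vec{b}$ only through its equality pattern.
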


\begin{proof}
According to \Cref{thm:vcwl_neuralize}, the expressivity of $\vcmpnn$ is bounded by the $\vcwl$ coloring algorithm. 

Consider the counterexample from the proof of \Cref{prop:vcwl_fail}. In this instance, variables $X_{11}$ and $X_{33}$ are structurally indistinguishable under the $\vcwl$ algorithm, and thus $\vcmpnn$ forces the network to predict identical values, i.e., $X'_{11} = X'_{33} = v$. However, in this case, the unique optimal solution satisfies $X^*_{11} \neq X^*_{33}$. Let $\delta = |X^*_{11} - X^*_{33}| > 0$. Now, scale the constraint vector to $\alpha \vec{b}$ with $\alpha = 2\epsilon / \delta$. By \Cref{lem:scale_b}, the new optimal solution is $\alpha \vec{X}^*$, and the gap between the entries becomes $|\alpha X^*_{11} - \alpha X^*_{33}| = \alpha \delta = 2\epsilon$. Since the scaling of $\vec{b}$ preserves the structural symmetry of the graph, the unrolling trees for variable nodes $\vec{v}_{11}$ and $\vec{v}_{33}$ remain isomorphic, the $\vcmpnn$ must still predict identical values for these entries, say $v'$. By the triangle inequality, the approximation error is bounded from below:
\begin{align*}
2\epsilon &= |\alpha X^*_{11} - \alpha X^*_{33}| \\ 
&\le |\alpha X^*_{11} - v'| + |v' - \alpha X^*_{33}| \\ 
&\le 2 \max \left( |\alpha X^*_{11} - v'|, |\alpha X^*_{33} - v'| \right) \\
&\le 2 \lVert \alpha \vec{X}^* - \vec{X}' \rVert_{\infty}. 
\end{align*}

Therefore, $\lVert \alpha \vec{X}^* - \vec{X}' \rVert_{\infty} \ge \epsilon$, proving that the error can be made arbitrarily large.
\end{proof}

\paragraph{\textsf{VC-2-MPNN}}

We neuralize the $\vctwl$ and instantiate a neural network, which we name $\vctmpnn$. The initialization follows \Cref{eq:vc2fmpnn_init}, and the update is
\begin{equation}
\begin{aligned}
\label{eq:vc2mpnn}
\vec{m}_{ij, \text{row}}^{t} &\coloneq \sum_{u \in [n]} \mathsf{MSG}^{t}_{\text{row}}(\vec{h}_{iu}^{t-1}) \\
\vec{m}_{ij, \text{col}}^{t} &\coloneq \sum_{u \in [n]} \mathsf{MSG}^{t}_{\text{col}}(\vec{h}_{uj}^{t-1}) \\
\vec{m}_{ij,\text{c} \rightarrow \text{v}}^{t} &\coloneq \sum_{k \in N(ij)} \mathsf{MSG}^{t}_{\text{c} \rightarrow \text{v}}\left(A_{k,ij}, \vec{h}_k^{t-1}\right) \\
\vec{m}_{k,\text{v} \rightarrow \text{c}}^{t} &\coloneq \sum_{(i,j) \in N(k)} \mathsf{MSG}^{t}_{\text{v} \rightarrow \text{c}}\left(A_{k,ij}, \vec{h}_{ij}^{t-1}\right) \\
\vec{h}_{ij}^{t} &\coloneq \mathsf{UPD}_{\text{v}}^{t}\left( \vec{h}_{ij}^{t-1}, \vec{m}_{ij, \text{col}}^{t}, \vec{m}_{ij, \text{row}}^{t}, \vec{m}_{ij,\text{c} \rightarrow \text{v}}^{t}\right) \\
\vec{h}^t_{k} &\coloneq \mathsf{UPD}_{\text{c}}^{t} \left( \vec{h}^{t-1}_{k}, \vec{m}_{k,\text{v} \rightarrow \text{c}}^{t} \right)
\end{aligned}
\end{equation}

Besides, following \Cref{def:2wl}, we force symmetry by $\vec{h}^t_{ji} = \vec{h}^t_{ji}$, if $i< j$. 

\begin{proposition}
\label{thm:vc2mpnn_expressivity}
There exists a set of parameters for the functions $\mathsf{INIT}, \mathsf{UPD}, \mathsf{MSG}$, such that $\vctmpnn$ has maximal expressivity equal to the $\vctwl$.
\end{proposition}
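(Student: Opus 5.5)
We prove both refinement directions, following the template of the proof of \Cref{thm:vc2fmpnn} and adapting it to the row/column aggregation of \Cref{def:2wl}.

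\emph{Upper bound ($\vctwl \sqsubseteq \vctmpnn$).} We induct on $t$ and show that $\vec{v}^t_{ij} = \vec{v}^t_{pq}$ implies $\vec{h}^t_{ij} = \vec{h}^t_{pq}$, and likewise for constraint nodes. At $t=0$ both colors and features are functions of $(C_{ij}, \mathbb{I}_{i=j})$ and $b_k$. For the step, equal $\vctwl$ colors force equality of the previous colors, the row multisets, the column multisets, and the constraint-neighbor multisets. By the induction hypothesis the features are functions of the colors. Hence the sums $\vec{m}^t_{ij,\text{row}}$, $\vec{m}^t_{ij,\text{col}}$ and $\vec{m}^t_{ij,\text{c}\rightarrow\text{v}}$ coincide, because a sum over a multiset of a function of its elements is determined by the multiset. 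It follows that $\mathsf{UPD}_{\text{v}}^t$ returns equal outputs. The same argument applies to constraint nodes. The symmetrization step $\vec{h}^t_{ji} \leftarrow \vec{h}^t_{ij}$ mirrors the color symmetrization $\vec{v}^t_{ji} \leftarrow \vec{v}^t_{ij}$ and is applied identically on both sides. We read the paper's line ``$\vec{h}^t_{ji}=\vec{h}^t_{ji}$'' as this intended assignment.

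\emph{Lower bound ($\vctmpnn \sqsubseteq \vctwl$).} We choose parameters so that every layer is injective on the finitely many (or compactly supported) features that occur.
\begin{itemize}
\item $\mathsf{INIT}$ is made injective exactly as in \Cref{thm:vc2fmpnn}.
\item We invoke \Cref{lemma:multiset_inject} three times, once each for the row multiset $\dgal{\vec{h}^{t-1}_{iu}\mid u\in[n]}$, the column multiset $\dgal{\vec{h}^{t-1}_{uj}\mid u\in[n]}$, and the constraint multiset $\dgal{(A_{k,ij},\vec{h}^{t-1}_k)\mid k\in N(ij)}$. This gives sum-decomposable injective encodings, realized via power-sum polynomials as in \citet{Mar+2019}. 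These are approximated by $\mathsf{MSG}^t_{\text{row}}$, $\mathsf{MSG}^t_{\text{col}}$ and $\mathsf{MSG}^t_{\text{c}\rightarrow\text{v}}$.
\item The row and column messages must remain distinguishable, since $\vctwl$ hashes them as an ordered pair. This holds because they enter $\mathsf{UPD}_{\text{v}}^t$ in fixed concatenation slots and are produced by separate MLPs.
\item $\mathsf{UPD}_{\text{v}}^t$ is then taken to approximate an injective continuous map of the concatenation. The constraint-node update is handled verbatim as in \Cref{thm:vc2fmpnn}.
\end{itemize}
Finally, we choose the approximation error below the minimal separation between distinct colors, which exists since there are finitely many colors at each iteration. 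This preserves injectivity through the network depth. Running the network for as many layers as $\vctwl$ needs to stabilize then yields equality of expressivity.

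\emph{Main obstacle.} The delicate step is propagating injectivity through approximate (not exact) MLPs across layers, in particular with the symmetrization. Injectivity must hold on the symmetrized features, and the approximation errors must not merge two distinct colors at a later layer. We handle this as in \Cref{thm:vc2fmpnn}. Because the instance is fixed, the set of realized colors is finite at each iteration. We therefore pick the tolerances layer by layer, working backwards from the last layer, so that each perturbed map still separates the finitely many relevant inputs. This argument is uniform-continuity-based and not uniform across instance sizes, in line with the non-uniform caveat noted in \Cref{sec:limits}.
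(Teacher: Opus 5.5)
Your proposal is correct and follows essentially the same route as the paper. The direction $\vctwl \sqsubseteq \vctmpnn$ holds because the network's features are functions of the $\vctwl$ unrolling, and your induction makes this explicit, including the symmetrization step. The reverse direction uses injective sum-decomposable encodings of the row, column, and constraint multisets via \Cref{lemma:multiset_inject}, followed by an injective $\mathsf{UPD}_{\text{v}}^t$, exactly as in the proof of \Cref{thm:vc2fmpnn}. Your handling of approximation tolerances, using the finitely many realized colors of a fixed instance, is a somewhat more careful version of the paper's ``error below the separation margin'' step.
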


\begin{proof}
We apply the same logic as the proof of \Cref{thm:vc2fmpnn}, tailored for the message passing structure of $\vctmpnn$.

$\vctwl \sqsubseteq \vctmpnn$:
Since the $\vctmpnn$ update rules are deterministic functions of the $\vctwl$ unrolling tree, the neural network cannot distinguish elements that the color refinement algorithm cannot.

$\vctmpnn \sqsubseteq \vctwl$:
We verify that the network components can implement the injective update steps of $\vctwl$. At initialization $t=0$, similar to \Cref{thm:vc2fmpnn}, there exist MLPs that perform this mapping injectively.

\xhdr{Constraint node} The constraint node update simply follows \Cref{thm:vc2fmpnn}.

\xhdr{Variable node} The $\vctwl$ variable update aggregates the constraint neighbors, as well as the row and column variable neighbors separately. The message from constraint neighbors $\vec{m}_{ij,\text{c} \rightarrow \text{v}}^{t}$ with an MLP is sufficient to injectively encode the constraint neighbor multiset $\dgal{(A_{k,ij}, \vec{c}_k^{t-1}) \mid k \in N(ij)}$. Similarly, the multisets of row and column neighbors $\vec{m}_{ij, \text{row}}^{t}$ and $\vec{m}_{ij, \text{col}}^{t}$ can be encoded injectively with MLPs respectively. 
Finally, the update function $\mathsf{UPD}_{\text{v}}^{t}$ then injectively maps the vectors $(\vec{h}_{ij}^{t-1}, \vec{m}_{ij, \text{col}}^{t}, \vec{m}_{ij, \text{row}}^{t}, \vec{m}_{ij,\text{c} \rightarrow \text{v}}^{t})$ to the next state.

Thus, $\vctmpnn$ is capable of simulating $\vctwl$ injectively.
\end{proof}

Next, we show a corollary that $\vctmpnn$ is not a universal approximator to represent SDP. 

\begin{corollary}
$\vctmpnn$ is not a universal approximator for SDPs. Specifically, for any $\epsilon > 0$, there exists an SDP instance such that for any parameters of a $\vctmpnn$, the error between the predicted solution $\vec{X}'$ and the optimal solution $\vec{X}^*$ satisfies $$\lVert \vec{X}^* - \vec{X}' \rVert_{\infty} \geq \epsilon.$$
\end{corollary}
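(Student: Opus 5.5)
The plan mirrors the proof of \Cref{thm:vcmpnn_fail}, with \Cref{thm:vc2mpnn_expressivity} in place of \Cref{thm:vcwl_neuralize} and the Latin-square instance from the proof of \Cref{thm:2wl_fail} in place of the $3\times 3$ instance. The argument has three steps.

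\textbf{Step 1: identical predictions on the base instance.} By \Cref{thm:vc2mpnn_expressivity}, the representational power of $\vctmpnn$ is bounded above by $\vctwl$. That is, for any parameter choice, two variable nodes with the same $\vctwl$ color at every iteration receive identical embeddings, and hence identical outputs after the readout. In the $6\times 6$ instance of \Cref{thm:2wl_fail}, the entries $(1,5)$ and $(2,4)$ satisfy $\vec{v}^\infty_{15}=\vec{v}^\infty_{24}$. Any $\vctmpnn$ must therefore output $X'_{15}=X'_{24}$ on this instance. The same holds for the forced symmetrization $\vec{h}_{ji}\leftarrow\vec{h}_{ij}$: it only copies embeddings and cannot separate these two entries. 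On the other hand, the unique minimum-norm solution has $X^*_{15}\approx-0.115\neq X^*_{24}\approx-0.172$. Set $\delta=|X^*_{15}-X^*_{24}|>0$.

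\textbf{Step 2: scale the instance.} Given $\epsilon>0$, replace $\vec{b}$ by $\alpha\vec{b}$ with $\alpha=2\epsilon/\delta$. By \Cref{lem:scale_b}, $\alpha\vec{X}^*$ is optimal for the scaled problem. I also need it to remain the minimum-Frobenius-norm optimum. This holds because the optimal set of the scaled problem is exactly $\alpha$ times the original optimal set: the map $\vec{X}\mapsto\alpha\vec{X}$ is a bijection between feasible sets that multiplies the objective by $\alpha>0$. Norms scale by $\alpha$ as well, so the minimizer is preserved.

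Scaling changes only the constraint initialization $\vec{c}^0_1$. It leaves $\vec{C}$, $\tens{A}$ and the single constraint node untouched, so the $\vctwl$ argument of \Cref{thm:2wl_fail} goes through verbatim. That argument rests only on the Latin-square structure of $\vec{C}$ and the fact that every diagonal entry connects to the same constraint node. Hence $\vec{v}^\infty_{15}=\vec{v}^\infty_{24}$ still holds, and the network again predicts a common value $v'$ for both entries.

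\textbf{Step 3: lower-bound the error.} By the triangle inequality,
\[
2\epsilon=\alpha\delta\le|\alpha X^*_{15}-v'|+|v'-\alpha X^*_{24}|\le 2\lVert\alpha\vec{X}^*-\vec{X}'\rVert_\infty,
\]
which gives the claimed bound of $\epsilon$.

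\textbf{Main obstacle.} The only point needing care is Step 1: making sure the explicit symmetrization does not break the inheritance of $\vctwl$'s indistinguishability, and that this holds even after scaling. I would handle it by noting that \Cref{def:2wl} already includes the same symmetrization, so \Cref{thm:vc2mpnn_expressivity} applies to the symmetrized network directly.
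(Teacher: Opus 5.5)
Your proposal is correct and is essentially the paper's proof. The paper simply reruns the argument of \Cref{thm:vcmpnn_fail} on the Latin-square instance of \Cref{thm:2wl_fail}: indistinguishability of the two entries, scaling $\vec{b}$ by $\alpha=2\epsilon/\delta$ via \Cref{lem:scale_b}, then the triangle inequality. Your extra check that the optimal set scales exactly by $\alpha$, so that $\alpha\vec{X}^*$ is still the minimum-Frobenius-norm target, is a welcome tightening, since \Cref{lem:scale_b} as stated only asserts optimality.
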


\begin{proof}
We follow exactly the proof logic in \Cref{thm:vcmpnn_fail}. In this case, for $\vctmpnn$, we can pick the example in the proof of \Cref{thm:2wl_fail} where $\vctwl$ fails.
\end{proof}

\section{SDP application}
\label{sec:appli_sdp}

\subsection{Relaxation of CO}
\label{sec:sdp4co}
In reality, SDP serves as the continuous relaxation of several graph CO problems. 

\xhdr{Max-Cut} \citep{goemans1995improved}. Let $G$ be a graph with nodes $V$ and edges $E$ with edge weights $w_{ij} \geq 0, \{i,j\} \in E$. The goal of the max-cut problem is to partition the nodes into two disjoint sets such that the sum of weights of edges crossing the cut is maximized. The problem formulation is 
\begin{equation}
\max_{\vec{x} \in \{-1, 1\}^n} \quad \frac{1}{2} \sum_{\{i,j\} \in E} w_{ij} (1 - x_i x_j),
\end{equation}
where $1$ and $-1$ denote different partitions. Instead of integer domain $\{-1, 1\}$, we relax each variable $x_i$ to a unit vector $\lVert \vec{v}_i \rVert = 1$, which defines $X_{ij} = \vec{v}_i \trans \vec{v}_j$ \citep{yau2024graph}. Thus, SDP relaxation is 
\begin{equation}
\begin{aligned}
\max_{\vec{X} \in \mathbb{S}^n_{+}} \quad & \frac{1}{2} \sum_{\{i,j\} \in E} w_{ij} (1 - X_{ij}) \\
s.t. \quad & X_{ii}  = 1, \forall i \in [n]
\end{aligned}
\end{equation}

\xhdr{Max clique problem}
Let $G$ be a graph with nodes $V$ and edges $E$, let $x_i \in \{ 0,1 \}$ denote whether $x_i$ is included in a clique. The max clique problem can be formulated as 
\begin{equation}
\begin{aligned}
\max_{\vec{x} \in \{0, 1\}^n} \quad &\sum_{i} x_i \\
s.t. \quad &x_i x_j = 0, \forall \{i,j\} \notin E.
\end{aligned}
\end{equation}
We know that the Lov{\'a}sz function \citep{galli2017lovasz} is defined as the optimal value of the SDP relaxation of max-clique problem:
\begin{equation}
\begin{aligned}
\max_{\vec{X} \in \mathbb{S}^n_{+}} \quad & \ip{\vec{J}}{\vec{X}}\\
s.t. \quad & \Tr(\vec{X}) = 1 \\
& X_{ij} = 0, \forall \{i,j\} \notin E,
\end{aligned}
\end{equation}
where $\vec{J}$ is a ones matrix. 

\xhdr{Max independent set}
The max independent set problem is the complement of the max clique problem. Formally, 
\begin{equation}
\begin{aligned}
\max_{\vec{x} \in \{0, 1\}^n} \quad &\sum_{i} x_i \\
s.t. \quad &x_i x_j = 0, \forall \{i,j\} \in E.
\end{aligned}
\end{equation}
and its relaxation:
\begin{equation}
\begin{aligned}
\max_{\vec{X} \in \mathbb{S}^n_{+}} \quad &\ip{\vec{J}}{\vec{X}}  \\
s.t. \quad & \Tr(\vec{X}) = 1 \\
& X_{ij} = 0, \forall \{i,j\} \in E.
\end{aligned}
\end{equation}

\xhdr{Graph coloring}
The goal of the graph coloring problem is to assign colors to nodes such that no adjacent nodes share the same color. Let $y_{k} \in \{0,1\}, k \in [K]$ indicate whether the color $k$ is used, where $K$ is an upper bound, e.g., $K = |V|$. And let $x_{ik} \in \{0,1\}$ indicate whether the node $i$ is assigned to color $k$. The graph coloring problem in ILP form is defined as:
\begin{equation}
\begin{aligned}
\min \quad &\sum_{k} y_k \\
s.t. \quad & \sum_{k} x_{ik} = 1 \\
& x_{ik} + x_{jk} \leq y_k, \forall \{i,j\} \in E \\
& x_{ik} \leq y_k.
\end{aligned}
\end{equation}
We have the SDP relaxation \citep{charikar2002semidefinite}:
\begin{equation}
\begin{aligned}
\min_{\vec{X} \in \mathbb{S}^n_+} \quad & k \\
s.t. \quad & X_{ij} = - \dfrac{1}{k-1}, \forall \{i,j\} \in E \\
& X_{ii} = 1.
\end{aligned}
\end{equation}

\xhdr{Min vertex cover}
A standard formulation of the minimum vertex cover problem as a quadratic integer program is as following:
\begin{equation}
\begin{aligned}
\min_{\vec{x} \in \{-1, 1\}^n } \quad & \sum_{i} \frac{1}{2}(1 + x_i) \\
s.t. \quad & (1 - x_i)(1 - x_j) = 0, \forall \{i,j\} \in E
\end{aligned}
\end{equation}
The SDP relaxation is as following \citep{kleinberg1998lovasz,hatami2009integrality}, where we introduce an extra variable:
\begin{equation}
\begin{aligned}
\min_{\vec{X} \in \mathbb{S}^{n+1}_+} \quad & \sum_i \frac{1}{2} (1 + X_{0i}) \\
s.t. \quad & X_{ij} + X_{00} - X_{0i} - X_{0j} = 0, \forall \{i,j\} \in E \\
& X_{ii} = 1, i \in \{0\} \cup V.
\end{aligned}
\end{equation}

\xhdr{Max 2-SAT}
A max SAT problem is given a number of variables $n$, that takes on the values $\{-1,1\}^n$, and a collection of clauses $\{C_1, C_2, \cdots, C_k\}$, each clause is a disjunction (logical OR) of a subset of literals. A literal is either a variable or its negation. Each clause may be assigned a weight, and the goal of Max SAT is to find an assignment of variables that maximizes the total weight of satisfied clauses. A max $k$-SAT problem is one in which each clause has at most $k$ literals.

An instance with $n$ variables and $k$ clauses can be represented by a matrix $\vec{A} \in \{-1, 0, 1\}^{k \times n}$, where $A_{ij} = 1$ if clause $i$ includes the positive literal of variable $j$, $A_{ij} = -1$ if it includes the negated literal, and $A_{ij} = 0$ otherwise. We consider the unweighted setting, where all clause weights are equal, and the objective is to maximize the number of satisfied clauses. The quadratic assignment formulation of the max 2-SAT problem is \citep{de2002sdp4sats}
\begin{equation}
\min_{\vec{x} \in \{-1, 1\}^n} \quad \dfrac{1}{8} \left( \vec{x}\trans \vec{A}\trans \vec{A} \vec{x} - 2 \bm{1} \trans \vec{A} \vec{x} \right)\\
\end{equation}
whose SDP relaxation is 
\begin{equation}
\begin{aligned}
\min_{\vec{X} \in \mathbb{S}^{n+1}_{+}} \quad & \dfrac{1}{8} \ip{\begin{bmatrix}
    \vec{A}\trans \vec{A} - \text{diag}\left(\vec{A}\trans \vec{A}\right) & - \vec{A}\trans \bm{1} \\
    -\bm{1} \trans \vec{A} & 0 \\
\end{bmatrix}}{\vec{X}} \\
s.t. \quad & X_{ii}  = 1, \forall i \in [n].
\end{aligned}
\end{equation}

\subsection{Quadratic assignment problem}
Besides, SDP is also a common relaxation for quadratic assignment problems (QAP). 

\xhdr{Traveling salesman problem} It is already known that the symmetric traveling salesman problem (TSP) is a special case of QAP. We first introduce the notation of circulant matrices on $n$ vertices $\mathcal{C}^{n}$. A circulant matrix in $\mathbb{R}^{n \times n}$ has the form
\begin{equation*}
\begin{bmatrix}
    c_0 & c_1 & c_2 & c_3 & \cdots & c_{n-1} \\
    c_{n-1} & c_0 & c_1 & c_2 & \cdots & c_{n-2} \\
    c_{n-2} & c_{n-1} & c_0 & c_1 & \cdots & c_{n-3} \\
    \vdots & \vdots & \vdots & \vdots  & \ddots & \vdots \\
    c_1 & c_2 & c_3 & c_4 & \cdots & c_0 \\
\end{bmatrix}
\end{equation*}
We have a basis for symmetric circulant matrices consisting of $\{\vec{C}_1^{n}, \vec{C}_2^{n}, \cdots, \vec{C}_d^{n} \}$, where $\vec{C}_i^{n}$ is a matrix with $c_i = c_{n-i} = 1$ and 0 elsewhere. Besides, we have a symmetric matrix $\vec{D}$ where $D_{ii}=0$ and $D_{ij}=D_{ji} > 0$ is the cost per edge. The objective of TSP is 
\begin{equation}
\min_{\vec{X} \in \mathbb{P}^n} \frac{1}{2} \Tr (\vec{D} \vec{X} \vec{C}_1^{n} \vec{X} \trans),
\end{equation}
where $\mathbb{P}$ is the permutation matrix group. $\vec{X} \vec{C}_1^{n} \vec{X}$ can be interpreted as the adjacency matrix of the tour. 

\xhdr{(Sub)graph isomorphism problem} Given adjacency matrices $\vec{A}, \vec{B}$, supposed to be the same shape, we seek the largest isomorphic subgraphs by
\begin{equation}
\min_{\vec{X} \in \mathbb{P}^n} \lVert \vec{X}\vec{A} - \vec{B}\vec{X} \rVert_F.
\end{equation}
With some transformations, we have $\lVert \vec{X}\vec{A} - \vec{B}\vec{X} \rVert_F = \lVert \vec{A} \rVert_F - 2\Tr(\vec{B}\vec{X}\vec{A}\vec{X}\trans) + \lVert \vec{B} \rVert_F$, thus, the problem is equivalent to 
\begin{equation}
\max_{\vec{X} \in \mathbb{P}^n} \Tr(\vec{B}\vec{X}\vec{A}\vec{X}\trans).
\end{equation}

The TSP and GIP are special cases of QAP. The relaxation of SDP for QAP has been well-studied \citep{povh2009copositive,oliveira2018admm}.

\subsection{Subsumption of convex optimization}
SDP subsumes a lot of convex optimization problems, including second-order conic programming (SOCP), quadratically-constrained quadratic programming (QCQP), linearly-constrained quadratic programming (LCQP), and linear programming (LP) \citep[p.~220]{dattorro2010convex}. 

\xhdr{LP} It is straightforward to show that LP is a special case of SDP. Given an LP instance
\begin{equation}
\begin{aligned}
\min_{\vec{x} \in \mathbb{R}^n_{\geq 0}} \quad & \vec{c}\trans \vec{x} \\
\text{ s.t. } \quad & \vec{A} \vec{x} = \vec{b},
\end{aligned}
\end{equation}
we can turn it into an SDP instance by constructing $\vec{C} = \text{diag}(\vec{c})$ and $\vec{A}_{i} = \text{diag}(\vec{A}_i)$, and extra constraints $X_{ij} = 0, i \neq j$. The positive semidefiniteness of SDP variables perfectly captures the positivity of the LP variables, as the SDP variables are forced to form a diagonal matrix. 

\xhdr{SOCP} The formulation of an SOCP is 
\begin{equation}
\begin{aligned}
\min \quad & \vec{c}\trans \vec{x} \\
\text{ s.t. } \quad & \lVert \vec{A}_{(i)} \vec{x} + \vec{b}_{(i)} \rVert_2 \leq \vec{c}_{(i)} \trans \vec{x} + d_{(i)},  \forall i \in [m],
\end{aligned}
\end{equation}
where $\vec{A}_{(i)} \in \mathbb{R}^{k_i \times n}, \vec{b}_{(i)} \in \mathbb{R}^{k_i}, \vec{c}_{(i)} \in \mathbb{R}^n, d_{(i)} \in \mathbb{R}$. The positivity constraints of variables can also be captured by such forms. Each constraint is in a second order cone $\mathcal{L}^{k+1} \coloneq \{ (\vec{x}, t) \in \mathbb{R}^{k+1} | \lVert \vec{x} \rVert_2 \leq t \}$, hence the name. Using Schur decomposition, we turn the second order cone $\lVert \vec{x} \rVert_2 \leq t$ into a PSD cone $\begin{bmatrix}
    t & \vec{x}\trans \\
    \vec{x} & t \vec{I} \\
\end{bmatrix} \succeq 0$. So we can turn the SOCP problem into an SDP problem \citep{lobo1998applications}
\begin{equation}
\begin{aligned}
\min \quad & \vec{c}\trans \vec{x} \\
\text{ s.t. } \quad & \begin{bmatrix}
    \vec{c}_{(i)} \trans \vec{x} + d_{(i)} & \left(\vec{A}_{(i)}\vec{x} + \vec{b}_{(i)}\right) \trans \\
    \vec{A}_{(i)}\vec{x} + \vec{b}_{(i)} & \left(\vec{c}_{(i)} \trans \vec{x} + d_{(i)}\right) \vec{I} \\
\end{bmatrix} \succeq 0, \forall i \in [m].
\end{aligned}
\end{equation}
Note that it is not the standard primal form of \Cref{eq:standard_sdp}, but a dual form 
\begin{equation}
\begin{aligned}
\min \quad & \vec{b}\trans \vec{y} \\
\text{ s.t. } \quad & \vec{C} + \sum_{i=1}^m y_i \vec{A}_{i} \succeq 0.
\end{aligned}
\end{equation}

\xhdr{QP} To show that LCQPs and QCQPs are also special cases of SDP, we just need to show they are included in the SOCP family. Given a QCQP instance
\begin{equation}
\begin{aligned}
\min \quad & \vec{x}\trans \vec{Q}_{(0)} \vec{x} + 2 \vec{q}_{(0)}\trans \vec{x} + q_{(0)} \\
\text{ s.t. } \quad & \vec{x}\trans \vec{Q}_{(i)} \vec{x} + 2 \vec{q}_{(i)}\trans \vec{x} + q_{(i)} \leq 0, \forall i \in [m],
\end{aligned}
\end{equation}
for the problem to be convex, we assume $\vec{Q}_{(i)}$ to be PSD matrices. This allows us to transform the QCQP in the following form \citep{lobo1998applications}:
\begin{equation}
\begin{aligned}
\min \quad & \lVert \vec{Q}_{(0)}^{1/2} \vec{x} + \vec{Q}_{(0)}^{-1/2} \vec{q}_{(0)} \rVert_2^2 + c_{(0)} - \vec{q}_{(0)}\trans \vec{Q}_{(0)}^{-1} \vec{q}_{(0)} \\
\text{ s.t. } & \lVert \vec{Q}_{(i)}^{1/2} \vec{x} + \vec{Q}_{(i)}^{-1/2} \vec{q}_{(i)} \rVert_2^2 + c_{(i)} - \vec{q}_{(i)}\trans \vec{Q}_{(i)}^{-1} \vec{q}_{(i)} \leq 0, \forall i \in [m],
\end{aligned}
\end{equation}
and further in SOCP form
\begin{equation}
\begin{aligned}
\min \quad & t \\
\text{ s.t. } & \lVert \vec{Q}_{(i)}^{1/2} \vec{x} + \vec{Q}_{(i)}^{-1/2} \vec{q}_{(i)} \rVert_2 \leq  \left(\vec{q}_{(i)}\trans \vec{Q}_{(i)}^{-1} \vec{q}_{(i)} - c_{(i)} \right)^{1/2}, \forall i \in [m] \\
& \lVert \vec{Q}_{(0)}^{1/2} \vec{x} + \vec{Q}_{(0)}^{-1/2} \vec{q}_{(0)} \rVert_2 \leq t.
\end{aligned}
\end{equation}
And LCQP is a special case of QCQP. 

\subsection{Control theory}

\xhdr{Linear Matrix Inequality}
Lyapunov stability analysis is a fundamental problem in control theory used to verify if a dynamical system $\dot{z} = \vec{A}_{sys}z$ is stable. A necessary and sufficient condition for stability is the existence of a quadratic Lyapunov function $V(z) = z\trans \vec{P} z$ with $\vec{P} \succ 0$ such that the derivative $\dot{V}(z) < 0$. Formally, this is the Linear Matrix Inequality (LMI):
\begin{equation}
\begin{aligned}
\text{Find } \quad & \vec{P} \in \mathbb{S}^n_{+} \\
s.t. \quad & \vec{A}_{sys}\trans \vec{P} + \vec{P} \vec{A}_{sys} \prec 0.
\end{aligned}
\end{equation}
To optimize numerical properties or check for specific performance bounds, this is cast as a Semidefinite Program. We adopt the formulation from \citet[p.~9]{boyd1994lmi}, enforcing a trace constraint to bound the solution scale and discretizing the matrix inequality into $m$ scalar linear constraints:
\begin{equation}
\begin{aligned}
\min_{\vec{P} \in \mathbb{S}^n_{+}} \quad & \ip{\vec{C}}{\vec{P}} \\
s.t. \quad & \Tr(\vec{P}) = 1 \\
& \Acal(\vec{P}) = \vec{b}.
\end{aligned}
\end{equation}
where each constraint matrix $\vec{A}_k$ represents the projection of the Lyapunov operator onto a random direction vector $v_k$:
$$ \vec{A}_k = \frac{1}{2} (\vec{A}_{sys} v_k v_k\trans + v_k v_k\trans \vec{A}_{sys}\trans). $$

We employ an inverse strategy to generate feasible problem instances with known stability certificates. We first sample a random positive definite matrix $\vec{P}_{true}$ and normalize it such that $\Tr(\vec{P}_{true})=1$. Instead of sampling a random system $\vec{A}_{sys}$ (which might be unstable), we construct a stable system compatible with our certificate. We solve the inverse Lyapunov equation $\vec{A}_{sys}\trans \vec{P}_{true} + \vec{P}_{true} \vec{A}_{sys} = -\epsilon \vec{I}$ to recover $\vec{A}_{sys}$. We sample $m$ random sparse direction vectors $v_k$. We compute the RHS scalars $\ip{\vec{A}_k}{\vec{P}_{true} = \vec{b}}$. This ensures that the ground truth $\vec{P}_{true}$ lies exactly on the boundary of the feasible region defined by the hyperplanes.

\section{Other architectures}

\subsection{Ablation of \textsf{VC-2-FWL}}
\label{sec:ablation_fwl}

\paragraph{\textsf{VC-2-FWL+}}
We define a 2-tuple based $\FWLk{2}$-like architecture, which we name $\vctfwlpp$. The color initialization follows \cref{eq:vcwl}, and the update is as defined below:
\begin{equation}
\label{def:2fwlpp}
\begin{aligned}
\vec{v}_{ij}^t &\coloneq \mathsf{hash} \Bigl( \vec{v}_{ij}^{t-1}, \dgal[\big]{ \left(\vec{v}_{uj}^{t-1}, \vec{v}_{iu}^{t-1}\right) \mid u \in [n]}, \dgal[\big]{(A_{k, ij}, \vec{c}^{t-1}_k) \mid k \in N(ij)} \Bigr) \\
\vec{c}^t_{k} &\coloneq \mathsf{hash} \Bigl( \vec{c}^{t-1}_{k}, \dgal[\big]{\left(A_{k,ij}, \vec{v}_{ij}^{t-1} \right) \mid (i,j) \in N(k)} \Bigr).
\end{aligned}
\end{equation}
Similar to $\vctwl$, it requires symmetrization after the update $\vec{c}_{ji}^t \leftarrow \vec{c}_{ij}^t, \forall i < j$. See \cref{thm:2fwlpp_refine_2fwl} for the proof of the expressive power between $\vctfwl$ and $\vctfwlpp$.

Besides, we discuss some interesting yet weaker versions of $\vctfwl$. 

\paragraph{Multiset encoding across constraint}
We can encode the problem instance $I \coloneq (\vec{C}, \tens{A}, \vec{b})$ with an injective function on a multiset, in the form of 
\begin{equation*}
\vec{v}^0_{ij} \coloneq \mathsf{hash} \left( C_{ij},  \dgal[\big]{ \left( A_{k,ij}, b_{k} \right) \mid k \in [m] } \right),
\end{equation*}
such that the constraint elements on entry $(i,j)$ are aggregated in an injective and permutation invariant way, then merged with the objective coefficient $C_{ij}$. Then we execute the following multiset-based $\FWLk{2}$:
\begin{equation*}
\vec{v}_{ij}^t \coloneq \mathsf{hash} \Bigl( \vec{v}_{ij}^{t-1}, \dgal[\big]{ \dgal{\vec{v}_{uj}^{t-1}, \vec{v}_{iu}^{t-1}} \mid u \in [n]} \Bigr)    
\end{equation*}
Unfortunately, this aggregation is not expressive enough; we can construct a problem:
\begin{equation*}
\vec{C} = \begin{bmatrix}
    1 & 0 & 0 \\ 0 & 1 & 0 \\ 0 & 0 & 1 \\
\end{bmatrix}, \quad
\vec{A}_{1} = \begin{bmatrix}
    1 & 0 & 0 \\ 0 & 1 & 0 \\ 0 & 0 & 0 \\
\end{bmatrix}, \quad
\vec{A}_{2} = \begin{bmatrix}
    0 & 0 & 0 \\ 0 & 0 & 0 \\ 0 & 0 & 1 \\
\end{bmatrix}, \quad
\vec{b} = \begin{bmatrix}
    1 & 1 \\
\end{bmatrix}
\end{equation*}
whose optimal solution has the form $\vec{X}^* = \begin{bmatrix}
    1 - \lambda & 0 & 0 \\
    0 & \lambda & 0 \\
    0 & 0 & 1\\
\end{bmatrix}$, where $\lambda \in [0,1]$. The solution with minimum Frobenius norm is $\begin{bmatrix}
    0.5 & 0 & 0 \\
    0 & 0.5 & 0 \\
    0 & 0 & 1\\
\end{bmatrix}$.

However, the encoding
\begin{align*}
\vec{v}^0_{22} &= \mathsf{hash}\left( 1, \dgal[\big]{(1, 1), (0, 1)} \right) \\
\vec{v}^0_{33} &= \mathsf{hash}\left( 1, \dgal[\big]{(0, 1), (1, 1)} \right)
\end{align*}
are the same. Later, one can easily verify that the converged colors $\vec{v}^\infty_{22} = \vec{v}^\infty_{33}$. 

The failing of such a combination lies in that the encoding cannot distinguish some structural difference, for example, above, $b_1$ is related with 2 elements in $\vec{A}_{1}$, while $b_2$ associates with only one in $\vec{A}_{2}$. The encoding with lack of expressivity will falsely encode variables with different structural roles as the same. 

\paragraph{\textsf{VC-WL} then \textsf{2-FWL}}
Another interesting ablation of \Cref{def:2fwl} is, can we run $\vcwl$ on the V-C bipartite graph until convergence, then run multiset-based $\FWLk{2}$ until convergence, and obtain the same expressivity as \cref{def:2fwl}? Therefore, it effectively splits the aggregation of $\vctfwl$. The answer is no. We give a counterexample. 
\begin{equation}
\label{eq:vcwl_encode_fail}
\begin{aligned}
\vec{C} &= \bm{1}_{5 \times 5} \\
\vec{A}_{1} = \begin{bmatrix}
    1 & 2 & 1 & 1 & 1 \\ 
    2 & 0 & 0 & 0 & 0 \\ 
    1 & 0 & 0 & 0 & 0 \\ 
    1 & 0 & 0 & 0 & 0 \\ 
    1 & 0 & 0 & 0 & 0 \\
\end{bmatrix}, \quad
\vec{A}_{2} &= \begin{bmatrix}
    0 & 0 & 0 & 0 & 0 \\ 
    0 & 1 & 1 & 1 & 0 \\ 
    0 & 1 & 0 & 0 & 0 \\ 
    0 & 1 & 0 & 0 & 0 \\ 
    0 & 0 & 0 & 0 & 0 \\
\end{bmatrix}, \quad
\vec{A}_{3} = \begin{bmatrix}
    0 & 0 & 0 & 0 & 0 \\ 
    0 & 0 & 0 & 0 & 1 \\ 
    0 & 0 & 1 & 1 & 0 \\ 
    0 & 0 & 1 & 0 & 0 \\ 
    0 & 1 & 0 & 0 & 0 \\
\end{bmatrix} \\
\vec{A}_{4} &= \begin{bmatrix}
    0 & 0 & 0 & 0 & 0 \\ 
    0 & 0 & 0 & 0 & 0 \\ 
    0 & 0 & 0 & 0 & 1 \\ 
    0 & 0 & 0 & 1 & 1 \\ 
    0 & 0 & 1 & 1 & 0 \\
\end{bmatrix}, \quad
\vec{A}_{5} = \begin{bmatrix}
    0 & 0 & 0 & 0 & 0 \\ 
    0 & 0 & 0 & 0 & 0 \\ 
    0 & 0 & 0 & 0 & 0 \\ 
    0 & 0 & 0 & 0 & 0 \\ 
    0 & 0 & 0 & 0 & 1 \\
\end{bmatrix} \\
\vec{b} &= \begin{bmatrix}
    1 & 1 & 1 & 1 & 1 \\
\end{bmatrix}.
\end{aligned}
\end{equation}

In this example, after the convergence of $\vcwl$, the converged colors $\vec{v}_{ij}^{\infty}$ will be $\begin{bmatrix}
    a & b & e & e & e \\
    b & d & c & c & c \\
    e & c & d & c & c \\
    e & c & c & d & c \\
    e & c & c & c & f \\
\end{bmatrix}$, where $\{a,b,c, \cdots\}$ are from an alphabet to represent colors. 

After the convergence of multiset-based $\FWLk{2}$, the converged colors will be
$\begin{bmatrix}
    a & b & c & c & d \\
    b & e & f & f & g \\
    c & f & k & h & i \\
    c & f & h & k & i \\
    d & g & i & i & j \\
\end{bmatrix}$. The solution is approximately
$\begin{bmatrix}
 4.682&  1.821 & -4.889&  0.000 & -0.594 \\
 1.821&  4.327 & -1.664&  0.000 & -2.060  \\
-4.889& -1.664 &  5.121&  0.000 &  0.500   \\
 0.000&  0.000 & 0.000 &  0.000 & 0.000    \\
-0.594& -2.060 & 0.500 &  0.000 &  1.000    \\
\end{bmatrix}$. We can see the pair e.g., $X_{13}$ and $X_{14}$ have respectively different values in their solution, yet cannot be distinguished. 

The limitation of such a design lies in the message passing cannot distinguish some variables in different constraints, and these variables may have non-equivalent positions in the matrix and play different roles in the matrix.

Therefore, one round of sequential execution of $\vcwl$ and multiset-based $\FWLk{2}$ does not suffice. Empirically, multiset-based $\FWLk{2}$ might break the previous $\vcwl$ symmetry. One has to execute both in turn until they both converge. However, this might introduce more steps until convergence.

\subsection{Local \textsf{VC-2-WL}}
\label{sec:delta-2-wl-update}

In addition to our $\vctwl$ stemmed from standard $\WLk{2}$, we evaluate a stronger variant named $\delta \text{-} \mathsf{VC} \text{-2-}\mathsf{WL}$ based on the $\delta\text{-}k \text{-}\mathsf{WL}$ framework of \citet{Morris2020b}, adapted here for $k=2$. The color initialization follows \Cref{eq:vcwl}, and the constraint node update follows \cref{def:2wl}. But the refinement step of variable nodes is augmented to explicitly incorporate local connectivity:
\begin{equation*}
\vec{v}_{ij}^t \coloneq \mathsf{hash} \Bigl( \vec{v}_{ij}^{t-1},\dgal[\big]{ \left( \vec{v}_{uj}^{t-1}, \mathsf{adj}(C_{ui}) \right) \mid u \in [n]},  \dgal[\big]{ \left(\vec{v}_{iu}^{t-1}, \mathsf{adj}(C_{uj}) \right) \mid u \in [n]}, \dgal[\big]{(A_{k, ij}, \vec{c}^{t-1}_k) \mid k \in N(ij) } 
\Bigr)
\end{equation*}
where $\mathsf{adj}(C_{ij})$ is an indicator function representing the sparsity pattern of the cost matrix $C$, i.e., taking the value $1$ if $C_{ij} \neq 0$, and $0$ otherwise.

In \citet{Morris2020b}, it is established that $\delta\text{-}k \text{-}\mathsf{WL}$ is strictly more expressive than standard $\WLk{k}$ for graph isomorphism testing. In our SDP context, we prove the same result.

\begin{proposition}
The $\delta \text{-} \mathsf{VC} \text{-2-}\mathsf{WL}$ algorithm strictly refines $\vctwl$, denoted as $$\delta \text{-} \mathsf{VC} \text{-2-}\mathsf{WL} \sqsubset \vctwl.$$
\end{proposition}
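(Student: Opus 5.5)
Two things need to be shown: the refinement $\delta\text{-}\mathsf{VC}\text{-2-}\mathsf{WL} \sqsubseteq \vctwl$, and a separating instance. Both follow the template of \Cref{prop:2wl_refine_vcwl}.

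\textbf{Refinement.} The argument is an induction on the iteration $t$, showing that equal colors under $\delta\text{-}\mathsf{VC}\text{-2-}\mathsf{WL}$ imply equal colors under $\vctwl$, for both variable and constraint nodes.

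At $t=0$ the two algorithms share the initialization \Cref{eq:vcwl}, so the claim is immediate.

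For the inductive step, constraint nodes are updated identically in both algorithms. By the inductive hypothesis their inputs agree, so their colors agree.

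For a variable node, suppose $\vec{v}^{t}_{ij}=\vec{v}^{t}_{pq}$ under the $\delta$-variant. Injectivity of $\mathsf{hash}$ gives three equalities:
\begin{itemize}
\item the previous colors are equal;
\item the constraint multisets are equal;
\item the decorated multisets are equal, i.e., $\dgal{(\vec{v}^{t-1}_{uj},\mathsf{adj}(C_{ui}))\mid u\in[n]}=\dgal{(\vec{v}^{t-1}_{uq},\mathsf{adj}(C_{up}))\mid u\in[n]}$, and likewise for the row multisets.
\end{itemize}
Projecting each pair onto its first coordinate preserves multiset equality. This yields exactly the row and column multisets used in \Cref{def:2wl}.

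Applying the inductive hypothesis elementwise transfers all of these equalities to the $\vctwl$ colors at $t-1$, so the $\vctwl$ hash inputs coincide. A short remark is also needed on symmetrization. $\vctwl$ copies $\vec{v}_{ij}$ to $\vec{v}_{ji}$ for $i<j$. I would treat the $\delta$-variant with the same symmetrization convention, since its update has the same ordered-tuple asymmetry. The induction is then applied to the representative with $i\le j$, and the conclusion transfers to the mirrored entry.

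\textbf{Strictness.} This is the main obstacle: I need an instance where $\vctwl$ cannot separate two variables but the $\delta$-variant can.

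My first try is the regular-graph setting that motivated the max-cut (reg) dataset. Take $\vec{C}$ as the adjacency matrix of a disjoint union of two triangles versus a $6$-cycle, placed as a block-diagonal $\vec{C}$ of size $12$. Add the max-cut constraints $X_{ii}=1$. Off-diagonal entries then fall into two classes: edges ($C=1$) and non-edges ($C=0$).

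Every row of $\vec{C}$ contains the same multiset of entries. Moreover, all edge variables have identical row and column multisets of colors, and the constraints treat every diagonal entry alike. So $\vctwl$ stabilizes on the atomic partition $\{$diagonal, edge, non-edge$\}$ and cannot separate, say, an edge $(i,j)$ inside a triangle from an edge $(p,q)$ on the hexagon.

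In the $\delta$-variant, the aggregation for the edge $(i,j)$ pairs each color $\vec{v}_{uj}$ with the flag $\mathsf{adj}(C_{ui})$. For $u$ ranging over the neighbors of $i$, the triangle edge sees the entry $\vec{v}_{uj}$ with $u$ the third triangle vertex, which is an edge. The hexagon edge instead sees a non-edge, since a $6$-cycle has no triangles. So the decorated multisets differ at $t=1$.

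This gives strictness of the color partitions. I would also want the instance to be a genuine failure of $\vctwl$ in the sense of \Cref{thm:2wl_fail}, i.e., the two entries differ in the solution. If the optimal $X^*$ values on triangle edges and hexagon edges coincide, I would perturb the weights $w_{ij}$ within a block, or simply state strictness at the level of color partitions, which is all that the statement requires.
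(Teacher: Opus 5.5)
Your proposal is correct and follows the paper's route. Refinement holds because the $\delta$-variant hashes a superset of the $\vctwl$ information. Strictness comes from a regular cost matrix whose $\mathsf{adj}$ flags detect triangles. The paper uses the $6$-vertex prism with $\Tr(\vec{X})=1$ and separates the rung $(1,6)$ from the triangle edge $(2,5)$; your $2K_3\cup C_6$ instance with $X_{ii}=1$ works the same way. It even has $X^*_{ij}=-\tfrac{1}{2}$ on triangle edges versus $-1$ on hexagon edges, so you do not need the weight-perturbation fallback, which would change the $C$-entries and let $\vctwl$ split the pair at initialization.

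Your symmetrization remark addresses a point the paper leaves implicit, and it is necessary. Under the ordered rule $\vec{v}_{ji}\leftarrow\vec{v}_{ij}$, take the path $1$--$2$--$3$ with $\vec{A}_1=\vec{I}_3$: $\vctwl$ already separates the automorphic entries $(1,2)$ and $(3,2)$ at $t=1$, whereas an unsymmetrized, equivariant $\delta$-variant keeps them equal. So the refinement holds only if the $\delta$-variant uses the same ordered symmetrization, as you assume.
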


\begin{proof}
By definition, $\delta \text{-} \mathsf{VC} \text{-2-}\mathsf{WL}$ initialization and constraint node update are identical to $\vctwl$, therefore, we focus on variable node update. By observation, the information aggregated by $\delta \text{-} \mathsf{VC} \text{-2-}\mathsf{WL}$ is a superset of $\vctwl$. Therefore, we immediately have $\delta \text{-} \mathsf{VC} \text{-2-}\mathsf{WL} \sqsubseteq \vctwl$, because the variables indistinguishable to $\delta \text{-} \mathsf{VC} \text{-2-}\mathsf{WL}$ must also be the same under $\vctwl$. 

To show the strict refinement, we construct a sparse example where the adjacency information is non-trivial. Consider an SDP instance defined by 
\begin{equation}
\vec{C} = \begin{bmatrix}
0& 1& 0& 0& 1& 1 \\
1& 0& 0& 1& 1& 0 \\
0& 0& 0& 1& 1& 1 \\
0& 1& 1& 0& 0& 1 \\
1& 1& 1& 0& 0& 0 \\
1& 0& 1& 1& 0& 0 \\
\end{bmatrix}, \quad
\vec{A}_1 = \vec{I}_6, \quad
\vec{b} = [1].
\end{equation}

The $\vec{C}$ is constructed as the adjacency matrix of a regular graph. Because all the row and column multisets are identical, the $\vctwl$ converges to the color configuration
$\begin{bmatrix}
a& b& c& c& b& b \\
b& a& c& b& b& c \\
c& c& a& b& b& b \\
c& b& b& a& c& b \\
b& b& b& c& a& c \\
b& c& b& b& c& a \\
\end{bmatrix}$ immediately after initialization, where $\{a,b,\cdots\}$ are colors from an alphabet. Thus, the variables $(1,6)$ and $(2,5)$ are not distinguished by $\vctwl$. 

For the update of $\delta \text{-} \mathsf{VC} \text{-2-}\mathsf{WL}$, we have at initialization $\vec{v}_{16}^0 = \vec{v}_{25}^0 = b$, while
\begin{align*}
\vec{v}_{16}^1 \coloneq \mathsf{hash} \bigg( \vec{v}_{16}^0, 
& \dgal{ (\vec{v}_{16}^0, \mathsf{adj}(C_{11})), (\vec{v}_{26}^0, \mathsf{adj}(C_{12})), (\vec{v}_{36}^0, \mathsf{adj}(C_{13})), (\vec{v}_{46}^0, \mathsf{adj}(C_{14})), (\vec{v}_{56}^0, \mathsf{adj}(C_{15})) (\vec{v}_{66}^0, \mathsf{adj}(C_{16})) }, \\
& \dgal{ (\vec{v}_{11}^0, \mathsf{adj}(C_{16})), (\vec{v}_{12}^0, \mathsf{adj}(C_{26})), (\vec{v}_{13}^0, \mathsf{adj}(C_{36})), (\vec{v}_{14}^0, \mathsf{adj}(C_{46})), (\vec{v}_{15}^0, \mathsf{adj}(C_{56})) (\vec{v}_{16}^0, \mathsf{adj}(C_{66})) } \bigg) \\
= \bigg( b, &  \dgal{ (b, 0), (c, 1), (b, 0), (b, 0), (c, 1) (a, 1) }, \dgal{ (a, 1), (b, 0), (c, 1), (c, 1), (b, 0) (b, 0) } \bigg) \\
\end{align*}
and 
\begin{align*}
\vec{v}_{25}^1 \coloneq \mathsf{hash} \bigg( \vec{v}_{25}^0, 
& \dgal{ (\vec{v}_{15}^0, \mathsf{adj}(C_{21})), (\vec{v}_{25}^0, \mathsf{adj}(C_{22})), (\vec{v}_{35}^0, \mathsf{adj}(C_{23})), (\vec{v}_{45}^0, \mathsf{adj}(C_{24})), (\vec{v}_{55}^0, \mathsf{adj}(C_{25})) (\vec{v}_{65}^0, \mathsf{adj}(C_{26})) }, \\
& \dgal{ (\vec{v}_{21}^0, \mathsf{adj}(C_{15})), (\vec{v}_{22}^0, \mathsf{adj}(C_{25})), (\vec{v}_{23}^0, \mathsf{adj}(C_{35})), (\vec{v}_{24}^0, \mathsf{adj}(C_{45})), (\vec{v}_{25}^0, \mathsf{adj}(C_{55})) (\vec{v}_{26}^0, \mathsf{adj}(C_{56})) } \bigg) \\
 = \bigg( b, & \dgal{ (b,1), (b,0), (b,0), (c,1), (a,1), (c,0) }, \dgal{ (b,1), (a,1), (c,1), (b,0), (b,0), (c,0), } \bigg) \\
\end{align*}
We notice that in the first multiset, $\vec{v}_{25}^1$ has a $(b,1)$ that $\vec{v}_{16}^1$ doesn't, so $(1,6)$ and $(2,5)$ are distinguished by $\delta \text{-} \mathsf{VC} \text{-2-}\mathsf{WL}$. 

Therefore, there are variables where $\delta \text{-} \mathsf{VC} \text{-2-}\mathsf{WL}$ can distinguish while $\vctwl$ cannot. That completes the proof that $\delta \text{-} \mathsf{VC} \text{-2-}\mathsf{WL} \sqsubset \vctwl$. 
\end{proof}

However, we demonstrate that this increased expressivity remains insufficient for approximating SDP solutions.
\begin{proposition}
The $\delta \text{-} \mathsf{VC} \text{-2-}\mathsf{WL}$ fails to represent linear SDPs. That is, there exist instances where the stable colors under $\delta \text{-} \mathsf{VC} \text{-2-}\mathsf{WL}$ satisfy $\vec{v}^{\infty}_{ij} = \vec{v}^{\infty}_{pq}$ for distinct indices $(i,j)$ and $(p,q)$, yet the unique optimal solution entries differ $X^*_{ij} \neq X^*_{pq}$.
\end{proposition}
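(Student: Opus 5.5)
The plan is to reuse the Latin-square counterexample from the proof of \Cref{thm:2wl_fail}: $n=6$, the same objective matrix $\vec{C}$, $\vec{A}_1=\vec{I}_6$ and $\vec{b}=[1]$. The point is that the only information $\delta \text{-} \mathsf{VC} \text{-2-}\mathsf{WL}$ adds over $\vctwl$ is the sparsity indicator $\mathsf{adj}(C_{\cdot\cdot})$. That extra information vanishes whenever $\vec{C}$ has no zero entries, and every entry of this Latin square lies in $\{1,\dots,6\}$. Hence $\mathsf{adj}(C_{ui})=1$ for all $u,i\in[n]$.

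\textbf{Step 1: reduce to $\vctwl$.} Since $\mathsf{adj}$ is identically $1$ here, each pair $\left(\vec{v}^{t-1}_{uj},\mathsf{adj}(C_{ui})\right)$ in the $\delta \text{-} \mathsf{VC} \text{-2-}\mathsf{WL}$ update equals $\left(\vec{v}^{t-1}_{uj},1\right)$. The same holds for the row multiset. So each multiset is in bijection with the corresponding plain multiset $\dgal{\vec{v}^{t-1}_{uj}\mid u\in[n]}$ or $\dgal{\vec{v}^{t-1}_{iu}\mid u\in[n]}$ used by $\vctwl$ in \Cref{def:2wl}. The initialization and the constraint update are identical by definition. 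By induction on $t$, the two algorithms therefore induce the same partition of variable and constraint nodes at every iteration, and hence the same stable coloring.

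\textbf{Step 2: transfer the failure.} The proof of \Cref{thm:2wl_fail} shows that $\vctwl$ gives $\vec{v}^\infty_{15}=\vec{v}^\infty_{24}$ on this instance. The argument there is that every row and column has the same value multiset, and that both entries are off-diagonal with $C_{15}=C_{24}=5$. By Step 1 the same equality holds under $\delta \text{-} \mathsf{VC} \text{-2-}\mathsf{WL}$. The unique minimum-Frobenius-norm solution (\Cref{thm:sdp_unique}) is already computed there, with $X^*_{15}\approx-0.115\neq X^*_{24}\approx-0.172$, and this completes the argument.

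\textbf{Main obstacle.} The only delicate point is making the Step 1 induction rigorous while accounting for the manual symmetrization $\vec{v}_{ji}^t\leftarrow\vec{v}_{ij}^t$, which I assume $\delta \text{-} \mathsf{VC} \text{-2-}\mathsf{WL}$ inherits from $\vctwl$. I would handle it by noting that both algorithms apply the same symmetrization after an update that induces the same partition, so the partitions stay equal. If one insists on a sparse example, where $\delta \text{-} \mathsf{VC} \text{-2-}\mathsf{WL}$ gains real power, the same idea still applies: shift $\vec{C}$ by a constant off the zero set, which changes the objective only by a constant on the feasible set when the constraints fix the relevant entries. That is unnecessary here, because the dense instance already suffices.
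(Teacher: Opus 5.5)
Your proposal is correct and follows the paper's proof: the Latin-square $\vec{C}$ from the $\vctwl$ counterexample has no zero entries, so $\mathsf{adj}\equiv 1$, the $\delta \text{-} \mathsf{VC} \text{-2-}\mathsf{WL}$ update collapses to the $\vctwl$ update, and the failure to separate $(1,5)$ from $(2,4)$ carries over; your induction just spells out the paper's one-line ``degenerates to $\vctwl$'' remark, and the closing aside about sparse instances is not needed. The numerical values you borrow from the paper are also right: the smallest eigenvalue $-3-\sqrt{19}$ of $\vec{C}$ is simple with eigenvector proportional to $(-\sqrt{19},-3,-3,\sqrt{19},2,4)$, which gives $X^*_{15}=-\sqrt{19}/38\neq X^*_{24}=-3\sqrt{19}/76$.
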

\begin{proof}
We prove by construction with a counterexample. 

Consider the one established in the proof of \Cref{thm:2wl_fail}. If the cost matrix $\vec{C}$ in that instance is fully dense, the indicator term $\mathsf{adj}$ outputs a constant value and provides no extra information. Consequently, the $\delta \text{-} \mathsf{VC} \text{-2-}\mathsf{WL}$ update rule degenerates to the standard $\vctwl$ rule, showing the same inability to distinguish the optimal solution entries.
\end{proof}

Similar to the analysis between $\vctfwl$ and $\vctwl$, we show that $\vctfwl$ and $\delta \text{-} \mathsf{VC} \text{-2-}\mathsf{WL}$ are incomparable. 

\begin{proposition}
$\delta \text{-} \mathsf{VC} \text{-2-}\mathsf{WL}$ and $\vctfwl$ are incomparable in expressive power, denoted as
$$\delta \text{-} \mathsf{VC} \text{-2-}\mathsf{WL} \not\equiv \vctfwl.$$
\end{proposition}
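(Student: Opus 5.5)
The statement has two halves, and I would prove each with an explicit instance. For each half I need a pair of variable indices that one algorithm separates and the other does not. Wherever possible I would reuse the counterexamples already constructed, together with the relations $\delta \text{-} \mathsf{VC} \text{-2-}\mathsf{WL} \sqsubset \vctwl$ and $\vctwl \not\equiv \vctfwl$ proved above.

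\emph{First half: $\vctfwl$ separates a pair that $\delta \text{-} \mathsf{VC} \text{-2-}\mathsf{WL}$ does not.}
\begin{enumerate}
\item Take the Latin-square instance from the proof of \Cref{thm:2wl_fail}.
\item Every entry of $\vec{C}$ there lies in $\{1,\dots,6\}$, so $\vec{C}$ is fully dense and $\mathsf{adj}(C_{uv}) = 1$ for all $u,v$.
\item Hence the extra component in each multiset of the $\delta \text{-} \mathsf{VC} \text{-2-}\mathsf{WL}$ update is constant, exactly as argued in the preceding proposition. The update therefore induces the same partition as $\vctwl$ at every iteration, and in particular $\vec{v}^\infty_{15} = \vec{v}^\infty_{24}$.
\item By the proof of \Cref{prop:2fwl_2wl_incomp}, $\vctfwl$ separates $(1,5)$ from $(2,4)$ on this instance.
\end{enumerate}
This step is routine.

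\emph{Second half: $\delta \text{-} \mathsf{VC} \text{-2-}\mathsf{WL}$ separates a pair that $\vctfwl$ does not.}
\begin{enumerate}
\item Reuse the $4\times 4$ instance from the proof of \Cref{prop:2fwl_2wl_incomp}, with $\vec{A}_1 = \bm{1}_{4\times 4}$, and the pair $(1,4),(2,3)$.
\item $\vctwl$ already separates this pair at iteration $1$, because its row multisets $\dgal{1,0,4,2}$ and $\dgal{0,1,2,3}$ differ.
\item Since $\delta \text{-} \mathsf{VC} \text{-2-}\mathsf{WL} \sqsubseteq \vctwl$ (its aggregated information is a superset of that of $\vctwl$), $\delta \text{-} \mathsf{VC} \text{-2-}\mathsf{WL}$ separates the pair as well. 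Concretely, its first multiset at $(1,4)$ contains the pair $(v^0_{44}, \mathsf{adj}(C_{41})) = (1,1)$, while at $(2,3)$ it contains $(v^0_{13}, \mathsf{adj}(C_{21})) = (4,0)$. Since $4$ never occurs among the column-$4$ entries, the multisets cannot match.
\end{enumerate}

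\textbf{Main obstacle.} I still have to show that $\vctfwl$ keeps $(1,4)$ and $(2,3)$ together in the \emph{stable} coloring, not only after iteration $1$ as exhibited earlier.

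\begin{enumerate}
\item My first attempt would be to compute the partition of $[4]^2$ after one $\vctfwl$ round. The constraint node contributes nothing distinguishing, since every variable has the single neighbour $k=1$ with weight $1$, and the constraint's own color is constant.
\item I would then verify that the next round induces the same partition. It suffices to check, class by class, that the multisets of unordered pairs $\dgal{\vec{v}_{uj},\vec{v}_{iu}}$ agree within each class. Once the partition does not split further, it is stable, and $(1,4),(2,3)$ stay together.
\item If this check fails, I would look for a bijection $\sigma$ of $[n]$ such that $C_{\sigma(u)\sigma(v)}$ equals $C_{uv}$ or $C_{vu}$ compatibly across the index triples $(i,u,j)$. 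Such a symmetry is invisible to the unordered inner multiset, and would give an inductive invariance argument for stability.
\item Failing that, I would modify the entries of $\vec{C}$ so that the iteration-$1$ coincidence is forced by such a reflection symmetry, while the row multisets still differ. The $\vctwl$ and $\delta \text{-} \mathsf{VC} \text{-2-}\mathsf{WL}$ separation would survive this change, because it only needs distinct row multisets.
\end{enumerate}
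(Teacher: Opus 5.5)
Your proposal is correct and follows the paper's own proof. The paper likewise reuses two instances: the dense Latin-square instance, where $\mathsf{adj}\equiv 1$ collapses $\delta \text{-} \mathsf{VC} \text{-2-}\mathsf{WL}$ to $\vctwl$ while $\vctfwl$ separates $(1,5)$ from $(2,4)$ already in round one (the multiset for $(1,5)$ contains the inner pairs $\{2,6\},\{3,4\}$, whereas that for $(2,4)$ contains $\{2,4\},\{3,6\}$ instead), and the $4\times 4$ instance combined with $\delta \text{-} \mathsf{VC} \text{-2-}\mathsf{WL} \sqsubseteq \vctwl$; but it only exhibits $\vec{v}^1_{14}=\vec{v}^1_{23}$ and never checks stability, so the obstacle you flag is a real omission there. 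Your step 2 (equivalently step 3) closes it: the permutation exchanging $1\leftrightarrow 3$ and $2\leftrightarrow 4$ is an automorphism of $\vec{C}$ (and trivially of $\vec{A}_1=\bm{1}_{4\times 4}$ and $\vec{b}$) sending $(1,4)$ to $(3,2)$, so equivariance of the $\vctfwl$ update together with its built-in symmetry $\vec{v}^t_{32}=\vec{v}^t_{23}$ gives $\vec{v}^t_{14}=\vec{v}^t_{23}$ for every $t$, and the round-one partition, which coincides with the orbits of this permutation and transposition, is indeed already stable.
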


\begin{proof}
The proof is straightforward with the same logic as \cref{prop:2fwl_2wl_incomp}. First, we can reuse the same instance in \cref{prop:2fwl_2wl_incomp}, where $\vctwl$ can distinguish a pair of variables and $\vctfwl$ cannot. Since $\delta \text{-} \mathsf{VC} \text{-2-}\mathsf{WL}$ strictly refines $\vctwl$, it is also stronger than $\vctfwl$ on that example. Second, we reuse the example in \Cref{thm:2wl_fail}, where $\delta \text{-} \mathsf{VC} \text{-2-}\mathsf{WL}$ fails while $\vctfwl$ can effectively distinguish all the variables except symmetric ones.

Therefore, $\delta \text{-} \mathsf{VC} \text{-2-}\mathsf{WL}$ and $\vctfwl$ are incomparable in expressive power. 
\end{proof}

Similar to $\vctmpnn$, we can neuralize $\delta \text{-} \mathsf{VC} \text{-2-}\mathsf{WL}$ into $\deltatmpnn$:
\begin{equation*}
\begin{aligned}
\vec{m}_{ij, \text{row}}^{t} &\coloneq \sum_{u \in [n]} \mathsf{MSG}^{t}_{\text{row}}\left(\vec{h}_{iu}^{t-1}, \mathsf{adj}(C_{uj})\right) \\
\vec{m}_{ij, \text{col}}^{t} &\coloneq \sum_{u \in [n]} \mathsf{MSG}^{t}_{\text{col}}\left(\vec{h}_{uj}^{t-1}, \mathsf{adj}(C_{iu})\right) \\
\vec{m}_{ij,\text{c} \rightarrow \text{v}}^{t} &\coloneq \sum_{k \in N(ij)} \mathsf{MSG}^{t}_{\text{c} \rightarrow \text{v}}\left(A_{k,ij}, \vec{h}_k^{t-1}\right) \\
\vec{m}_{k,\text{v} \rightarrow \text{c}}^{t} &\coloneq \sum_{(i,j) \in N(k)} \mathsf{MSG}^{t}_{\text{v} \rightarrow \text{c}}\left(A_{k,ij}, \vec{h}_{ij}^{t-1}\right) \\
\vec{h}_{ij}^{t} &\coloneq \mathsf{UPD}_{\text{v}}^{t}\left( \vec{h}_{ij}^{t-1}, \vec{m}_{ij, \text{col}}^{t}, \vec{m}_{ij, \text{row}}^{t}, \vec{m}_{ij,\text{c} \rightarrow \text{v}}^{t}\right) \\
\vec{h}^t_{k} &\coloneq \mathsf{UPD}_{\text{c}}^{t} \left( \vec{h}^{t-1}_{k}, \vec{m}_{k,\text{v} \rightarrow \text{c}}^{t} \right)
\end{aligned}
\end{equation*}
which, similar to \Cref{thm:vc2mpnn_expressivity}, is not universal approximating. 

\subsection{\textsf{VC}-2-\textsf{IGN}}
\label{sec:2-ign-update}

As an alternative instantiation of the variable update mechanism, we consider the 2-order Invariant Graph Network ($\twoign$), based on the theoretical characterization of permutation equivariant layers by \citet{Mar+2019c}.

The $\twoign$ operates on second-order tensors. For a general linear layer mapping $\mathbb{R}^{n \times n} \to \mathbb{R}^{n \times n}$ (feature dimension omitted for simplicity), the space of permutation equivariant operations is spanned by a basis of size $b(2+2) = 15$, where $b(\cdot)$ denotes the Bell number. However, our application to SDPs imposes a strict geometric constraint: both the input feature matrix and the output updated features must lie in the symmetric space $\mathbb{S}^n$. This symmetry constraint reduces the dimension of the equivariant basis from 15 to 9. The reduction occurs through two mechanisms:
\begin{enumerate}
\item \textbf{Input symmetry}: Since $\vec{X}_{ij} = \vec{X}_{ji}$, operations that distinguish rows from columns become redundant, e.g., summing over rows yields the same vector as summing over columns.
\item \textbf{Output symmetry}: To guarantee the output is symmetric, the corresponding basis operations must share weights. For instance, broadcasting a vector to rows must be paired with broadcasting to columns.
\end{enumerate}

Formally, let $\bm{1} \in \mathbb{R}^n$ be the all-ones vector, $\vec{I}$ be the identity matrix, $\vec{d} = \text{diag}(\vec{X}) \in \mathbb{R}^n$ be the vector of diagonal entries, and $\vec{s} = \vec{X}\bm{1} \in \mathbb{R}^n$ be the vector of row sums. We identify the following 9 basis operations $\mathcal{B} = \{ \mathsf{Op}_1, \dots, \mathsf{Op}_9 \}$ that map a symmetric $\vec{X}$ to a symmetric output:
\begin{enumerate}
\label{em:ign_op}
    \item $\mathsf{Op}_1(\vec{X}) = \vec{X}$. Identity.
    \item $\mathsf{Op}_2(\vec{X}) = \Tr(\vec{X}) \bm{1}\bm{1}\trans$. Global trace broadcast.
    \item $\mathsf{Op}_3(\vec{X}) = (\bm{1}\trans \vec{X} \bm{1}) \bm{1}\bm{1}\trans$. Global sum broadcast.
    \item $\mathsf{Op}_4(\vec{X}) = \text{diag}(\vec{d})$. Copy diagonal.
    \item $\mathsf{Op}_5(\vec{X}) = \text{diag}(\vec{s})$. Row sums to diagonal.
    \item $\mathsf{Op}_6(\vec{X}) = \Tr(\vec{X}) \vec{I}$. Trace to diagonal.
    \item $\mathsf{Op}_7(\vec{X}) = (\bm{1}\trans \vec{X} \bm{1}) \vec{I}$. Global sum to diagonal.
    \item $\mathsf{Op}_8(\vec{X}) = \vec{s}\bm{1}\trans + \bm{1}\vec{s}\trans$. Row/Col sums broadcast.
    \item $\mathsf{Op}_9(\vec{X}) = \vec{d}\bm{1}\trans + \bm{1}\vec{d}\trans$. Diagonal entries broadcast.
\end{enumerate}

In the context of deep learning, we extend these operations to multi-dimensional inputs $\vec{X} \in \mathbb{R}^{n \times n \times f}$. For each basis operation $\mathsf{Op}_{b} \in \mathcal{B}$, we assign a learnable weight matrix $\vec{W}_b \in \mathbb{R}^{f \times f'}$. The $\twoign$ layer update is defined as the aggregation of these transformed bases:
\begin{equation*}
\vec{X}' = \sum_{b=1}^9 \mathsf{Op}_b(\vec{X}) \vec{W}_b \quad \in \mathbb{R}^{n \times n \times f'}.
\end{equation*}
This formulation ensures that the neural network remains strictly equivariant and preserves the symmetric matrix structure required by the SDP relaxation. 

We wrap this up as a neural network $\mathsf{IGN}$ and plug it into our V-C bipartite graph message passing, and we have the following form of $\vcign$:
\begin{equation*}
\begin{aligned}
\vec{m}_{ij, \text{IGN}}^{t} &\coloneq \mathsf{IGN}^t(\vec{H}^{t-1})_{ij} \\
\vec{m}_{ij,\text{c} \rightarrow \text{v}}^{t} &\coloneq \sum_{k \in N(ij)} \mathsf{MSG}^{t}_{\text{c} \rightarrow \text{v}}\left(A_{k,ij}, \vec{h}_k^{t-1}\right) \\
\vec{m}_{k,\text{v} \rightarrow \text{c}}^{t} &\coloneq \sum_{(i,j) \in N(k)} \mathsf{MSG}^{t}_{\text{v} \rightarrow \text{c}}\left(A_{k,ij}, \vec{h}_{ij}^{t-1}\right) \\
\vec{h}_{ij}^{t} &\coloneq \mathsf{UPD}_{\text{v}}^{t}\left( \vec{h}_{ij}^{t-1}, \vec{m}_{ij, \text{IGN}}^{t}, \vec{m}_{ij,\text{c} \rightarrow \text{v}}^{t}\right) \\
\vec{h}^t_{k} &\coloneq \mathsf{UPD}_{\text{c}}^{t} \left( \vec{h}^{t-1}_{k}, \vec{m}_{k,\text{v} \rightarrow \text{c}}^{t} \right)
\end{aligned}
\end{equation*}
where $\vec{H}^{t-1} \in \mathbb{R}^{n \times n \times d}$ is the tensor of variable features $\vec{h}_{ij}^{t-1}$. 

\begin{proposition}
$\vcign$ is not a universal approximator for SDPs. Given $\forall \epsilon > 0$, there always exists an SDP problem, such that for all $\vcign$, the error between the predicted solution $\vec{X}'$ and the optimal solution $\vec{X}^*$ satisfies
$$\lVert \vec{X}^* - \vec{X}' \rVert_{\infty} > \epsilon.$$
\end{proposition}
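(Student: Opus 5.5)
Following the pattern of \Cref{thm:vcmpnn_fail}, the plan is to exhibit a single instance on which $\vcign$ is forced to output equal values at two entries where $\vec{X}^*$ differs, and then amplify that gap with \Cref{lem:scale_b}. I would reuse the Latin-square instance from the proof of \Cref{thm:2wl_fail}: $\vec{C}$ is the $6\times6$ Latin square, $\vec{A}_1=\vec{I}_6$, $\vec{b}=[1]$. There $X^*_{15}\approx-0.115 \neq X^*_{24}\approx-0.172$.

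The key step is an invariant, proved by induction on $t$, stating that all off-diagonal entries with equal $C_{ij}$ receive the same feature $\vec{h}^t_{ij}$. A stronger form also suffices: off-diagonal features depend only on $C_{ij}$, and diagonal features are constant. At $t=0$ this is immediate from \Cref{eq:vc2fmpnn_init}. The single constraint node sees $\dgal{(1,\vec{h}_{ii})\mid i\in[n]}$ and so stays a global quantity. The message $\vec{m}_{ij,\text{c}\rightarrow\text{v}}$ is zero off the diagonal and identical on the diagonal.

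For the $\mathsf{IGN}$ message, I would check the nine basis operations one by one. $\mathsf{Op}_1$ preserves the invariant trivially. $\mathsf{Op}_2$, $\mathsf{Op}_3$, $\mathsf{Op}_6$ and $\mathsf{Op}_7$ produce global scalars broadcast to all entries or to the diagonal. $\mathsf{Op}_4$ copies the diagonal, which is constant. For $\mathsf{Op}_5$ and $\mathsf{Op}_8$, the row sums $\vec{s}_i=\sum_u \vec{h}_{iu}$ are the same for every $i$, because each row of $\vec{C}$ contains each value $1,\dots,6$ exactly once. Hence $\vec{s}$ is a constant vector, and $\vec{s}\bm{1}\trans+\bm{1}\vec{s}\trans$ is constant. $\mathsf{Op}_9$ uses $\vec{d}$, which is constant, so it is constant as well. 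Each term is thus a function of $C_{ij}$ and the diagonal indicator only. Since $\mathsf{UPD}$ acts pointwise, the invariant propagates, and any nonlinearity applied entrywise inside $\mathsf{IGN}$ also preserves it.

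The conclusion is that every parameterization yields $X'_{15}=X'_{24}=v$. Setting $\delta=|X^*_{15}-X^*_{24}|>0$, I would scale $\vec{b}$ by $\alpha=2\epsilon'/\delta$ with $\epsilon'>\epsilon$. By \Cref{lem:scale_b}, together with the observation that scaling preserves the minimum-norm property, the new solution is $\alpha\vec{X}^*$. The invariant argument is unaffected, since it never used the value of $b$, and the triangle inequality gives $\lVert\alpha\vec{X}^*-\vec{X}'\rVert_\infty\ge\epsilon'>\epsilon$.

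The main obstacle is the per-operation check of the row-sum-based operations $\mathsf{Op}_5$ and $\mathsf{Op}_8$. The claim that row sums are constant relies on the stronger invariant that features are a function of $C_{ij}$, which is stronger than mere $\vctwl$-indistinguishability. I would therefore carry that stronger invariant explicitly through the induction rather than appeal to $\vctwl$ colors.
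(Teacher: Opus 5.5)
Your proposal is correct and follows essentially the paper's route. The paper omits the proof but sketches it: it reuses the Latin-square instance from \Cref{thm:2wl_fail}, notes that identical row/column multisets and identical diagonal entries leave $(1,5)$ and $(2,4)$ indistinguishable for $\vcign$, and amplifies the gap by scaling $\vec{b}$ as in \Cref{thm:vcmpnn_fail}. Your explicit induction over the nine basis operations, and your observation that scaling $\vec{b}$ by $\alpha>0$ also scales the minimum-norm solution (which \Cref{lem:scale_b} alone does not state), fill in details the paper leaves implicit.
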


We can easily prove in the same way as \Cref{thm:2wl_fail}. $\vcign$ fails on that example too, because the row and column multisets are identical $\{1,2,3,4,5,6\}$, and the diagonal elements are the same. We omite the proof here. 

While the $\twoign$ architecture is defined via continuous linear operations on tensors, comparing its expressivity directly to the discrete $\vctwl$ requires a discrete analogue. To this end, we define the $\vcignwl$ algorithm. This variant adapts some of the $\twoign$ basis operations, specifically the row/column aggregations and diagonal broadcasting, because the other operations do not contribute to expressivity. 
\begin{equation}
\label{def:vcignwl}
\begin{aligned}
\vec{v}_{ij}^t \coloneq \mathsf{hash} \Bigl( & \vec{v}_{ij}^{t-1},\dgal[\big]{\vec{v}_{uj}^{t-1} \mid u \in [n]}, \dgal[\big]{\vec{v}_{iu}^{t-1} \mid u \in [n]}, \\
& \dgal[\big]{(A_{k, ij}, \vec{c}^{t-1}_k) \mid k \in N(ij)}, \vec{v}^t_{ii}, \vec{v}^t_{jj} \Bigr) \\
\vec{c}^t_{k} \coloneq \mathsf{hash} \Bigl( &\vec{c}^{t-1}_{k}, \dgal[\big]{\left(A_{k,ij}, \vec{v}_{ij}^{t-1} \right) \mid (i,j) \in N(k)} \Bigr).
\end{aligned}
\end{equation}
Notably, we replace the sum over rows and columns with $\mathsf{hash}$ of the multisets $\dgal[\big]{\vec{v}_{uj}^{t-1} \mid u \in [n]}, \dgal[\big]{\vec{v}_{iu}^{t-1} \mid u \in [n]}$, which boosts the expressivity because of the injectivity of $\mathsf{hash}$ function. The constraint update is the same as \cref{def:2wl}. 

We now establish the expressivity hierarchy between this variant and the $\vctwl$.

\begin{proposition}
$\vctwl$ has expressivity power equivalent to $\vcignwl$, denoted as 
$$\vctwl \equiv \vcignwl.$$
\end{proposition}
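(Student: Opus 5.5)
We prove the two refinements $\vcignwl \sqsubseteq \vctwl$ and $\vctwl \sqsubseteq \vcignwl$ separately, each by induction on the iteration $t$, in the style of the proof of \Cref{prop:2wl_refine_vcwl}. Throughout we compare stable colorings on a fixed instance. The constraint-node updates of the two algorithms are literally identical and both are initialised by \Cref{eq:vcwl}, so in each direction only the variable-node update needs attention. (We read the terms $\vec{v}^t_{ii},\vec{v}^t_{jj}$ in \Cref{def:vcignwl} as $\vec{v}^{t-1}_{ii},\vec{v}^{t-1}_{jj}$, the only reading under which the update is well defined.)

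\emph{Direction $\vcignwl \sqsubseteq \vctwl$.} The $\vcignwl$ hash receives a strict superset of the arguments of the $\vctwl$ hash (\Cref{def:2wl}). Hence, by injectivity of $\mathsf{hash}$, equality of $\vcignwl$ colors at step $t$ forces equality of all $\vctwl$ arguments at step $t-1$, and so equality of the $\vctwl$ colors at step $t$. There is one subtlety: $\vctwl$ symmetrises $\vec{v}_{ji}\leftarrow\vec{v}_{ij}$, whereas $\vcignwl$ as written does not. We handle this either by applying the same symmetrisation convention to $\vcignwl$, or by noting that the pair of diagonal colors is symmetric and checking that the induction hypothesis can be stated on unordered index pairs. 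This direction is essentially routine.

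\emph{Direction $\vctwl \sqsubseteq \vcignwl$.} This is the main obstacle: we must show that the stable $\vctwl$ coloring already determines $\vec{v}_{ii}$ and $\vec{v}_{jj}$ from the color of $(i,j)$. We do not attempt this iteration by iteration. Instead we argue at the stable partition, showing that the stable $\vctwl$ coloring is itself stable under the $\vcignwl$ update. Since $\vcignwl$ refines its initial coloring, its stable coloring is the coarsest stable refinement, and it is therefore refined by any coloring that is $\vcignwl$-stable and refines the initial coloring.

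To recover the diagonal, observe that the row multiset $\dgal{\vec{v}_{iu}^{\infty}\mid u}$ contains exactly one entry that is a diagonal color, namely $\vec{v}_{ii}^\infty$, because the initial indicator $\mathbb{I}_{i=j}$ separates diagonal from off-diagonal colors and all later colors refine it. Stability then says $\vec{v}^\infty_{ij}=\vec{v}^\infty_{pq}$ implies equality of the row multisets of $i$ and $p$, hence $\vec{v}^\infty_{ii}=\vec{v}^\infty_{pp}$. Similarly the column multisets give $\vec{v}^\infty_{jj}=\vec{v}^\infty_{qq}$. The remaining arguments of the $\vcignwl$ hash (row multiset, column multiset, constraint multiset) coincide by $\vctwl$-stability. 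So the stable $\vctwl$ partition is $\vcignwl$-stable, and the refinement follows.

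The delicate point is the ordering between row and column after symmetrisation, since $\vctwl$ sets $\vec{v}_{ji}=\vec{v}_{ij}$. We will check that the pair $(\vec{v}_{ii},\vec{v}_{jj})$ is recovered as an ordered pair consistent with that convention, possibly by comparing unordered pairs on both sides. Combining both directions yields $\vctwl \equiv \vcignwl$.
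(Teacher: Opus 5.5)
Your proposal is correct and rests on the same key observation as the paper. Because the indicator $\mathbb{I}_{i=j}$ separates diagonal from off-diagonal colors and all later colorings refine it, each row (column) multiset contains exactly one diagonal color. Hence equal row multisets of $i$ and $p$ force $\vec{v}_{ii}=\vec{v}_{pp}$, and likewise for columns.

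The difference is in how you package the direction $\vctwl \sqsubseteq \vcignwl$. The paper argues round by round. From $\vec{v}^{t}_{ij}=\vec{v}^{t}_{pq}$ under $\vctwl$ it unpacks equal $(t-1)$-level row and column multisets, extracts $\vec{v}^{t-1}_{ii}=\vec{v}^{t-1}_{pp}$ and $\vec{v}^{t-1}_{jj}=\vec{v}^{t-1}_{qq}$, and concludes that the $\vcignwl$ hash inputs coincide. You instead show that the stable $\vctwl$ partition is $\vcignwl$-stable and appeal to the coarsest-stable-refinement characterization.

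Your reluctance to go iteration by iteration is unnecessary. The diagonal extraction works verbatim at level $t-1$, since $\vec{v}^{t-1}$ already refines the diagonal indicator. The round-by-round version also proves more: the two algorithms induce the same partition after every fixed number of rounds, which is the relevant statement for finite-depth networks such as $\vctmpnn$. Your route only identifies the stable partitions. It also needs the coarsest-stable-refinement lemma, which you should justify in a line: a stable partition refining the initial coloring refines every round by induction, because each update includes the previous color and depends only on multisets of colors.

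Where your proposal is more careful than the paper is symmetrization. The paper silently ignores that $\vctwl$ forces $\vec{v}_{ji}\leftarrow\vec{v}_{ij}$ while $\vcignwl$ as written does not. Without a common convention, $\vcignwl$ already separates $(i,j)$ from $(j,i)$ in the first round whenever the initial row multisets of $i$ and $j$ differ. So the equivalence genuinely requires your first option: symmetrize $\vcignwl$ as well and compare representatives $(\min(i,j),\max(i,j))$, for which the ordered pair $(\vec{v}_{ii},\vec{v}_{jj})$ is recovered consistently.
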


\begin{proof}
First, we show $\vcignwl \sqsubseteq \vctwl$. For $t=0$, since their initialization is the same, $\vcignwl$ trivially refines $\vctwl$. For any $t>0$, if the coloring of $\vcignwl$ cannot distinguish 2 variable, then neither can $\vctwl$, because the color information used by $\vcignwl$ includes that used by the $\vctwl$. 

Then, we show $\vctwl \sqsubseteq \vcignwl$. For $t=0$, since their initialization is the same, it trivially holds. For $t > 0$, if the color refinement under $\vctwl$ cannot distinguish 2 variable $(i,j)$ and $(p, q)$, we have:
\begin{align*}
&\vec{v}^{t}_{ij} = \vec{v}^{t}_{pq} \\
\implies \mathsf{hash} \Bigl( & \vec{v}_{ij}^{t-1},\dgal[\big]{\vec{v}_{uj}^{t-1} \mid u \in [n]}, \dgal[\big]{\vec{v}_{iu}^{t-1} \mid u \in [n]}, \\
& \dgal[\big]{(A_{k, ij}, \vec{c}^{t-1}_k) \mid k \in N(ij)} \Bigr) \\
= \mathsf{hash} \Bigl( & \vec{v}_{pq}^{t-1},\dgal[\big]{\vec{v}_{uq}^{t-1} \mid u \in [n]}, \dgal[\big]{\vec{v}_{pu}^{t-1} \mid u \in [n]}, \\
& \dgal[\big]{(A_{k, pq}, \vec{c}^{t-1}_k) \mid k \in N(pq)} \Bigr) \\
\implies  & \vec{v}^{t-1}_{ij} = \vec{v}^{t-1}_{pq} \\
\land & \dgal[\big]{\vec{v}_{uj}^{t-1} \mid u \in [n]} = \dgal[\big]{\vec{v}_{uq}^{t-1} \mid u \in [n]} \\
\land & \dgal[\big]{\vec{v}_{iu}^{t-1} \mid u \in [n]} = \dgal[\big]{\vec{v}_{pu}^{t-1} \mid u \in [n]} \\
\land & \dgal[\big]{(A_{k, ij}, \vec{c}^{t-1}_k) \mid k \in N(ij)} = \dgal[\big]{(A_{k, pq}, \vec{c}^{t-1}_k) \mid k \in N(pq)}
\end{align*}
Recall in \cref{eq:vcwl} that diagonal elements are specially labeled, they receive unique labels that distinguish them from non-diagonal elements. That means the diagonal elements within e.g. $\dgal[\big]{\vec{v}_{uj}^{t-1} \mid u \in [n]}$ and $\dgal[\big]{\vec{v}_{uq}^{t-1} \mid u \in [n]}$ are unique, and they must be the same as well. So we can continue the implication above:
\begin{align*}
\implies & \vec{v}^{t-1}_{ij} = \vec{v}^{t-1}_{pq} \\
\land & \dgal[\big]{\vec{v}_{uj}^{t-1} \mid u \in [n]} = \dgal[\big]{\vec{v}_{uq}^{t-1} \mid u \in [n]} \land \vec{v}_{jj}^{t-1} = \vec{v}_{qq}^{t-1} \\
\land & \dgal[\big]{\vec{v}_{iu}^{t-1} \mid u \in [n]} = \dgal[\big]{\vec{v}_{pu}^{t-1} \mid u \in [n]} \land \vec{v}_{ii}^{t-1} = \vec{v}_{pp}^{t-1} \\
\land & \dgal[\big]{(A_{k, ij}, \vec{c}^{t-1}_k) \mid k \in N(ij)} = \dgal[\big]{(A_{k, pq}, \vec{c}^{t-1}_k) \mid k \in N(pq)} \\
\implies \mathsf{hash} \Bigl( & \vec{v}_{ij}^{t-1},\dgal[\big]{\vec{v}_{uj}^{t-1} \mid u \in [n]}, \dgal[\big]{\vec{v}_{iu}^{t-1} \mid u \in [n]}, \\
& \dgal[\big]{(A_{k, ij}, \vec{c}^{t-1}_k) \mid k \in N(ij)}, \vec{v}_{ii}^{t-1},  \vec{v}_{jj}^{t-1} \Bigr) \\
= \mathsf{hash} \Bigl( & \vec{v}_{pq}^{t-1},\dgal[\big]{\vec{v}_{uq}^{t-1} \mid u \in [n]}, \dgal[\big]{\vec{v}_{pu}^{t-1} \mid u \in [n]}, \\
& \dgal[\big]{(A_{k, pq}, \vec{c}^{t-1}_k) \mid k \in N(pq)}, \vec{v}_{pp}^{t-1}, \vec{v}_{qq}^{t-1} \Bigr)
\end{align*}
the last step corresponds to the update of $\vcignwl$, showing $\vcignwl$ also cannot distinguish them.

Therefore, $\vctwl$ and $\vcignwl$ have the same expressive power. 
\end{proof}

Following \cref{prop:2fwl_2wl_incomp}, we also conclude that $\vcignwl$ and $\vctfwl$ are incomparable in expressive power. 

\subsection{Edge Transformer}
\label{sec:et-details}

We adapt the Edge Transformer (ET) module introduced by \citet{muller2024towards}. The ET architecture is structurally analogous to standard $\FWLk{2}$. While the original ET operates on graph node pairs, our framework operates on variables in a matrix, sharing the same pairwise form. Furthermore, the ET has been rigorously proven to possess the expressive power of the $\FWLk{2}$ algorithm. These properties make it a perfect candidate to compare with our $\vctfmpnn$. 

The ET operates on a second-order tensor $\vec{X} \in \mathbb{R}^{n \times n \times d}$, where $d$ is the embedding dimension and $\vec{X}_{ij}$ denotes the feature vector of the variable indexed at $(i,j)$.
Formally, the layer update is defined as:
\begin{equation*}
    \vec{X}^{t}_{ij} \coloneqq \mathsf{FFN}^{t} \bigl( \vec{X}^{t-1}_{ij} + \mathsf{TriAttn}^{t} \bigl(\mathsf{LN}^{t}\bigl(\vec{X}^{t-1}_{ij} \bigr) \bigr)\bigr),
\end{equation*}
where $\mathsf{FFN}$ is a feed-forward neural network, $\mathsf{LN}$ denotes layer normalization \citep{ba2016layer} and $\mathsf{TriAttn}$ is defined as
\begin{equation}
\label{eq:2fwl_tf_head}
    \mathsf{TriAttn}(\vec{X}_{ij}) \coloneqq \sum_{l=1}^n  \alpha_{ilj} \vec{V}_{ilj}.
\end{equation}
This operation computes a linear combination of a \textit{value vector} $\vec{V}_{ilj}$, with weights given by scalar attention scores $\alpha{ilj}$, over the summation index $l$. 
The attention scores are computed via a scaled dot-product mechanism: 
\begin{equation}
\label{eq:edge_attention_score}
    \alpha_{ilj} \coloneqq \underset{l \in [n]}{\mathsf{softmax}} \Bigl( \frac{1}{\sqrt{d}} \vec{X}_{il}\vec{W}^Q  \bigl(\vec{X}_{lj}\vec{W}^K \bigr)\trans \Bigr) \in \mathbb{R}
\end{equation}
which captures the relevance of the triplet $(i, l, j)$, and the value vector is computed via the \textit{value fusion} of the intermediate representations
\begin{equation}\label{eq:2fwl_value_fusion}
    \vec{V}_{ilj} \coloneqq \vec{X}_{il}\vec{W}^{V_1} \odot \vec{X}_{lj}\vec{W}^{V_2},
\end{equation}
where $\odot$ denotes element-wise multiplication, and $\vec{W}^Q, \vec{W}^K, \vec{W}^{V_1}, \vec{W}^{V_2} \in \mathbb{R}^{d \times d}$ are learnable projection matrices.

Integrating this module into our framework yields the $\vcet$ architecture. The initialization follows \cref{eq:vc2fmpnn_init}. For the update step, we substitute the variable-to-variable message passing with the ET layer. Let $\vec{H}^{t-1} \in \mathbb{R}^{n \times n \times d}$ denote the tensor of variable features $\vec{h}_{ij}^{t-1}$. The update rules are:
\begin{equation}
\begin{aligned}
\label{eq:et}
\vec{m}_{ij,\text{ET}}^{t} &\coloneq \mathsf{ET}^t \left( \vec{H}^{t-1} \right)_{ij} \\
\vec{m}_{ij,\text{c} \rightarrow \text{v}}^{t} &\coloneq \sum_{k \in N(ij)} \mathsf{MSG}^{t}_{\text{c} \rightarrow \text{v}}\left(A_{k,ij}, \vec{h}_k^{t-1}\right) \\
\vec{m}_{k,\text{v} \rightarrow \text{c}}^{t} &\coloneq \sum_{(i,j) \in N(k)} \mathsf{MSG}^{t}_{\text{v} \rightarrow \text{c}}\left(A_{k,ij}, \vec{h}_{ij}^{t-1}\right) \\
\vec{h}_{ij}^{t} &\coloneq \mathsf{UPD}_{\text{v}}^{t}\left( \vec{h}_{ij}^{t-1}, \vec{m}_{ij,\text{ET}}^{t}, \vec{m}_{ij,\text{c} \rightarrow \text{v}}^{t}\right) \\
\vec{h}^t_{k} &\coloneq \mathsf{UPD}_{\text{c}}^{t} \left( \vec{h}^{t-1}_{k}, \vec{m}_{k,\text{v} \rightarrow \text{c}}^{t} \right).
\end{aligned}
\end{equation}
Thus, $\vcet$ effectively replaces the explicit $\FWLk{2}$-based aggregation with the learnable, triangular attention of the Edge Transformer. 

\section{More experiment details}
\label{sec:more_exp}

\paragraph{Neural architecture and hyperparameters}
For the $\vcmpnn$, the initialization follows \cref{eq:vc2fmpnn_init}, implemented via two 2-layer MLPs with ReLU activation. The message passing $\mathsf{MSG}$ and update $\mathsf{UPD}$ functions, defined in \cref{eq:vcmpnn}, are parameterized by a single linear layer followed by ReLU activation and GraphNorm \citep{cai2021graphnorm}. The final prediction head is a 3-layer MLP with ReLU activation, mapping the latent features to a scalar prediction per variable.

Similarly, for $\vctmpnn$ and $\vctfmpnn$, all intermediate neural functions are implemented as single-layer MLPs. For $\vctfmpnn$, the aggregation function for $\mathsf{MAP}^{t}\left(\vec{h}_{uj}^{t-1}\right)$ and $\mathsf{MAP}^{t}\left(\vec{h}_{iu}^{t-1}\right)$ are be replaced by multiplication, therefore it can be implemented in a way similar to \citet{Mar+2019}. The $\deltatmpnn$ differs from the $\vctmpnn$ architecture, in that a matrix multiplication is performed between the embedding matrix and a matrix comprised by $\mathsf{adj}$ indicators, effectively aggregating the neighborhood indicator. For $\vcign$, we employ the basis formulation described in \cref{sec:2-ign-update}, parameterizing each basis operation with a single linear layer. For $\vcet$, the feed-forward network $\mathsf{FFN}$ consists of a single linear layer, and the $\mathsf{TriAttn}$ module of 1 attention head is implemented with learnable weight matrices. For all neural architectures, we insert a LayerNorm \citep{ba2016layer} between the GNN convolutional layers. 

To ensure fair comparison, we maintain consistent hidden dimensions and layer counts across models for each dataset. We fix the hidden dimension to $96$ for all architectures and all datasets. Regarding number of GNN layers, we employ 6 layers for the Max Clique, MIS, and LMI datasets; and 10 layers for the max-cut (both ER and regular graphs, and \textsc{SdpLib}), Vertex Cover, and max 2-SAT datasets. 

\paragraph{Results on LMI problems}

Since the optimal objective value for LMI problems is 0, we report the predicted objective values directly in \Cref{tab:objectivevalues}. As shown, performance varies minimally across different neural architectures. This is likely because the LMI instance coefficients are randomly generated, lacking the complex structural symmetries found in graph-based problems that typically expose the limitations of weaker methods.

\begin{table}[htb!]
\caption{Loss and objective of LMI problems on test set, repeated with 5 seeds. The best results across all are highlighted.}
\label{tab:objectivevalues}
\centering
\resizebox{0.5\textwidth}{!}{
\begin{tabular}{lccc}
\toprule
Model & Test loss & Obj. & Proj. obj. \\
\midrule
$\vcmpnn$ & 4.520e-5$\pm$\scriptsize0.000 & -0.091$\pm$\scriptsize0.275 & 0.124$\pm$\scriptsize0.126 \\
$\vctmpnn$ & 4.516e-5$\pm$\scriptsize0.000 & 0.031$\pm$\scriptsize0.044 & 0.074$\pm$\scriptsize0.028 \\
$\deltatmpnn$ & 4.563e-5$\pm$\scriptsize0.000 & -0.163$\pm$\scriptsize0.066 & \textbf{0.021$\pm$\scriptsize0.001} \\
$\vcign$ & 4.506e-5$\pm$\scriptsize0.000 & \textbf{0.017$\pm$\scriptsize0.103} & 0.090$\pm$\scriptsize0.053 \\
$\vctfmpnn$ & \textbf{4.305e-5$\pm$\scriptsize0.000} & -0.063$\pm$\scriptsize0.042 & 0.066$\pm$\scriptsize0.012 \\
$\vcet$ & 4.357e-5$\pm$\scriptsize0.000 & -0.036$\pm$\scriptsize0.000 & 0.122$\pm$\scriptsize0.097 \\
\bottomrule
\end{tabular}
}
\end{table}

\paragraph{Compare to CO solution}

To directly illustrate the comparison regarding how well our predictions translate to discrete CO solutions, we evaluated relative objective gap for Max-Cut, MIS and Vertex Cover. 

\begin{table}[htb!]
\caption{The relative objective gap (\%) between our predicted SDP solution and CO solution.}
\label{tab:compare_co}
\centering
\resizebox{0.5\textwidth}{!}{
\begin{tabular}{lccc}
\toprule
Candidate & Max-Cut & MIS & Vertex Cover \\
\midrule
SCS - CO & 4.547$\pm$\scriptsize0.858 & 199.229$\pm$\scriptsize2.465 & 44.622$\pm$\scriptsize3.699 \\
$\vctfmpnn$ - CO & 4.610$\pm$\scriptsize0.855 & 199.358$\pm$\scriptsize3.426 & 44.955$\pm$\scriptsize3.897 \\
\bottomrule
\end{tabular}
}
\end{table}

As \cref{tab:compare_co} shows, the downstream CO gaps produced by our neural solver perfectly mirror the inherent gaps of the exact SDP relaxations. For Max-Cut, the SDP solution of our GNN performs almost identically to exact SDP solver outputs, with a negligible ~0.06\% difference. The observations have validated our core claim: the neural approach successfully learns the underlying continuous SDP geometry, rather than directly learning the discrete integer solutions.

However, the neural solver’s high quality solution and fast inference time enable massive potential for CO solvers. Modern exact branch-and-bound methods for Max-Cut \citep{hrga2021madam} require solving thousands of intermediate SDPs, which is the primary computational bottleneck. Replacing these expensive exact solvers with our cheap neural proxy to guide branching or pruning decisions, similar to the seminal work \citet{Gas+2019} for LPs, is a highly promising future direction.

\paragraph{Compare to OptGNN}
We adapted OptGNN \citep{yau2024graph} to our evaluation framework for comparison in relative objective gap on Max-Cut SDP, and the other datasets are incompatible due to formulation differences or unavailable source code. 

\begin{table}[htb!]
\caption{Comparison to OptGNN on Max-Cut problems.}
\label{tab:compare_optgnn}
\centering
\resizebox{0.6\textwidth}{!}{
\begin{tabular}{lccc}
\toprule
\textbf{Model} & \textbf{Target} & \textbf{Max-Cut ER} & \textbf{Max-Cut Regular} \\ \midrule
OptGNN & obj. gap & 1.304$\pm$\scriptsize0.014 & 2.030$\pm$\scriptsize0.019 \\
OptGNN & cons. vio. & $0.0$ & $0.0$ \\
$\vctfmpnn$ & obj. gap & 0.111$\pm$\scriptsize0.017 & 0.092$\pm$\scriptsize0.004 \\
$\vctfmpnn$ & cons. vio. & 0.003$\pm$\scriptsize0.001 & 0.003$\pm$\scriptsize0.001 \\ 
\bottomrule
\end{tabular}
}
\end{table}

OptGNN achieves zero constraint violations because it optimizes a low-rank latent embedding where the Max-Cut constraint is trivially satisfied via row-normalization. While we could also adapt $\vctfmpnn$ to predict the latent to achieve feasibility, we stick to our formulation in the paper to ensure consistency with general SDP. As shown in \cref{tab:compare_optgnn}, our architecture achieves an obj. gap one order of magnitude tighter. However, we still highlight our different scopes: OptGNN targets discrete CO problems using SDP relaxations as a tool, whereas our framework serves as a continuous surrogate for solving the general linear SDP.

\paragraph{More \textsc{SdpLib} results}
We exhibit the statistics of \textsc{SdpLib} datasets in \cref{tab:sdplib_stats}. 

\begin{table*}[htb!]
\caption{Statistics of selected \textsc{SdpLib} problems.}
\label{tab:sdplib_stats}
\centering
\resizebox{0.35\textwidth}{!}{
\begin{tabular}{lcc}
\toprule
Name & \#vars & \#cons.  \\
\midrule
MCP100 & 10000 & 100  \\
MCP124-\{1-4\} & 15376 & 124 \\
MCP250-\{1-4\} & 62500 & 250  \\
MCP500-\{1-4\} & 250000 & 500  \\
\bottomrule
\end{tabular}
}
\end{table*}

We show the complete approximation performance of $\vcmpnn$, $\vctmpnn$ and $\vctfmpnn$ on \textsc{SdpLib}, in \cref{tab:sdplib_pred_full}. 

\begin{table*}[htb!]
\caption{Loss and objective gap (\%) on \textsc{SdpLib} instances, repeated with 5 seeds. The best results are highlighted.}
\label{tab:sdplib_pred_full}
\centering
\resizebox{\textwidth}{!}{
\begin{tabular}{lccc|ccc|ccc}
\toprule
\multirow{2}{*}{Name}  & \multicolumn{3}{c}{Loss} & \multicolumn{3}{c}{Obj gap (\%)} & \multicolumn{3}{c}{Proj. obj gap (\%)}  \\
& $\vcmpnn$ & $\vctmpnn$ & $\vctfmpnn$ & $\vcmpnn$ & $\vctmpnn$ & $\vctfmpnn$ & $\vcmpnn$ & $\vctmpnn$ & $\vctfmpnn$ \\
\midrule
MCP100  & 0.203$\pm$\scriptsize0.000 & 0.199$\pm$\scriptsize0.004 & \textbf{1.974e-4$\pm$\scriptsize0.000} 
& 0.132$\pm$\scriptsize0.083 & 0.145$\pm$\scriptsize0.052 & \textbf{0.002$\pm$\scriptsize0.000}
& 15.336$\pm$\scriptsize0.030 & 16.332$\pm$\scriptsize0.438 & \textbf{1.839$\pm$\scriptsize0.050} \\
MCP124-1  & 0.225$\pm$\scriptsize0.000 & 0.224$\pm$\scriptsize0.000 & \textbf{1.546e-4$\pm$\scriptsize0.000}
& 0.298$\pm$\scriptsize0.199 & 0.377$\pm$\scriptsize0.023 & \textbf{0.039$\pm$\scriptsize0.001}
& 12.255$\pm$\scriptsize0.157 & 12.770$\pm$\scriptsize0.162 & \textbf{1.802$\pm$\scriptsize0.605} \\
MCP124-2  & 0.220$\pm$\scriptsize0.000 & 0.218$\pm$\scriptsize0.002 & \textbf{2.303e-4$\pm$\scriptsize0.000}
& 0.166$\pm$\scriptsize0.105 & 0.267$\pm$\scriptsize0.023 & \textbf{0.047$\pm$\scriptsize0.005}
& 15.184$\pm$\scriptsize0.213 & 15.998$\pm$\scriptsize0.324 & \textbf{1.976$\pm$\scriptsize0.785} \\
MCP124-3 & 0.215$\pm$\scriptsize0.000 & 0.214$\pm$\scriptsize0.002 & \textbf{3.076e-4$\pm$\scriptsize0.000}
& \textbf{0.019$\pm$\scriptsize0.012} & 0.325$\pm$\scriptsize0.176 & 0.053$\pm$\scriptsize0.017 
& 17.424$\pm$\scriptsize0.094 & 17.959$\pm$\scriptsize0.542 & \textbf{2.092$\pm$\scriptsize0.452} \\
MCP124-4 & 0.270$\pm$\scriptsize0.000 & 0.272$\pm$\scriptsize0.007 & \textbf{3.374e-4$\pm$\scriptsize0.000}
& 0.139$\pm$\scriptsize0.067 & 0.174$\pm$\scriptsize0.012 & \textbf{0.069$\pm$\scriptsize0.005}
& 20.246$\pm$\scriptsize0.013 & 20.966$\pm$\scriptsize0.828 & \textbf{2.967$\pm$\scriptsize0.881} \\
MCP250-1 & 0.215$\pm$\scriptsize0.000 & 0.215$\pm$\scriptsize0.000 & \textbf{3.034e-4$\pm$\scriptsize0.000}
& 0.284$\pm$\scriptsize0.022 & 0.232$\pm$\scriptsize0.032 & \textbf{0.018$\pm$\scriptsize0.001}
& 13.171$\pm$\scriptsize0.188 & 12.794$\pm$\scriptsize0.045 & \textbf{2.393$\pm$\scriptsize1.103} \\
MCP250-2 & 0.158$\pm$\scriptsize0.015 & 0.160$\pm$\scriptsize0.007 & \textbf{4.176e-4$\pm$\scriptsize0.000}
& 0.113$\pm$\scriptsize0.017 & 0.274$\pm$\scriptsize0.134 & \textbf{0.013$\pm$\scriptsize0.008}
& 15.320$\pm$\scriptsize0.129 & 15.269$\pm$\scriptsize0.533 & \textbf{2.525$\pm$\scriptsize0.778} \\
MCP250-3 & 0.155$\pm$\scriptsize0.000 & 0.156$\pm$\scriptsize0.000 & \textbf{6.266e-4$\pm$\scriptsize0.000}
& \textbf{0.111$\pm$\scriptsize0.049} & 0.523$\pm$\scriptsize0.176 & 0.391$\pm$\scriptsize0.076 
& 18.913$\pm$\scriptsize0.087 & 17.242$\pm$\scriptsize0.129 & \textbf{2.892$\pm$\scriptsize0.948} \\
MCP250-4 & 0.182$\pm$\scriptsize0.013 & 0.186$\pm$\scriptsize0.004 & \textbf{5.892e-4$\pm$\scriptsize0.000}
& 0.137$\pm$\scriptsize0.017 & 0.166$\pm$\scriptsize0.088 & \textbf{0.041$\pm$\scriptsize0.008}
& 20.134$\pm$\scriptsize0.144 & 20.405$\pm$\scriptsize0.592 & \textbf{3.428$\pm$\scriptsize1.321} \\
MCP500-1 & 0.118$\pm$\scriptsize0.000 & 0.118$\pm$\scriptsize0.001 & \textbf{4.536e-4$\pm$\scriptsize0.000}
&  0.310$\pm$\scriptsize0.022 & \textbf{0.237$\pm$\scriptsize0.020} & 0.247$\pm$\scriptsize0.011 
& 13.054$\pm$\scriptsize0.176 & 12.842$\pm$\scriptsize0.175 & \textbf{3.265$\pm$\scriptsize0.912} \\
MCP500-2 & 0.123$\pm$\scriptsize0.000 & 0.124$\pm$\scriptsize0.001 & \textbf{4.948e-4$\pm$\scriptsize0.000}
& \textbf{0.337$\pm$\scriptsize0.103} & 0.431$\pm$\scriptsize0.027 & 0.345$\pm$\scriptsize0.117 
& 16.303$\pm$\scriptsize0.121 & 16.307$\pm$\scriptsize0.671 & \textbf{3.240$\pm$\scriptsize0.431}  \\
MCP500-3 & 0.121$\pm$\scriptsize0.000 & 0.122$\pm$\scriptsize0.000 & \textbf{6.745e-4$\pm$\scriptsize0.000}
& \textbf{0.060$\pm$\scriptsize0.059} & 0.157$\pm$\scriptsize0.011 & 0.322$\pm$\scriptsize0.034 
& 18.428$\pm$\scriptsize0.114 & 18.136$\pm$\scriptsize0.090 & \textbf{4.143$\pm$\scriptsize1.323}  \\
MCP500-4 & 0.126$\pm$\scriptsize0.000 & 0.130$\pm$\scriptsize0.003 & \textbf{9.761e-4$\pm$\scriptsize0.000}
& 0.437$\pm$\scriptsize0.032 &  \textbf{0.074$\pm$\scriptsize0.058} & 0.407$\pm$\scriptsize0.010 
& 21.048$\pm$\scriptsize0.154 & 20.610$\pm$\scriptsize0.051 & \textbf{4.505$\pm$\scriptsize1.201}  \\
\bottomrule
\end{tabular}
}
\end{table*}

\paragraph{Constraint violation}
\begin{table*}[htb!]
\caption{Mean absolute residuals on constraints of different CO problems on test set, repeated with 5 seeds. }
\label{tab:ax-b2}
\centering
\resizebox{\textwidth}{!}{
\begin{tabular}{lccccccc}
\toprule
\multirow{2}{*}{Candidate} & \multirow{2}{*}{Target} & \multicolumn{6}{c}{\textbf{Problems}} \\
& & Max-Cut & Max-Cut (reg) & Max-Clique & MIS & Vertex Cover & Max 2-SAT \\
\midrule
SCS & Time (sec.)
& 0.386$\pm$\scriptsize0.028    %
& 0.385$\pm$\scriptsize0.190    %
& 0.642$\pm$\scriptsize0.308  %
& 0.672$\pm$\scriptsize0.340    %
& 0.089$\pm$\scriptsize0.004    %
& 0.266$\pm$\scriptsize0.081  \\ %
\midrule

\multirow{3}{*}{$\vctfmpnn$} & Time (sec.)
& 0.010$\pm$\scriptsize0.000    %
& 0.010$\pm$\scriptsize0.000    %
& 0.019$\pm$\scriptsize0.001  %
& 0.019$\pm$\scriptsize0.001    %
& 0.019$\pm$\scriptsize0.001    %
& 0.011$\pm$\scriptsize0.001  \\ %
& Proj. Obj. gap (\%)
& 0.111$\pm$\scriptsize0.017 & 0.092$\pm$\scriptsize0.004 & 0.436$\pm$\scriptsize0.035 & 0.411$\pm$\scriptsize0.025 & 0.469$\pm$\scriptsize0.051 & 0.862$\pm$\scriptsize0.211  \\
& Proj. cons. vio &
0.003$\pm$\scriptsize0.001 & 0.003$\pm$\scriptsize0.001 & 3.082e-5$\pm$\scriptsize0.000 & 3.067e-5$\pm$\scriptsize0.000 & 0.009$\pm$\scriptsize0.001 & 0.008$\pm$\scriptsize0.001 \\

\bottomrule
\end{tabular}
}
\end{table*}

As demonstrated in \cref{tab:ax-b2}, our $\vctfmpnn$ achieves remarkably low constraint violation residues consistently across all tasks, on the order of 
$10^{-3}$ to $10^{-5}$. While not perfectly feasible, these violations are very low and highly competitive for practical applications.

\paragraph{SDPLR quality}

Burer-Monteiro method is not guaranteed to solve the SDP exactly. In \cref{tab:sdplr-scs}, SDPLR yields near-zero objective gaps compared to SCS, except for 0.02\% on Max-Clique/MIS, and almost zero constraint violation up to numerical precisions. While it is highly competitive on Max-Cut problems, it fails to converge on Vertex Cover, and scales poorly in runtime on MIS.

\begin{table*}[htb!]
\caption{Mean absolute residuals on constraints of different CO problems on test set, repeated with 5 seeds. }
\label{tab:sdplr-scs}
\centering
\resizebox{\textwidth}{!}{
\begin{tabular}{lccccccc}
\toprule
\multirow{2}{*}{Candidate} & \multirow{2}{*}{Target} & \multicolumn{6}{c}{\textbf{Problems}} \\
& & Max-Cut & Max-Cut (reg) & Max-Clique & MIS & Vertex Cover & Max 2-SAT \\
\midrule
SCS & Time (sec.)
& 0.386$\pm$\scriptsize0.028    %
& 0.385$\pm$\scriptsize0.190    %
& 0.642$\pm$\scriptsize0.308  %
& 0.672$\pm$\scriptsize0.340    %
& 0.089$\pm$\scriptsize0.004    %
& 0.266$\pm$\scriptsize0.081  \\ %
\midrule

\multirow{3}{*}{SDPLR} & Time (sec.)
& 0.195$\pm$\scriptsize0.041 & 0.235$\pm$\scriptsize0.052 & 1.803$\pm$\scriptsize0.184 & 3.301$\pm$\scriptsize0.285 & Not converging & 0.391$\pm$\scriptsize0.145 \\
& Proj. Obj. gap (\%)
& 3.672e-6$\pm$\scriptsize0.000 & 4.434e-6$\pm$\scriptsize0.000 & 0.020$\pm$\scriptsize0.001 & 0.022$\pm$\scriptsize0.002 & -- & 2.341e-5$\pm$\scriptsize0.000 \\
& Proj. cons. vio &
1.059e-6$\pm$\scriptsize0.000 & 1.193e-6$\pm$\scriptsize0.000 & 4.269e-7$\pm$\scriptsize0.000 & 2.696e-7$\pm$\scriptsize0.000 & -- & 8.223e-7$\pm$\scriptsize0.000 \\
\bottomrule
\end{tabular}
}
\end{table*}

\paragraph{Effect of depth}

To directly observe the effect of depth of $\vctfmpnn$, we conducted a sensitivity analysis on smaller Max-Cut SDPs (ER graphs, 50 nodes, $p=0.1$) with varying network depths. 

\begin{table*}[htb!]
\caption{The loss, relative objective gap and constraint violation on test set, with different number of layers.}
\label{tab:num_layers}
\centering
\resizebox{0.6\textwidth}{!}{
\begin{tabular}{lccccc}
\toprule
Layers & Loss & Obj. gap & Proj. obj. gap & Cons. vio. & Proj. cons. vio. \\
\midrule
5 & 3.103e-3±0.000 &0.276±0.001 &0.254±0.001 &0.004±0.001 &0.039±0.002 \\
10 & 1.088e-4±0.000 & 0.076±0.005 &0.073±0.005 &0.001±0.000 &0.004±0.001 \\
15 & 3.448e-5±0.000 & 0.054±0.002 &0.077±0.005 &0.001±0.000 &0.002±0.001 \\
20 & 2.594e-5±0.000 & 0.049±0.005 &0.051+0.005 &0.001±0.000 &0.002±0.000 \\
30 & 1.141e-5±0.000 &0.047±0.005 &0.054±0.017 &0.001±0.000 &0.001±0.000 \\
\bottomrule
\end{tabular}
}
\end{table*}

Remarkably, we observe no performance drop even at 30 layers. Performance consistently improves with depth, though the gains are marginal after 10 layers, which motivated our default setting with 10 layers in the paper. 

\end{document}